\documentclass{article}

\usepackage{microtype}
\usepackage{graphicx}
\usepackage{subfigure}
\usepackage{booktabs} %

\usepackage[colorlinks=true]{hyperref}

\usepackage[noend]{algorithmic}

\usepackage[accepted]{icml2024}

\usepackage{amsmath}
\usepackage{amssymb}
\usepackage{mathtools}
\usepackage{amsthm}
\usepackage{bm}
\usepackage{dsfont}
\usepackage{multicol}

\usepackage[capitalize,noabbrev]{cleveref}

\DeclareMathOperator{\EE}{\mathbb{E}}
\DeclareMathOperator*{\argmax}{\mathop{\mathrm{argmax}}}

\newcommand{\instancespace}{\mathcal{X}}
\newcommand{\labelspace}{\mathcal{Y}}

\newcommand{\datadistribution}{\mathds{P}}
\newcommand{\confusionmatrixspace}{\mathcal{C}}

\newcommand{\vecx}{\bm{x}}
\newcommand{\opty}{y^{\star}}
\newcommand{\optvecy}{\bm{y}^{\star}}
\newcommand{\vecy}{\bm{y}}

\newcommand{\pred}{\widehat{y}}
\newcommand{\vecpred}{\widehat{\vecy}}
\newcommand{\vecp}{\bm{p}}
\newcommand{\avgy}{\overline{\bm{y}}}

\newcommand{\matC}{\bm{C}}
\newcommand{\tensorC}{\matC}

\newcommand{\veceta}{\bm{\eta}}
\newcommand{\empeta}{\widehat{\eta}}
\newcommand{\empveceta}{\widehat{\veceta}}

\newcommand{\vecv}{\bm{v}}

\newcommand{\vech}{\bm{h}}

\newcommand{\utility}{\psi}
\newcommand{\regret}{R}

\newcommand{\algo}[1]{\text{#1}}
\newcommand{\OMMA}{\algo{OMMA}\xspace}
\newcommand{\OMMAeta}{$\algo{OMMA}(\empeta)$}

\newcommand{\truepos}{\text{tp}}
\newcommand{\falsepos}{\text{fp}}
\newcommand{\falseneg}{\text{fn}}
\newcommand{\trueneg}{\text{tn}}

\newcommand{\datasettable}[1]{\textbf{\textsc{#1}}}

\usepackage{csquotes}
\usepackage{pgfplotstable}
\usepackage{color}
\usepackage{colortbl}
\usepackage{comment}
\usepackage{wrapfig}
\usepackage{thmtools}
\usepackage{thm-restate}
\usepackage{float}
\usepackage{nicefrac}       %
\usepackage{microtype}      %
\usepackage[verbose]{placeins}
\usepackage{enumitem}
\usepackage{xspace}
\usepackage{float}
\usepackage{xcolor}

\theoremstyle{plain}
\newtheorem{theorem}{Theorem}[section]

\newtheorem{lemma}[theorem]{Lemma}

\theoremstyle{definition}

\newtheorem{assumption}[theorem]{Assumption}
\theoremstyle{remark}
\newtheorem{remark}[theorem]{Remark}

\usepackage{todonotes}

\icmltitlerunning{A General Online Algorithm for Optimizing Complex Performance Metrics}

\begin{document}

\twocolumn[
\icmltitle{A General Online Algorithm for Optimizing Complex Performance Metrics}

\begin{icmlauthorlist}
\icmlauthor{Wojciech Kotłowski}{put}
\icmlauthor{Marek Wydmuch}{put}
\icmlauthor{Erik Schultheis}{aalto}
\icmlauthor{Rohit Babbar}{aalto,bath}
\icmlauthor{Krzysztof Dembczy{\'n}ski}{put,yahoo}
\end{icmlauthorlist}

\icmlaffiliation{put}{Poznan University of Technology, Poznan, Poland}
\icmlaffiliation{aalto}{Aalto University, Helsinki, Finland}
\icmlaffiliation{bath}{University of Bath, Bath, UK}
\icmlaffiliation{yahoo}{Yahoo Research, New York, United States}

\icmlcorrespondingauthor{Wojciech Kotłowski}{wkotlowski@cs.put.poznan.pl}
\icmlcorrespondingauthor{Marek Wydmuch}{mwydmuch@cs.put.poznan.pl}

\icmlkeywords{Machine Learning, ICML}

\vskip 0.3in
]

\printAffiliationsAndNotice{\icmlEqualContribution} %

\begin{abstract}
We consider sequential maximization of performance metrics that are general functions of a confusion matrix
of a classifier (such as precision, F-measure, or G-mean). 
Such metrics are, in general, non-decomposable over individual instances, making their optimization very challenging.
While they have been extensively studied
under different frameworks in the batch setting, their analysis in the online learning regime is very limited, 
with only a few distinguished exceptions. 
In this paper, we introduce and analyze a general online algorithm that can be used in a straightforward way with a variety of complex performance metrics in binary, multi-class, and multi-label classification problems.
The algorithm's update and prediction rules are appealingly simple and computationally efficient without the need to store any past data.
We show the algorithm attains $\mathcal{O}(\frac{\ln n}{n})$ regret for concave and smooth metrics and verify the efficiency of the proposed algorithm in empirical studies.
\end{abstract}

\section{Introduction}

Many modern applications of machine learning involve optimization of complex performance metrics 
that, unlike misclassification error, do not decompose into expectation over instance-wise quantities.
Examples of such measures include $F$-measure~\citep{lewis1995evaluating},
the area under the ROC curve (AUC)~\citep{DrummondHolte_2005},
geometric~\citep{DrummondHolte_2005,WangYao_2012,Menon_etal2013,cao2019learning} and harmonic mean~\citep{kennedy2009learning},
and Matthews coefficient~\citep{baldi2000assessing}.

Complex performance metrics have been studied in binary%
~\citep{Ye_et_al_2012,Koyejo_et_al_2014,Busa-Fekete_et_al_2015,Dembczynski_et_al_2017, singh2022optimal}, 
multi-class~\citep{Narasimhan_et_al_2015,Narasimhan_et_al_2022}, 
and multi-label classification~\citep{Waegeman_et_al_2014,Koyejo_et_al_2015,Kotlowski_Dembczynski_2017}
under different theoretical frameworks. 
The population utility (PU) framework focuses on building a model
being close to the optimal one on the population level~\citep{Ye_et_al_2012}. 
The expected test utility (ETU) framework concerns the optimization of the expected utility computed
over a given (known) test set~\citep{lewis1995evaluating, Jansche_2007, Ye_et_al_2012}.
These two frameworks are different in the formulation of the objectives 
(more precisely, they differ in the order in which the expectation and the metric is computed),
but they turn out to be asymptotically equivalent~\citep{Dembczynski_et_al_2017}. 
Still, most of the existing works concern a batch setting.

In this paper, we consider an online setting for optimization of complex performance metrics.
In our framework, an algorithm observes a sequence of instances, one at a time, which are drawn \emph{i.i.d.} from some distribution. 
At each iteration, after receiving the input vector, 
the algorithm makes a prediction and observes
the true label (or vector of labels). After observing the entire sequence of data, 
the algorithm is evaluated by means of a performance metric computed from
its empirical confusion matrix over the data.
We evaluate an algorithm in our setting by means of a \emph{regret},
which is the expected (over the data sequence) difference between the algorithm's performance and that of the optimal classifier.
The goal is to design no-regret algorithms, that is, algorithms that guarantee regret converges to zero as the sequence length grows.

Contrary to most of the online learning work, we do not assume that
predictions of the algorithm are obtained by means of some parametric
function of the input vector (e.g., linear or generalized linear models).
Instead, we assume that the algorithm has access to a \emph{conditional
probability estimator} (CPE), which returns an
estimate of the true label conditional distribution at the input vector,
which the algorithm will use to issue its prediction. 
Such a setup would be trivial for decomposable performance
metrics, where the optimal decision is fully determined by the label conditional
probabilities. For non-decomposable performance metrics, however, the optimal
decision on different instances are no longer independent, yet the online
setting requires the classifier to commit to a prediction as soon as it gets the
instance, without any chance to revise decisions that turn out to be suboptimal in retrospect.

In this paper, we introduce and analyze a general online algorithm called Online Metric Maximization
Algorithm (\OMMA{}), which
can be used in a straightforward way with a variety of complex performance metrics 
in binary, multi-class, and multi-label classification problems, 
with or without budgeted predictions (i.e., requiring exactly $k$ predictions per instance).
The algorithm's prediction rule is appealingly simple: at any given
trial, the algorithm issues the prediction that maximizes the expected
value (with respect to the current, not yet seen label, obtained by means of the CPE) of a \emph{linearized} version of the metric.
Thus, \OMMA{} is effectively a cost-sensitive classification
rule with costs determined by the gradient of the utility at the current confusion matrix, and
computing its prediction boils down to thresholding or sorting linear functions of the conditional probabilities.
The algorithm is computationally efficient and 
does not need to store any past data or predictions beyond the entries of the confusion matrix, update of
which can be done online in a straightforward manner.
The algorithm attains 
$\mathcal{O}(\smash{\frac{\ln n}{n}})$ regret for concave and smooth metrics (up to the estimation error of its CPE),
and $\mathcal{O}(\smash{\sqrt{\smash[b]{\frac{\ln n}{n}}}})$ regret without smoothness.

We also verify the efficiency of the proposed algorithm in empirical studies, 
where we evaluate it on a wide range of multi-label benchmarks and performance measures.
Our results show that the proposed algorithms achieve attractive performance in comparison to similar algorithms that can only be applied to a limited set of measures or require more computations and memory.

Our contribution can be summarized as follows:
\begin{itemize}
\item We formulate an online learning framework in which the classifier
itself is not parametric but based on a potentially learnable label probability estimator,
\item We provide a simple algorithm with a constant memory footprint that can optimize
general non-decomposable performance metrics in an online manner across
a wide range of learning tasks, including binary, multi-class, multi-label,
and budgeted-at-$k$ predictions,
\item We prove regret bounds for this algorithm, showing that for concave and smooth
metrics, we converge to the population-optimal at a rate of $\mathcal{O}(\frac{\ln n}{n})$.
\item We show that in data sets with large variance, it can be beneficial for the
decision process to \emph{ignore} the available label feedback, and instead rely entirely on
the biased, but much more stable, CPE instead.
\end{itemize}

\paragraph{Related work.} 
Despite an extensive and long line of research on complex performance metrics in the batch setting,
the analysis of these metrics in the online setting is quite limited, 
with only a few distinguished exceptions. Below we briefly describe the main results obtained so far.

The online problem has been tackled by
optimizing a surrogate loss/reward function convex with respect
to model parameters, which are updated
in an online fashion. \citet{kar2014online} proposed an online gradient descent method for structural surrogates 
\citep{Joachims_2005}.
\citet{conf/icml/NarasimhanK015} considered optimizing metrics being functions of true positive and true negative 
rate, which are then replaced by convex surrogate rewards.
Such approaches may have high memory cost and do not converge to an optimal model for a given metric, 
but to a model optimal for the structural surrogate, and these two do not necessarily coincide
(within the considered parametric class of functions).

The setup closest to ours has been considered in the context of optimizing the $F$-measure,
where it is known that the optimal classifier is obtained by thresholding label conditional probabilities.
In particular, \citet{Busa-Fekete_et_al_2015} have introduced an online algorithm 
to simultaneously train a CPE model and tune the threshold. 
They have shown that the algorithm is asymptotically consistent, but the convergence rate has not been established. 
Interestingly, this algorithm can be seen as a special case of our general algorithm.
In a follow-up paper, \citet{Liu_et_al_2018} have designed an algorithm in which the threshold is updated using stochastic gradient descent of a strongly convex function, which reflects a specific property of the optimal threshold. 
The algorithm has a convergence rate of $\mathcal{O}(\frac{\ln n}{\sqrt{n}})$.

For a wider class of functions, one can directly exploit the connection to cost-sensitive classification.
\citet{Yan_et_al_2017} has considered simultaneously learning multiple classifiers with different cost vectors
together with an online algorithm that learns to select the best one.
An alternative approach is to devise a method that finds the optimal costs in the space of confusion matrices,
for example, by adapting the Frank-Wolfe algorithm previously used in batch learning for smooth and concave metrics%
~\citep{Narasimhan_et_al_2015,Narasimhan_et_al_2022,Schultheis_etal_ICLR2024}.
Such an approach can be then na\"ively applied to the online setting by running it in exponentially growing time intervals 
to amortize the cost of rerunning the algorithm on the entire batch. %
We use such an approach as a baseline in our experimental studies.

Let us notice that our contribution does not coincide with any of the online/stochastic Frank-Wolfe methods known in the literature~\citep{HazanKale2012,Reddi2016StochasticFM}, as these concern minimization of an objective that is either a finite sum or an expectation, and thus linearly decomposes between the trials. In our case, the objective is a global, non-linear function of all past predictions and labels.

Online algorithms and regret minimization have been extensively studied in 
the online convex optimization (OCO) framework \citep{CBLu06:book,Haz16,SS12}. Our setup, however, substantially differs from
OCO. On the one hand, we aim at optimizing a more difficult, non-linear function of the entire sequence of prediction,
and we need stochastic (i.i.d.) assumption on the data sequence, as otherwise, no algorithm can guarantee vanishing regret (Appendix \ref{sec:adversarial}).
On the other hand, our algorithm is equipped with the CPE, which gives an approximate distribution of labels at the moment of prediction.

\section{Problem Setup}

Let $\vecx \in \instancespace$ be the input instance
and $\vecy \in \labelspace \subseteq \{0,1\}^m$ the output label vector, jointly distributed according to
$(\vecx, \vecy) \sim \datadistribution$. We let $\datadistribution_{\!\instancespace}$ denote the marginal distribution $\vecx$, and
$\veceta(\vecx)$ the label conditional probability, $\veceta(\vecx) = \EE_{\vecy | \vecx}[\vecy]$.
In \emph{multi-class classification}, the label vector $\vecy$ is a one-hot encoding of one out of $m$ classes, 
$\labelspace = \{\vecy \in \{0,1\}^m \colon \sum_j y_j = 1\}$, 
while $\veceta(\vecx) = (\eta_1(\vecx),\ldots, \eta_m(\vecx))$ is the label conditional distribution
with $\eta_j(\vecx) = \datadistribution(y_j=1|\vecx)$ denoting the conditional probability of class $j$.
Whereas in \emph{multi-label classification} $\vecy \in \{0,1\}^m$ 
denotes a vector of relevant labels and $\veceta(\vecx) = (\eta_1(\vecx),\ldots, \eta_m(\vecx))$ is vector
of \emph{label marginal conditional probabilities},
$\eta_j(\vecx) = \datadistribution(y_j=1|\vecx)$. Note that in the multi-label case,
$\veceta(\vecx)$ \emph{is not} the conditional distribution $\datadistribution(\vecy|\vecx)$
(which is a \emph{joint} distribution over $\{0,1\}^m$), 
but in the context of this paper, it will always suffice to only operate on its marginals; %
we will thus sometimes use $\vecy \sim \veceta(\vecx)$ to denote a multi-label vector drawn from $\datadistribution(\vecy|\vecx)$
with marginals $\veceta(\vecx)$ if no other property of the distribution matters,
and we will call $\veceta$ the conditional distribution to simplify the presentation.

\begin{figure}[b!]
\noindent\rule{\linewidth}{0.5pt}
\vspace{-16pt}
\begin{algorithmic}[0]
\FOR{$t = 1, \ldots, n$} 
   \STATE Observe input instance $\vecx_t$ drawn from $\datadistribution_{\!\instancespace}$
   \STATE Receive conditional probability estimate $\empveceta_t(\vecx_t)$ 
   \STATE Predict label $\vecpred_t$ based on $\empveceta_t(\vecx_t)$ 
   \STATE Receive true label $\vecy_t$ drawn from $\datadistribution(\cdot|\vecx_t)$
\ENDFOR
\STATE Evaluate based on $\psi(\matC(\vecy^n, \vecpred^n))$
\end{algorithmic}
\vspace{-8pt}
\noindent\rule{\linewidth}{0.5pt}

\caption{The online protocol.}
\label{fig:online-protocol}
\end{figure}

We consider an online (stochastic) setting in which the algorithm observes a sequence of instances $(\vecx_1,\vecy_1), (\vecx_2,\vecy_2),\ldots, (\vecx_n,\vecy_n)$, drawn \emph{i.i.d.}\@ from $\datadistribution$. At each iteration $t=1,2,\ldots,n$, after receiving input $\vecx_t$ drawn from
$\datadistribution_{\!\instancespace}$,
the algorithm makes a prediction $\vecpred_t$, and next observes
the true output $\vecy_t$ drawn from $\datadistribution(\vecy | \vecx_t)$.
We assume that the algorithm has access to a \emph{conditional probability estimator} (CPE) $\empveceta_t$, which at trial $t$
returns an estimate $\empveceta_t(\vecx_t)$ of the true conditional
distribution $\veceta(\vecx_t)$, which the algorithm
will use for issuing its prediction $\vecpred_t$.
We put the time subscript in $\empveceta_t$ as we allow the CPE to change over time
(e.g., it can be produced by an external online learner run on the same sequence of instances);
we assume, however, that $\empveceta_t$ can only depend on the observed data $\vecx_{t}$,
$(\vecx_{1},\vecy_{1}),\ldots,(\vecx_{t-1},\vecy_{t-1})$ (i.e., does not depend on $\vecy_t$ and all the future instance).
The CPE could be, for instance, a neural network or boosted decision trees trained (possibly online)
by minimizing some proper scoring loss. In this work, we do not focus on the way the CPE is learned, and
all our bounds on the performance of the algorithm will depend on the
\emph{estimation error} of $\empveceta_t$ with respect to true conditional distribution $\veceta$. 
We outline the online protocol in~\cref{fig:online-protocol}.\looseness=-1

Given a sequence of labels\footnote{\label{foot:notation}Generally, by $\vecv^t$ we denote a sequence $(\vecv_1,\ldots,\vecv_t)$; $f(\vecv^t)$ denotes $(f(\vecv_1),\ldots,f(\vecv_t))$.} $\vecy^n = (\vecy_1,\ldots,\vecy_n)$ and a sequence of
algorithm's predictions $\vecpred^n = (\vecpred_1,\ldots,\vecpred_n)$,
we let $\matC(\vecy^n, \vecpred^n)$ denote
the (empirical) \emph{confusion matrix}. In multi-class classification, it is an $m \times m$ matrix $\matC$ with entries defined as $C_{j\ell} = \frac{1}{n} \sum_{t=1}^n y_{tj} \widehat{y}_{t\ell}$, where $y_{tj}$ is the $j$-th entry of vector $\vecy_t$. Thus,
$C_{j\ell}$ contains the fraction of times an instance from class $j$ was predicted as being in class $\ell$. Note that for binary classification ($m=2$), we get the usual true-positive ($C_{11}$), false-negative ($C_{10}$), false-positive ($C_{01}$), true-negative ($C_{00}$) entries, where we index the entries
from 0 to stick to a more convenient notation.
In turn, in multi-label classification, the confusion matrix is an $m \times 2 \times 2$ tensor $\tensorC$, where
$C_{juv} = \frac{1}{n} \sum_{t=1}^n \bm{1}\{y_{tj} = u\} \bm{1}\{\widehat{y}_{tj} = v\}$,\footnote{$\bm{1}(S)$ is the indicator function equal one when $S$ holds.} where $j \in \{1,\ldots,m\}$ and $u,v \in \{0,1\}$. In other words, $\matC$ is a sequence of $m$ binary confusion matrices, separately for each label. In a given problem, we let $\confusionmatrixspace$ denote the set of all achievable confusion matrices, that is
the set of confusion matrices that can be formed from allowed label and prediction vectors $\vecy^t$, $\vecpred^t$ for any $t$.
Note that, independent of the considered problem, the confusion matrix can always be written as
$\matC(\vecy^t, \vecpred^t) = \frac{1}{t} \sum_{i=1}^t \matC(\vecy_i, \vecpred_i)$ (with $\matC(\vecy_i, \vecpred_i)$ being confusion matrices computed out of a single
label $\vecy_i$ and prediction $\vecpred_i$), which gives a simple online update:
\begin{equation}
\matC(\vecy^t, \vecpred^t)
= \frac{t-1}{t} \matC(\vecy^{t-1}, \vecpred^{t-1}) + \frac{1}{t} \matC(\vecy_t, \vecpred_t) \,.
\label{eq:confusion_matrix}
\end{equation}

\begin{table*}[t]
\caption{Examples of binary and multiclass confusion matrix measures. Binary measures are expressed in terms of true-positives ($\truepos = C_{11}$),
false-negatives ($\falseneg = C_{10}$), false-positives ($\falsepos = C_{01}$), and true-negatives ($\trueneg = C_{00}$).
}
\label{tab:binary-metrics}
\begin{center}
\resizebox{\linewidth}{!}{
\begin{tabular}{lcc|lcc}
\toprule
Metric & $\utility(\matC^{2 \times 2})$ & $\utility(\matC^{m \times m})$ & Metric & $\utility(\matC^{2 \times 2})$ & $\utility(\matC^{m \times m})$\\[5pt]
\midrule
Accuracy & \small{$\truepos + \trueneg$} & \small{$\sum_{i=1}^{m} C_{ii}$} & G-mean & $\sqrt{\frac{\truepos \cdot \trueneg} {(\truepos+\falseneg)(\trueneg+\falsepos)}}$ & $\left(\prod_{j=1}^m \frac{C_{jj}}{\sum_{i=1}^m C_{ji}} \right)^{1/m}$ \\[5pt]
Balanced Acc. & $\frac{\truepos}{2(\truepos + \falseneg)} + \frac{\trueneg}{2(\trueneg + \falsepos)}$ & $\sum_{i = 1}^m \frac{C_{ii}}{m \sum_{j=1}^m C_{ij}}$ & H-mean & $2 \left(\frac{\truepos + \falseneg}{\truepos} + \frac{\trueneg + \falsepos}{\trueneg}\right)^{\!-1}$ & $m \left(\sum_{j=1}^m \frac{\sum_{i=1}^m C_{ji}}{C_{jj}} \right)^{-1}$ \\[5pt]
Recall & $\frac{\truepos}{\truepos + \falseneg}$ & \small{micro- or macro-avg.} & Q-mean & $1 \!-\! \sqrt{\frac{1}{2}\left(\left(\frac{\falseneg}{\truepos + \falseneg}\right)^2 \!+\! \left(\frac{\falsepos}{\trueneg + \falsepos}\right)^2\right)}$ & $1 \!-\! \sqrt{\frac{1}{m}\sum_{j = 1}^m \left(1 \!-\! \frac{C_{jj}}{\sum_{i=1}^m C_{ji}}\right)^2} $ \\[5pt]
Precision & $\frac{\truepos}{\truepos + \falsepos}$ & \small{micro- or macro-avg.} & Jaccard & $\frac{\truepos}{\truepos + \falsepos + \falseneg}$ & \small{micro- or macro-avg.} \\[5pt]
F$_{\beta}$-measure & $\frac{(1+\beta^2) \truepos}{(1+\beta^2)\truepos+\beta^2\falseneg + \falsepos}$ & \small{micro- or macro-avg.} & Matthews coeff. & $\frac{\truepos \cdot \trueneg - \falsepos \cdot \falseneg}{\sqrt{(\truepos+\falsepos)(\truepos+\falseneg)(\trueneg+\falsepos)(\trueneg+\falseneg)}}$
&  \small{micro- or macro-avg.} \\[5pt]
\bottomrule
\end{tabular}
}
\end{center}
\end{table*}

In this work, we focus on online maximization of performance metrics
that do not decompose into a sum over instances but are general functions of the confusion matrix of the algorithm.
Specifically, after observing the entire sequence of $n$ instances, the algorithm
is evaluated by means of a \emph{utility metric} $\utility = \utility(\matC(\vecy^n, \vecpred^n))$. For binary classification, examples of such measures include the $F$-measure, geometric and harmonic mean, area under the ROC curve, recall, precision, etc. We present their definitions in \cref{tab:binary-metrics}.
Most of these metrics have multi-class (some also presented in~\cref{tab:binary-metrics}) as well as multi-label extensions in the micro- and macro-averaged variants. 
By rewriting the multi-class $\matC^{m \times m}$ confusion matrix into a multi-label confusion tensor $\tensorC^{m \times 2 \times 2}$,\looseness=-1
\begin{equation*}
    \tensorC^{m \times 2 \times 2}(\matC^{m \times m})\!:=\!\left[ \begin{pmatrix}
    C_{jj} & \!\sum\limits_{i \neq j} C_{i j} \\
    \sum\limits_{i \neq j} C_{ji} & \!\sum\limits_{i \neq j, \ell \neq j} C_{i\ell}
    \end{pmatrix} \right]_{\mathrlap{j=1}}^{\mathrlap{m}} \quad\,\,,
\end{equation*}
we can define the micro- and macro-averaged metrics for both multi-class and multi-label classification as:
\begin{equation*}
\begin{aligned}
    \text{Micro-}\utility(\tensorC^{m \times 2 \times 2}) &:= \utility\bigg(\frac{1}{m} \sum_{j=1}^{m} \matC_j\bigg) \,, \\
    \text{Macro-}\utility(\tensorC^{m \times 2 \times 2}) &:= \frac{1}{m} \sum_{j=1}^{m} \utility\left(\matC_j\right) \,.
\end{aligned}
\end{equation*}

All the metrics presented above can be considered under the standard as well as budgeted at $k$ variant~\citep{Schultheis_etal_NeurIPS2023, Schultheis_etal_ICLR2024}, where the classifier is required to predict exactly $k$ classes per instance ($\vecpred \in \widehat \labelspace_k = \{\vecpred \in \{0, 1\}^m : \sum_j \pred_j = k\}$).
The main difficulty in maximizing $\utility$ comes from the 
fact that the algorithm is only evaluated at the end of the sequence by a (possibly complex) function of all the labels and predictions,
while each prediction $\vecpred_t$ must be made immediately upon observing $\vecx_t$, and cannot be changed in the future.

\paragraph{Regret.} Since without imposing assumptions on the data distribution $\datadistribution$ one cannot meaningfully bound the $\psi$-accuracy of the algorithm in the absolute terms, the goal is to compare the algorithm's performance \emph{relative} to that of the optimal predictor. Note, however, that defining the optimal predictions $\optvecy{}^n$ simply as those maximizing the value of the utility, 
$\optvecy{}^n = \argmax_{\vecpred^n} \psi(\matC(\vecy^n,\vecpred^n))$, leads (for essentially all reasonable utilities)
to a trivial solution $\optvecy{}^n = \vecy^n$, which gives the maximum possible value $\psi$, and is thus not achievable by
any algorithm, no matter how large $n$ is. Thus, we proceed differently, defining a \emph{classifier} $\vech \colon \instancespace \to \labelspace$
to be a function from the inputs to the outputs, and the optimal predictions $\optvecy{}^n = \vech^{\star}(\vecx^n)$
to be those generated by a classifier $\vech^{\star}$ which maximizes
\emph{the expected (with respect to the data sequence) value of the utility}:\footnote{Recall that
$\vech(\vecx^n)$ stands for $(\vech(\vecx_1),\ldots,\vech(\vecx_n))$.}
\begin{equation}
\vech^{\star} \coloneqq \argmax_{\vech} \EE_{(\vecx, \vecy)^n} \left[ \psi(\matC(\vecy^n,\vech(\vecx^n))) \right],
\label{eq:optimal_predictions}
\end{equation}
where the maximization is with respect to all (measurable) classifiers. We define the (expected) 
\emph{regret} of the algorithm as the difference between its expected performance in terms of $\psi$,
and the performance of the optimal classifier $\vech^{\star}$:
\[
\regret_n \coloneqq \EE\left[\psi(\matC(\vecy^n,\vech^{\star}(\vecx^n)))\right]  - \EE\left[\psi(\matC(\vecy^n,\vecpred^n))\right].
\]
Our goal is to design no-regret algorithms, that is, algorithms that guarantee $\regret_n \to 0$
as $n \to \infty$. In the next section, we propose a computationally- and memory-efficient algorithm,
which guarantees (under certain assumptions on the utility) $\regret_n = \mathcal{O}\left(\frac{\ln n}{n}\right)$
up to the estimation error of CPE.

Note that when
$n \to \infty$, $\vech^{\star}$ coincides with the $\psi$-optimal population-level classifier. 
Approximating
$\vech^{\star}$ in the batch setting is a task which was tackled
in a series of past works spanning binary classification
\citep{Ye_et_al_2012,Menon_etal2013,Koyejo_et_al_2014,Narasimhan_et_al_2014}, 
multi-class classification 
\citep{Narasimhan_et_al_2015,Narasimhan_et_al_2022},
and multi-label classification
\citep{Waegeman_et_al_2014,Koyejo_et_al_2015,Kotlowski_Dembczynski_2017,Schultheis_etal_ICLR2024},
and boils down to, under certain mild assumptions on the data distribution,
finding the optimal threshold on the conditional
probability $\veceta(\vecx)$ (for binary classification), or finding
the optimal ensemble of cost-sensitive classifiers with an iterative convex optimization algorithm,
such as the Frank-Wolfe method or gradient descent
(for multi-class and multi-label classification).\footnote{In the past papers,
the optimal classifier is allowed to be \emph{randomized}, that is to take values in the convex hull of
$\labelspace$. All of our analysis effortlessly extends to such a setup, but we do not consider it
for the clarity of the presentation. Moreover, our algorithm never needs to randomize its predictions.}
In this light, the results of our paper can be interpreted as proposing a simple and efficient iterative
algorithm, which does only a single pass over the data while achieving performance that is very close
to that of the aforementioned batch optimization methods.

\begin{remark}[Adversarial sequence of inputs] One might consider a setup in which the input instances $\vecx^n$ are not drawn i.i.d.\@ from a fixed
distribution, but rather form an arbitrary (possibly even adversarial) sequence, and the goal is to analyze the worst case with respect to
$\vecx^n$ (note that the labels remain stochastic, being drawn from $\veceta(\vecx)$).
Unfortunately, this setting turns out to be too difficult in the worst case: in \cref{sec:adversarial}, we show that there exists
a conditional probability function $\veceta$ (which we even make known to the algorithm), and a sequence of inputs $\vecx^n$, 
on which no algorithm can have a vanishing regret. Thus, a stochastic
data generation mechanism seems crucial for the existence of a no-regret learner.
\end{remark}

\section{The algorithm}
\label{sec:algorithm}

\begin{algorithm}[t]
\caption{Online Measure Maximization Algorithm}
\label{alg:OMMA}
\begin{small}
\begin{algorithmic}[0]
\STATE \textbf{Initialization:} confusion matrix $\matC_0$ 
\FOR{$t = 1, \ldots, n$}
    \STATE Receive input $\vecx_t$ and probability estimate $\empveceta_t(\vecx_t)$
    \STATE Predict $\vecpred_t = \argmax_{\vecpred} \!\nabla\! \psi(\matC_{t-1}) \!\cdot \!
    \left(\EE_{\vecy_t \sim \empveceta_t(\vecx_t)} \!\left[ \matC(\vecy_t,\vecpred)  \right]\right)$
    \STATE Receive label $\vecy_t$ and update $\matC_t = \frac{t-1}{t} \matC_{t-1} + \frac{1}{t}\matC(\vecy_t,\vecpred_t)$
\ENDFOR
\end{algorithmic}
\end{small}
\end{algorithm}

In this section, we introduce our algorithm, called \emph{Online Metric Maximization Algorithm (\OMMA{})}. 
To define the algorithm, we need to make the following assumption:

\begin{assumption}
The utility $\psi(\matC)$ is differentiable in $\matC$.
\end{assumption}
This property is shared by all performance metrics commonly 
used in practice.\footnote{In case of non-differentiable and concave metrics, one can still use a supergradient in place of gradient.}
Let $\nabla \psi(\matC)$ be the matrix of derivatives of $\psi$ with respect to $\matC$, which is of the same shape as $\matC$ ($m \times m$ for multi-class, and $m \times 2 \times 2$ for multi-label case). 
To simplify notation, we let $\matC_t \coloneqq \matC(\vecy^t,\vecpred^t)$ denote the confusion matrix of the algorithm after $t$ trials.
At trial $t$, given past labels $\vecy^{t-1}$, past predictions $\vecpred^{t-1}$, new input $\vecx_t$ and
the conditional probability estimate $\empveceta_t(\vecx_t)$, the \OMMA{}
algorithm predicts $\vecpred_t$ which maximizes the expected (with respect to $\empveceta_t$) \emph{linearized} (at $\matC_{t-1}$) version of the utility:%
\footnote{The algorithm is well-defined, as $\matC(\vecy_t,\vecpred)$ is \emph{linear} in
labels and thus its expectation only depends on marginals $\empveceta_t(\vecx_t)$.}
\begin{equation}
\vecpred_t = \argmax_{\vecpred} \nabla \psi(\matC_{t-1}) \cdot \left(\EE_{\vecy_t \sim \empveceta_t(\vecx_t)}  \left[ \matC(\vecy_t,\vecpred)  \right]\right) \,,
\label{eq:algorithm}
\end{equation}
where the maximization is over the allowed predictions $\vecpred$ (e.g., one-hot vectors in multi-class classification), and the operator ``$\cdot$'' 
denotes the matrix/tensor dot product (which is a standard vector dot product between vectorized matrix/tensor arguments).
Note that the last element in \eqref{eq:algorithm} is a confusion matrix computed from \emph{only a single observation $\vecy_t$ and a single prediction $\vecpred$}.
The algorithm does not need to store any past data or predictions beyond the entries of the confusion matrix, as the update of matrix
can be done online using \eqref{eq:confusion_matrix}. Due to the linearity of \eqref{eq:algorithm}, \OMMA{} is effectively a \emph{cost-sensitive classification}
rule with costs determined by the gradient of the utility at the current confusion matrix.
In what follows, we will show that formula \eqref{eq:algorithm}
leads to a very simple rule, boiling down to thresholding or sorting linear functions of $\empveceta_t(\vecx_t)$.\looseness=-1

\paragraph{Motivation.} \OMMA{} can be derived from considering a greedy method,
which, given past labels $\vecy^{t-1}$
and predictions $\vecpred^{t-1}$, chooses its prediction
$\vecpred_t$ in order to maximize the expected utility \emph{including} iteration $t$ (where the expectation is with respect to label $\vecy_t$, which is
unknown at the moment of making prediction), that is, to maximize 
$F(\vecpred_t) = \EE_{\vecy_t \sim \empveceta_t(\vecx_t)} \left[ \psi(\matC(\vecy^t,\vecpred^t)) \right]$. By using a Taylor-expansion of $F(\vecpred_t)$ with respect to $\vecy_t$,
it turns out that maximizing $F(\vecpred_t)$ and maximizing \eqref{eq:algorithm} are equivalent up to $\mathcal{O}(1/t^2)$ (see \cref{app:greedy} for
details). At the same time,
our algorithmic update \eqref{eq:algorithm} is based on a linear objective and, therefore, more straightforward to calculate, as well as easier to analyze.
In the experiment section, we show that both updates indeed behave very similarly to each other, being essentially indistinguishable for larger values
of $n$.

\paragraph{Example: binary classification.} To simplify the presentation,
we switch from one-hot vector notation to scalars and 
denote by $y_t \in \{0,1\}$ a label at time $t$ ($y_t=0$ corresponds to $\vecy_t=(1,0)$, 
and $y_t=1$ to $\vecy_t=(0,1)$); similarly $\pred_t \in \{0,1\}$ denotes the prediction
and $\eta(\vecx_t) = P(y_t =1 | \vecx_t)$.
The confusion matrix $\matC_{t-1} = \matC(\vecy^{t-1},\vecpred^{t-1})$ 
consists of four entries 
\begin{equation}
C_{t-1,j\ell} = \sum_{\mathclap{i \le t-1}} \bm{1}\{y_i=j\}\bm{1}\{\pred_i=\ell\},\quad j,\ell \in \{0,1\}\,.
\end{equation}
Let us abbreviate $\frac{\partial}{\partial_{C_{\!j\ell}}} \psi(\matC_{t-1})$ as $\nabla_{\!\!j\ell}$,
and $\empeta_t(\vecx_t)$ as $\empeta_t$. 
Since 
\begin{equation*}
    \EE_{y_t \sim \empeta_t}[ \matC(y_t,\pred) ] = \begin{bmatrix}(1-\empeta_t)(1-\pred) & (1-\empeta_t) \pred \\ \empeta_t (1-\pred) & \empeta_t \pred \end{bmatrix} \,,
\end{equation*}
equation \eqref{eq:algorithm} boils down to maximizing a cost-sensitive classification accuracy:
\begin{equation}
\!\!\nabla_{\!\!00} (1-\empeta_t)(\!1-\pred) + \nabla_{\!\!01\!}(1-\empeta_t) \pred + \nabla_{\!\!10} \empeta_t (1-\pred) + \nabla_{\!\!11} \empeta_t \pred,
\label{eq:binary_classification_prediction_rule}
\end{equation}
which can also be rewritten, up to terms independent of $\pred$, as $\pred (\alpha \empeta_t - \beta)$, with
\[
\alpha = \nabla_{11} + \nabla_{00} - \nabla_{01} - \nabla_{10}, \quad
\beta = \nabla_{00} - \nabla_{01},
\]
a cost-sensitive prediction rule \citep{Elkan_2001,JMLR:v18:15-226}. 
Since all utilities used in practice are non-decreasing with true positives and true negatives ($\nabla_{11},\nabla_{00} \ge 0$),
and non-increasing with false negatives and false positives ($\nabla_{01}, \nabla_{10} \le 0$), we get $a \ge 0$,
and thus maximizing $\pred (\alpha \empeta_t - \beta)$ boils down to choosing $\pred_t=1$ whenever the conditional probability
$\empeta_t(\vecx_t)$ exceeds a threshold $\nicefrac{\beta}{\alpha}$. It is well-known that, under mild assumptions on the input distribution
$\datadistribution_{\!\instancespace}$ and the metric $\psi$, thresholding $\eta(\vecx)$ is the optimal classification rule for maximizing
the population-level version of $\psi$ \citep{Koyejo_et_al_2014,Narasimhan_et_al_2014}.
Thus, \OMMA{} mimics the optimal classification rule, with a threshold
computed based on the empirical data.

\paragraph{Example: multi-label classification (with a budget).} Here $\vecy_t$ and $\vecpred_t$ are label and prediction
vectors of length $m$, and $\veceta(\vecx_t)$ is the vector of marginal label probabilities. $\matC_{t-1}$
is a sequence of binary confusion matrices $\matC^j_{t-1}$, $j=1,\ldots,m$. 
Abbreviating $\frac{\partial}{\partial C^j_{uv}} \psi(\matC_{t-1})$ as $\nabla^j_{uv}$,
and $\empveceta_t(\vecx_t)$ as $\empveceta_t$ allows us to write down the objective
\eqref{eq:algorithm} as a direct extension of the one from binary classification \eqref{eq:binary_classification_prediction_rule}, 
summed over $m$ labels:\looseness=-1
\begin{equation}
\sum_{j=1}^m \pred_j (\alpha_j \empeta_{tj} - \beta_j)
\label{eq:multi-label_classification_prediction_rule}
\end{equation}
with $\alpha_j = \nabla^j_{11} + \nabla^j_{00} - \nabla^j_{01} - \nabla^j_{10}$, $\beta_j = \nabla^j_{00} - \nabla^j_{01}$.
Maximizing \eqref{eq:multi-label_classification_prediction_rule} amounts to setting $\pred_{tj} = 1$
whenever $\alpha_j \empeta_{tj} - \beta_j \ge 0$, or $\empeta_{tj} \ge \nicefrac{\beta_j}{\alpha_j}$.
In the budgeted-at-$k$ variant, where the algorithm must predict 
\emph{exactly} $k$ labels each time, the prediction amounts to sorting the labels
by means of $\alpha_j \empeta_{tj} - \beta_j$ in descending order, and setting the top $k$
of them to $1$. Interestingly, this is closely related to the optimal population-level prediction rule
for the budgeted setting \citep{Schultheis_etal_ICLR2024}.

\section{Theoretical analysis}

In order to prove a bound on the regret of \OMMA{} algorithm, we need to make additional assumptions on the utility $\psi$.

\begin{assumption}
\label{as:utility}
$\psi(\matC)$ is differentiable, concave, $L$-Lipschitz, and $M$-smooth
in $\matC$, that is for any $\matC_1,\matC_2 \in \confusionmatrixspace$,
$\psi(\matC_1) \le \psi(\matC_2) + \nabla \psi(\matC_2)^\top (\matC_1 - \matC_2)$,
$|\psi(\matC_1)-\psi(\matC_2)\| \le L \|\matC_1 - \matC_2\|$,
and
$\|\nabla \psi(\matC_1) - \nabla \psi(\matC_2)\| \le M \|\matC_1 - \matC_2\|$, where
$\|\cdot\|$ denotes the entrywise $L_2$-norm.
\end{assumption}

\begin{remark}
Our algorithm generally requires smoothness of the objective to converge. 
The assumption can be waived by running it on a \emph{smoothed} version of the metric. This, however,
comes at the price of a slower convergence rate $\mathcal{O}(\sqrt{\ln n / n})$.
As non-smooth metrics are not commonly employed in machine learning, we relegate the
discussion to \cref{app:non-smooth}
\end{remark}

\begin{theorem}
\label{thm:regret}
Let \cref{as:utility} hold. Then the \OMMA{} algorithm has its regret bounded by:
\[
\regret_n \le 
\frac{M a (1+ \ln n)}{n} + 
\frac{2 b L}{n} \sum_{t=1}^n \EE\left[ \|\veceta(\vecx_t) - \empveceta_t(\vecx_t)\| \right]
\]
where $a=1, b=1$ for multi-class, and
$a=m, b=\sqrt{2}$ for multi-label classification.
\end{theorem}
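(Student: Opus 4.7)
The plan is a Frank--Wolfe-style analysis. Define $\overline{\matC}^{\star} := \mathbb{E}[\matC(\vecy,\vech^{\star}(\vecx))]$, the expected single-instance confusion matrix of the optimal classifier, and $\delta_t := \psi(\overline{\matC}^{\star}) - \mathbb{E}[\psi(\matC_t)]$. I would first apply $M$-smoothness of $\psi$ at $\matC_{t-1}$ and substitute the online update $\matC_t - \matC_{t-1} = \tfrac{1}{t}(\matC(\vecy_t,\vecpred_t)-\matC_{t-1})$, yielding
\[
\psi(\matC_t) \ge \psi(\matC_{t-1}) + \tfrac{1}{t}\nabla\psi(\matC_{t-1})\cdot(\matC(\vecy_t,\vecpred_t)-\matC_{t-1}) - \tfrac{M}{2t^2}\|\matC(\vecy_t,\vecpred_t)-\matC_{t-1}\|^2.
\]
Because the single-observation $\matC(\vecy_t,\vecpred_t)$ has exactly one $1$ per slice while $\matC_{t-1}$ has non-negative entries summing to $1$ per slice, a direct calculation gives $\|\matC(\vecy_t,\vecpred_t)-\matC_{t-1}\|^2 \le 2a$ with $a=1$ (multi-class) or $a=m$ (multi-label), so the quadratic term is at most $Ma/t^2$.

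Next, I would take conditional expectation over $\vecx_t,\vecy_t$ given the history through time $t-1$. Linearity of $\matC(\cdot,\cdot)$ in its first argument rewrites the inner expectation as $\mathbb{E}_{\vecy\sim\veceta(\vecx_t)}[\matC(\vecy,\vecpred_t)]$. The \OMMA\ update rule gives
\[
\nabla\psi(\matC_{t-1})\cdot\mathbb{E}_{\vecy\sim\empveceta_t(\vecx_t)}[\matC(\vecy,\vecpred_t)] \ge \nabla\psi(\matC_{t-1})\cdot\mathbb{E}_{\vecy\sim\empveceta_t(\vecx_t)}[\matC(\vecy,\vech^{\star}(\vecx_t))],
\]
and replacing $\empveceta_t$ by $\veceta$ on either side costs at most $bL\|\veceta(\vecx_t)-\empveceta_t(\vecx_t)\|$, using $\|\nabla\psi\|\le L$ together with a norm calculation showing that the Frobenius norm of the difference of the two conditional-expectation confusion matrices (for a fixed prediction) equals $b\|\veceta(\vecx_t)-\empveceta_t(\vecx_t)\|$, with $b=1$ for multi-class (one-hot algebra) and $b=\sqrt{2}$ for multi-label (each slice contributes two copies of $(\eta_j-\widehat{\eta}_{t,j})^2$). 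Taking expectation over $\vecx_t$ then turns the right-hand side into $\nabla\psi(\matC_{t-1})\cdot\overline{\matC}^{\star}$.

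Combining with the concavity inequality $\nabla\psi(\matC_{t-1})\cdot(\overline{\matC}^{\star}-\matC_{t-1}) \ge \psi(\overline{\matC}^{\star}) - \psi(\matC_{t-1})$ and taking total expectation yields the Frank--Wolfe-style recursion
\[
\delta_t \le \tfrac{t-1}{t}\,\delta_{t-1} + \tfrac{2bL}{t}\,\mathbb{E}\|\veceta(\vecx_t)-\empveceta_t(\vecx_t)\| + \tfrac{Ma}{t^2}.
\]
Multiplying through by $t$ telescopes the left-hand side to $n\delta_n$, and bounding $\sum_{t=1}^n 1/t \le 1+\ln n$ gives $\delta_n \le \tfrac{Ma(1+\ln n)}{n} + \tfrac{2bL}{n}\sum_{t=1}^n \mathbb{E}\|\veceta(\vecx_t)-\empveceta_t(\vecx_t)\|$. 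Finally, Jensen's inequality applied to concave $\psi$ yields $\mathbb{E}[\psi(\matC(\vecy^n,\vech^{\star}(\vecx^n)))] \le \psi(\overline{\matC}^{\star})$, so $\regret_n \le \delta_n$, closing the argument. The main obstacle is the second step: \OMMA's optimality is stated relative to $\empveceta_t$, not the true $\veceta$, so the clean Frank--Wolfe recursion only emerges after paying a two-sided CPE approximation cost, and the constants $a,b$ that distinguish the multi-class and multi-label geometries of confusion tensors must be tracked carefully through this swap.
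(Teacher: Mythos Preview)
Your proposal is correct and follows essentially the same approach as the paper: both use concavity of $\psi$ together with Jensen's inequality to reduce the regret to $\psi(\overline{\matC}^{\star}) - \EE[\psi(\matC_n)]$, apply $M$-smoothness at $\matC_{t-1}$ with the update rule to derive a Frank--Wolfe-type recursion, handle the CPE mismatch by paying $2bL\|\veceta-\empveceta_t\|$ via a two-sided swap between $\veceta$ and $\empveceta_t$, and then telescope using $\sum_{t=1}^n 1/t \le 1+\ln n$. The only cosmetic differences are that the paper works with the random quantity $A_t = t(\psi(\matC^{\star}) - \psi(\matC_t))$ and takes expectations at the end, whereas you carry the full expectation in $\delta_t$ throughout, and that the paper routes the CPE comparison through $\max_{\vecpred}$ before specializing to $\vech^{\star}(\vecx_t)$; neither changes the argument.
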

The proof is given in \cref{app:regret_bounds}. The bound in \cref{thm:regret}
consists of two parts: (1) the first term of order $\mathcal{O}(\frac{\ln n}{n})$
can be interpreted as the regret of \OMMA{} had it been equipped with the true conditional probability
$\veceta$ (that is, the estimator is exact); (2) the estimation error of $\empveceta_t$ averaged over
trials $t=1,\ldots,n$. If the estimation error converges 
to zero with $n \to \infty$, \OMMA{} becomes a no-regret learning algorithm.
\begin{remark}
\label{rem:smoothing}
In the proof of \cref{thm:regret},
the Lipschitzness and the smoothness properties 
are invoked along the parameter path of the algorithm, $\{\matC_t\}_{t=1}^n$,
in order to control the progress in optimizing the utility.
These properties might not necessarily hold
globally (for every confusion matrix) 
for utilities given in \cref{tab:binary-metrics},
for instance when the observed labels lead to severe class imbalance.
However, adding a small constant to the denominator
in the definition of a utility stabilizes its values and ensures
its global Lipschitzness and smoothness. This is also the
approach we take in the experiments to keep our algorithm
stable over the initial part of the data sequence.

An alternative approach is to 
use the fact that most utilities in \cref{tab:binary-metrics}
are Lipschitz and smooth when label frequencies in the confusion
matrix are bounded away from zero. Taking into account that these
properties are applied along the path of the algorithm, 
and using concentration inequalities on the label frequencies,
we show in \cref{app:alternative_proof} 
that as long as the probabilities of labels
$\datadistribution (y_j = 1)$ are bounded away from zero,
the regret of $\OMMA{}$ converges at a rate of $\mathcal{O}(\sqrt{\ln n / n})$ with high probability. 
Motivated by this fact, we also propose to use a regularization technique that, additionally to using a small constant in the denominator of some metrics, also adds the small value $\lambda$ to the initial entries of the confusion matrix.
With the updates of the confusion matrix, this value diminishes, being $\frac{\lambda}{t}$ at iteration $t$ of the algorithm. This simple technique turns out to be helpful for some metrics.

\end{remark}
\section{Alternative variants of the algorithm}

\begin{algorithm}[ht]
\caption{\OMMAeta{}}
\label{alg:OMMA_eta}
\begin{small}
\begin{algorithmic}[0]
\STATE \textbf{Initialization:} confusion matrix $\matC_0$ 
\FOR{$t = 1, \ldots, n$}
    \STATE Receive input $\vecx_t$ and probability estimate $\empveceta_t := \empveceta_t(\vecx_t)$
    \STATE Predict $\vecpred_t = \argmax_{\vecpred} \!\nabla \psi(\matC_{t-1}) \!\cdot \!
     \matC(\empveceta_t,\vecpred)$
    \STATE Update $\matC_t = \frac{t-1}{t} \matC_{t-1} + \frac{1}{t}\matC(\empveceta_t,\vecpred_t)$
\ENDFOR
\end{algorithmic}
\end{small}
\end{algorithm}

\paragraph{Internal semi-empirical confusion matrix.} In the experiment, we also use an alternative version of the $\OMMA{}$ algorithm called
\OMMAeta{}, outlined in \cref{alg:OMMA_eta}.
Note that in the algorithm's description we use $\matC(\empveceta_t,\vecpred_t) = \EE_{\vecy_t \sim \empveceta_t}\matC(\vecy_t,\vecpred_t)$ 
which follows from the fact that the confusion matrix is linear in labels and that $\EE_{\vecy_t \sim \empveceta_t}[\vecy_t]
= \empveceta_t$. The only difference between \OMMA{} and its modification is that \OMMAeta{} updates its running
confusion matrix $\matC_t$ by means of $\empveceta_t$ instead of the true label $\vecy_t$.
In fact, \OMMAeta{} does not use the true labels at all, fully trusting its CPE.
We remark that $\matC_t$ does \emph{not} correspond to any empirical confusion matrix (as it is not based on the labels),
so it should rather be treated as an internal parameter of \OMMAeta{}.
In \cref{app:OMMA_eta} we show that if the algorithm's probability estimator is
\emph{exact}, that is $\empveceta_t \equiv \veceta$ for all $t$,
\OMMAeta{} achieves (under the same assumptions as before) a regret bound $\regret_n \le \frac{Ma(2+\ln n)}{n}$, which is very similar
to that of the original \OMMA{} algorithm in \cref{thm:regret}. It is, however, unclear whether the modified algorithm
converges for a non-exact CPE. Still, it turns out that \OMMAeta{} performs surprisingly well in the experiments.

\paragraph{Sparse variant for a large number of labels.} The introduced \OMMA{} algorithm requires calculating the derivative %
with respect to every entry of the confusion matrix at each step, resulting in the total complexity of $\mathcal{O}(nm)$ over the entire sequence of data (assuming constant time for the gradient calculation of a single entry). 
Even this can be fairly expensive in case of a large number of labels (e.g., in recommendation or extreme classification); in these multi-label problems, however, 
the number of positive labels per sample is much smaller than the total number of label, $\|\vecy \|_1 \ll m$, and often most of the conditional probabilities in $\veceta(\vecx)$ are 0 or very close to 0.
Many recommenders and extreme classifiers are naturally designed to predict only top-$k'$ entries with the highest values of $\eta(\vecx)$.
We can then leverage the sparsity of top-$k'$ assuming $\eta(\vecx) = 0$ for the rest of the labels and calculate the gradient with respect to entries of the confusion matrix that correspond to non-zero values of $\veceta(\vecx)$, resulting in a total complexity of $\mathcal{O}(nk')$ on the entire data sequence. With reasonably selected $k'$, according to~\cref{thm:regret}, we should only slightly increase the regret.

\begin{table*}[ht]
    \caption{Results of the different online algorithms on \emph{multi-label} problems, averaged over 5 runs, reported as \%. 
    In this table we report the final performance obtained on the sequence of $n$ samples. 
    The best result on each metric is in \textbf{bold}, the second best is in \textit{italic}. We additionally report basic statistics of the benchmarks: number of labels $m$ and instances in the test sequence $n$. $\times$ -- means that the algorithm does not support the optimization of that metric.}
    \label{tab:main-results-multi-label}
    \vspace{8pt}
    \small
    \centering
\resizebox{\linewidth}{!}{
\setlength\tabcolsep{4 pt}
\begin{tabular}{l|c|cccccc|c|cccccc}
\toprule
    Method & Micro & \multicolumn{6}{c|}{Macro} & Micro & \multicolumn{6}{c}{Macro} \\
    & F1 & F1 & F1$@3$ & Rec.$@3$ & Pr.$@3$ & G-mean & H-mean & F1 & F1 & F1$@3$ & Rec.$@3$ & Pr.$@3$ & G-mean & H-mean \\
\midrule
& \multicolumn{7}{c|}{\datasettable{YouTube} ($m = 46, n = 7926$)} & \multicolumn{7}{c}{\datasettable{Eurlex-LexGlue} ($m = 100, n = 5000$)} \\ 
 \midrule
    Top-$k$ / $\hat \eta \!>\!0.5$ & 31.20 & 22.74 & 30.99 & 42.13 & 26.39 & 32.82 & 24.46 & 70.99 & 52.43 & 46.35 & 36.67 & 74.70 & 62.33 & 55.95 \\
    \midrule
    OFO & 43.71 & 36.15 & $\times$ & $\times$ & $\times$ & $\times$ & $\times$ & 73.23 & 58.93 & $\times$ & $\times$ & $\times$ & $\times$ & $\times$ \\
    Greedy & $\times$ & 36.32 & 34.72 & 45.84 & \textit{67.18} & \textbf{77.98} & \textbf{77.93} & $\times$ & 59.83 & 54.19 & 52.67 & \textit{88.21} & 89.74 & 89.73 \\
    Online-FW & 43.67 & 36.00 & 34.38 & 45.83 & 38.92 & \textit{77.96} & \textit{77.91} & \textbf{73.68} & 59.53 & 54.18 & 52.68 & 58.76 & 89.69 & 89.75 \\
    Online-FW$(\hat \eta)$ & 43.69 & \textbf{36.47} & \textbf{35.43} & \textbf{45.89} & 50.12 & 77.92 & \textbf{77.93} & 73.22 & 59.78 & \textit{54.34} & \textbf{53.87} & 51.37 & \textit{89.91} & \textit{89.84} \\
    \midrule
    \OMMA{} & \textbf{43.73} & \textit{36.34} & 34.81 & \textit{45.85} & \textbf{67.74} & \textbf{77.98} & \textbf{77.93} & \textit{73.29} & \textbf{59.85} & 54.15 & 52.67 & \textbf{88.41} & 89.74 & 89.73 \\
    \OMMAeta{} & \textit{43.72} & \textbf{36.47} & \textit{35.38} & \textbf{45.89} & 65.49 & \textbf{77.98} & \textbf{77.93} & 73.22 & \textit{59.84} & \textbf{54.37} & \textit{53.85} & 84.99 & \textbf{89.92} & \textbf{89.85} \\
\midrule
& \multicolumn{7}{c|}{\datasettable{Mediamill} ($m = 101, n = 12914$)} & \multicolumn{7}{c}{\datasettable{Flickr} ($m = 195, n = 24154$)} \\ 
 \midrule
    Top-$k$ / $\hat \eta \!>\!0.5$ & 52.45 & 4.06 & 4.43 & 4.35 & 7.42 & 4.62 & 3.65 & 29.46 & 18.27 & 26.39 & 38.96 & 21.38 & 27.08 & 20.02 \\
    \midrule
    OFO & \textit{56.99} & 12.36 & $\times$ & $\times$ & $\times$ & $\times$ & $\times$ & \textbf{41.05} & 30.46 & $\times$ & $\times$ & $\times$ & $\times$ & $\times$ \\
    Greedy & $\times$ & 12.43 & 10.29 & \textit{9.41} & \textit{22.54} & \textit{65.68} & \textbf{66.48} & $\times$ & 30.90 & \textit{30.42} & \textbf{46.41} & \textit{57.55} & \textit{83.39} & \textbf{83.37} \\
    Online-FW & 56.98 & 12.22 & 10.18 & 9.16 & 12.70 & 64.73 & \textit{65.68} & \textbf{41.05} & 30.60 & 29.58 & \textit{46.38} & 28.66 & 83.37 & \textit{83.28} \\
    Online-FW$(\hat \eta)$ & \textit{56.99} & \textit{14.33} & \textbf{11.97} & \textbf{16.43} & 17.35 & 65.37 & 65.56 & \textit{41.02} & \textbf{31.17} & 29.65 & 46.28 & 25.73 & 83.37 & 83.20 \\
    \midrule
    \OMMA{} & \textbf{57.00} & 12.39 & 10.25 & 9.39 & \textbf{22.85} & 65.67 & \textbf{66.48} & 41.01 & 30.90 & 30.39 & \textbf{46.41} & \textbf{58.35} & \textit{83.39} & \textbf{83.37} \\
    \OMMAeta{} & \textit{56.99} & \textbf{14.34} & \textit{11.87} & \textbf{16.43} & 18.31 & \textbf{65.93} & 65.35 & \textit{41.02} & \textit{31.15} & \textbf{30.55} & 46.33 & 55.56 & \textbf{83.41} & 83.23 \\
\midrule
& \multicolumn{7}{c|}{\datasettable{RCV1X} ($m = 2456, n = 155962$)} & \multicolumn{7}{c}{\datasettable{AmazonCat} ($m = 13330, n = 306784$)} \\ 
 \midrule
    Top-$k$ / $\hat \eta \!>\!0.5$ & 68.57 & 11.29 & 5.34 & 4.59 & 13.24 & 16.01 & \textit{12.36} & 67.77 & 28.76 & 14.98 & 11.18 & 30.98 & \textit{33.93} & \textit{30.03} \\
    \midrule
    OFO & \textbf{69.83} & 20.26 & $\times$ & $\times$ & $\times$ & $\times$ & $\times$ & \textit{70.38} & 39.60 & $\times$ & $\times$ & $\times$ & $\times$ & $\times$ \\
    Greedy & $\times$ & \textbf{20.80} & \textit{16.01} & 21.20 & \textbf{30.91} & \textbf{69.07} & \textbf{67.04} & $\times$ & 44.20 & 43.64 & 57.81 & \textbf{54.00} & \textbf{80.86} & \textbf{80.04} \\
    Online-FW & \textbf{69.83} & 19.82 & 15.33 & 21.09 & 19.88 & \textbf{69.07} & \textbf{67.04} & \textbf{70.61} & 42.42 & 40.20 & 57.74 & 40.10 & \textbf{80.86} & \textbf{80.04} \\
    Online-FW$(\hat \eta)$ & \textit{69.79} & 20.40 & 15.55 & \textit{22.21} & 22.17 & \textit{69.06} & \textbf{67.04} & 70.34 & \textit{47.32} & \textit{44.63} & \textit{58.87} & 49.68 & \textbf{80.86} & \textbf{80.04} \\
    \midrule
    \OMMA{} & 69.77 & 20.57 & 15.87 & 21.08 & \textit{30.39} & \textbf{69.07} & \textbf{67.04} & 69.90 & 43.04 & 42.14 & 57.80 & \textit{52.04} & \textbf{80.86} & \textbf{80.04} \\
    \OMMAeta{} & 69.71 & \textit{20.71} & \textbf{16.07} & \textbf{22.23} & 30.38 & \textit{69.06} & \textbf{67.04} & 70.02 & \textbf{47.67} & \textbf{45.06} & \textbf{58.89} & 51.52 & \textbf{80.86} & \textbf{80.04} \\
\bottomrule
\end{tabular}
    }
\end{table*}

\begin{table}[ht]
    \caption{Results of the online algorithms on \emph{multi-class} problems, averaged over 5 runs, reported as \%. 
    In this table we report the final performance obtained on the sequence of $n$ samples. 
    }
    \label{tab:main-results-multi-class}
    \vspace{8pt}
    \small
    \centering

\resizebox{\linewidth}{!}{
\setlength\tabcolsep{4 pt}
\begin{tabular}{l|cccc|ccc}
\toprule
    Method & \multicolumn{4}{c|}{Macro} & \multicolumn{3}{c}{Multi-class means} \\
    & F1 & F1$@3$ & Rec.$@3$ & Pr.$@3$ & G- & H- & Q-\\
\midrule
& \multicolumn{7}{c}{\datasettable{News20} ($m = 20, n = 7532$)} \\
 \midrule
    Top-$k$ & \textbf{83.37} & 49.23 & 94.78 & 33.95 & 82.51 & 81.66 & 80.12  \\
    \midrule
    Greedy & 82.44 & 72.69 & \textit{94.89} & \textbf{83.89} & $\times$ & $\times$ & $\times$  \\
    On.-FW & 82.88 & 54.10 & 94.88 & 19.90 & 82.68 & \textit{82.35} & 80.95  \\
    On.-FW$(\hat \eta)$ & 82.55 & 56.06 & \textbf{95.01} & 15.58 & \textbf{82.78} & \textbf{82.57} & \textit{80.97}  \\
    \midrule
    \OMMA{}& 82.11 & \textit{72.84} & \textit{94.89} & \textit{83.88} & 82.72 & 82.08 & 80.95  \\
    \OMMAeta{} & \textit{83.07} & \textbf{73.41} & \textbf{95.01} & 83.30 & \textit{82.77} & 82.18 & \textbf{81.00}  \\
\midrule
& \multicolumn{7}{c}{\datasettable{Ledgar-LexGlue} ($m = 100, n = 10000$)} \\ 
 \midrule
    Top-$k$ & 79.06 & 51.80 & 92.04 & 38.88 & 0.00 & 0.00 & 69.21 \\
    \midrule
    Greedy & \textit{79.30} & 78.08 & 93.26 & \textit{91.17} & $\times$ & $\times$ & $\times$  \\
    On.-FW & 79.22 & 70.94 & 93.30 & 54.52 & 62.31 & 75.81 & \textbf{74.59} \\
    On.-FW$(\hat \eta)$ & 79.22 & 73.85 & \textit{93.38} & 49.63 & \textit{78.02} & \textbf{77.58} & 74.50  \\
    \midrule
    \OMMA{}& 79.28 & \textit{78.10} & 93.26 & \textbf{91.41} & 77.48 & 74.62 & \textbf{74.59}  \\
    \OMMAeta{} & \textbf{79.34} & \textbf{78.22} & \textbf{93.39} & 90.10 & \textbf{78.03} & \textit{76.08} & \textit{74.53}  \\
\midrule
& \multicolumn{7}{c}{\datasettable{Caltech-256} ($m = 256, n = 14890$)} \\ 
 \midrule
    Top-$k$ & 79.45 & 46.82 & 89.85 & 32.58 & 77.32 & 75.69 & 74.53  \\
    \midrule
    Greedy & \textit{79.59} & \textit{79.02} & 90.20 & 96.61 & $\times$ & $\times$ & $\times$  \\
    On.-FW & 79.15 & 70.29 & 90.20 & 63.22 & 78.31 & \textit{77.99} & 76.00  \\
    On.-FW$(\hat \eta)$ & 79.29 & 72.97 & \textit{90.34} & 65.42 & \textbf{78.41} & \textbf{78.08} & \textit{76.07}  \\
    \midrule
    \OMMA{}& 79.54 & 78.96 & 90.20 & \textit{96.77} & 78.33 & 77.12 & \textbf{76.15}  \\
    \OMMAeta{} & \textbf{79.66} & \textbf{79.11} & \textbf{90.35} & \textbf{96.96} & \textit{78.36} & 77.01 & 75.99  \\
\bottomrule
\end{tabular}
    }
\end{table}

\section{Empirical study}
\label{sec:experiments}

To demonstrate the practicality and generality of the introduced \OMMA{} algorithm, 
we test it on a wide range of multi-label and multi-class benchmark datasets that differ substantially in the number of labels, ranging from tens to a few thousands, and in the imbalance of the label distribution \footnote{
Code to reproduce the experiments: \\\url{https://github.com/mwydmuch/xCOLUMNs}
}.
For multi-class experiments, we use News20~\citep{lang1995newsweeder}, Ledgar-LexGlue~\citep{chalkidis-etal-2022-lexglue} with tf-idf features, Caltech-256~\citep{Caltech256} with features obtained using VGG16~\citep{Simonyan2014DCNN} trained on ImageNet, and for multi-label experiments, we use
YouTube, Flickr~\citep{Tang_and_Liu_2009} with DeepWalk features~\citep{Perozzi_et_al_2014}, Eurlex-LexGlue~\citep{chalkidis2021lexglue}, Mediamill~\citep{Snoek_et_al_2006}, RCV1X~\citep{lewis2004rcv1}, and AmazonCat~\citep{mcauley2013hidden,Bhatia_et_al_2016} with tf-idf features.
We conduct two types of experiments:
\begin{enumerate}
    \item With fixed conditional probability estimator (CPE) -- we train the CPE using $L_2$-regularized logistic loss %
    on the provided training sets and use the obtained CPE to predict all $\empveceta$ for the test set, which are then used in 
    the online algorithms on the test set using the protocol from~\cref{fig:online-protocol}.
    For benchmarks without default train and test sets, we split them randomly in proportion 70/30.
    \item With online CPE -- instead of training CPE on a separate training set, it incrementally updated on observed instances in the sequence. In this case, we concatenate train and test sets to create one long sequence used in the online protocol. We report the results of this experiment in \cref{app:online-cpe-results}.
\end{enumerate}

Each experiment is repeated five times, each time randomly shuffling the sequence. We report the mean results over all runs.
In~\cref{app:experiments-setup}, we include more details regarding the experimental setup.%

We compare \OMMA{} with the following algorithms:
\begin{itemize}%
    \item Top-$k(\empveceta(\vecx))$ -- a classifier that selects $k$ labels with the highest values in $\empveceta(\vecx)$. 
    It is used as a baseline in multi-class and multi-label problems with the budget $k$ constrain. For multi-class problems we use $k=1$ if the budget is not specified.
    \item $\empeta(\vecx) > 0.5$ -- a classifier with a constant threshold that predicts a label as positive if its conditional probability is greater than 0.5. It is used as a baseline in multi-label problems without the budget $k$ constrain.
    \item OFO -- a consistent online algorithm for the (micro- and macro-averaged) F-measure in the multi-label setting~\citep{Busa-Fekete_et_al_2015, Jasinska_et_al_2016}.
    \item Greedy -- an algorithm that given past labels and predictions chooses its next prediction in order to maximize the expected utility. It might be seen as a close approximation of the \OMMA{} algorithm. It has been motivated by a similar algorithm recently introduced for batch multi-label classification with predictions budgeted at $k$~\citep{Schultheis_etal_NeurIPS2023}. An efficient implementation is possible for metrics that linearly decompose over labels (e.g., macro-averaged metrics).
    \item Online-FW -- an adaptation of the Frank-Wolfe algorithm, used earlier for batch multi-class~\citep{Narasimhan_et_al_2015, Narasimhan_et_al_2022} and (budgeted at $k$) multi-label problems~\citep{Schultheis_etal_ICLR2024}, to the online setting. It re-runs the batch algorithm on all instances observed so far (which need to be stored in memory) in exponentially growing intervals of $10 \times 1.1^i$ instances with $i \in \{0, 1, 2, 3, \dots\}$, which we found to provide frequent enough updates for achieving good predictive performance. 
    Since this algorithm, similar to \OMMA{}, can be run for all metrics considered in the study, we treat it as the main baseline for our algorithm. 
    For a fair comparison, we also introduce an alternative variant that uses $\empveceta$ instead of true labels $\vecy$ to estimate its internal confusion matrix. We denote this algorithm as Online-FW$(\empeta)$.
    Note that despite calling this method ``online,'' it requires storing all previously seen instances in memory and thus does not fully adhere to the online paradigm.
    \item Offline-FW -- the batch variant of the Frank-Wolfe algorithm used earlier in~\citep{Narasimhan_et_al_2015, Schultheis_etal_ICLR2024}, trained using the same training set that was used to obtain CPE.
    It then uses the same probability estimates as online methods for inference on the test sequence without further updates to the classifier. The comparison with this algorithm is reported in \cref{app:extended-results}.
\end{itemize}

As discussed in \cref{rem:smoothing}, we add both small constant $\epsilon=\text{1e-9}$ to the denominators in all metrics, as well as the regularization value $\lambda \in \{0, \text{1e-6}, \text{1e-3}, 0.1, 1\}$ to the entries of the confusion matrix used in all online algorithms (Greedy, OFO, Online-FW, and \OMMA{}).

We use the following popular metrics for evaluating the algorithms: %
Micro and Macro-averaged F1, which are widely used to evaluate classifiers in many domains such as information retrieval; 
budgeted at $k = 3$ Macro F1, Macro Recall, and Macro Precision being well-suited to recommendation systems and extreme classification; 
and Macro G-mean and H-mean for multi-label problems and Multi-class G-mean, H-mean and Q-mean for multi-class problems, which are frequently used in class imbalance problems. 
Note that \OMMA{} and Online-FW can target these metrics directly, while the scope of other algorithms is limited. 

We present the results of the experiment with fixed CPE in~\cref{tab:main-results-multi-label,tab:main-results-multi-class}.
We report the mean performance on the entire sequence of $n$ instances for the best value of $\lambda$. Additionally, we present incremental performance on~\cref{fig:main-plots} and the effect of using different $\lambda$ values on~\cref{fig:main-reg-plots} for the Flickr dataset.
The introduced \OMMA{} algorithm matches the performance of the online Frank-Wolfe algorithm for most of the measures, 
performing much better on Macro-Precision@3, which is not Lipschitz. 
Concurrently, \OMMA{} uses less time and memory as it does not require storing all previously seen instances. 
We also observe that \OMMA{}, as an approximation of Greedy, matches its performance on metrics supported by this method. 
This is additionally confirmed by the plots where we can observe that the performance of these algorithms is very close to each other at each iteration $t$. 
Surprisingly, \OMMAeta{} often performs better on macro-averaged metrics, especially on benchmarks with a large number of labels, 
where many of them have only a small number of positive samples. In these cases, the variance reduction effect of using $\empveceta$ seems to compensate the estimation error of $\veceta$. It is also much less sensitive to the selection of $\lambda$, as small probability values that are added to many entries of the confusion matrix give the same effect as the regularization value $\lambda$.
Extended results with additional algorithms, standard deviations, running times, and plots for the rest of the datasets can be found in \cref{app:extended-results}.

\begin{figure}[t!]
    \centering
    \includegraphics[width=0.235\textwidth]{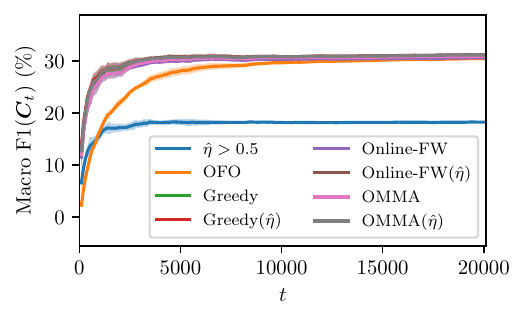}
    \includegraphics[width=0.235\textwidth]{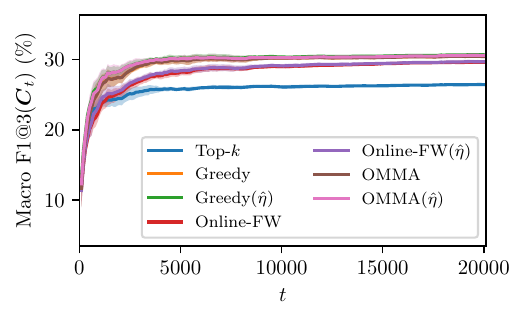}
    \vspace{-8pt}
    \caption{Comparison of the incremental performance of the online algorithms on the Flickr dataset. Averaged over 5 runs, the opaque fill indicates the standard deviation at given iteration $t$.}
    \label{fig:main-plots}
\end{figure}
\begin{figure}[t!]
    \centering
    \includegraphics[width=0.235\textwidth]{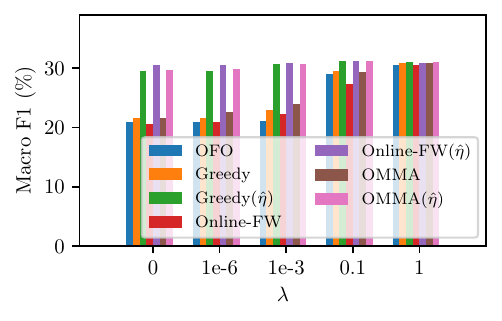}
    \includegraphics[width=0.235\textwidth]{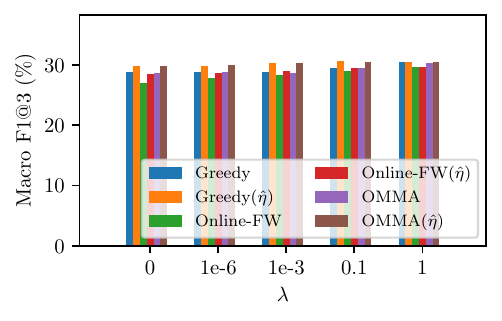}
    \vspace{-8pt}
    \caption{Impact of $\lambda$ on the results of the online algorithms on the Flickr dataset. Averaged over 5 runs.}
    \label{fig:main-reg-plots}
\end{figure}
\section{Discussion}
\label{sec:discussion}

In our framework, the algorithm aims to maximize the empirical value of the utility of the observed data sequence, 
$\psi(\matC(\vecy^n, \vecpred^n))$.
An interesting direction of research is an alternative online framework, in which the goal is to sequentially
learn a \emph{classifier} $\widehat{\vech}_n \colon \instancespace \to \labelspace$, which is then
evaluated on the entire population by means of $\psi(\matC(\widehat{\vech}_n))$, where
$\matC(\widehat{\vech}_n) = \EE_{(\vecx,\vecy)} [ \matC(\vecy, \widehat{\vech}_n(\vecx)) ]$
is the population confusion matrix of $\widehat{\vech}_n$.

In the analysis of the algorithm,
we assumed the concavity of the metric. On one hand,
\emph{some} assumption of this type seems essential for no-regret learning; for instance,
in multi-class macro-averaged $F$-measure learning, the utility belongs to function class, optimizing which
is, in general, NP-hard \citep{Narasimhan_eta_al_ICDM2016}. %
On the other hand, our concavity assumption excludes an important class of linear-fractional functions (such as binary or
micro-averaged $F$-measure), for which a specialized version of our algorithm is known to converge, although without
providing a convergence rate \citep{Busa-Fekete_et_al_2015}.
The analysis of our method for linear-fractional utilities is an interesting direction for future research, especially
given that our algorithm works very well on these metrics in the experiments.

\section*{Impact Statement}

This paper presents work whose goal is to advance the field of Machine Learning. 
There are many potential societal consequences of our work, 
none which we feel must be specifically highlighted here.

\section*{Acknowledgments}

This research has been supported by Poznan University of Technology via grant no. 0311/SBAD/0758 and by Academy of Finland via grants no. 347707 and 348215.

\bibliography{lit}

\begin{thebibliography}{55}
\providecommand{\natexlab}[1]{#1}
\providecommand{\url}[1]{\texttt{#1}}
\expandafter\ifx\csname urlstyle\endcsname\relax
  \providecommand{\doi}[1]{doi: #1}\else
  \providecommand{\doi}{doi: \begingroup \urlstyle{rm}\Url}\fi

\bibitem[Babbar \& Sch{\"o}lkopf(2017)Babbar and
  Sch{\"o}lkopf]{babbar2017dismec}
Babbar, R. and Sch{\"o}lkopf, B.
\newblock Dismec: Distributed sparse machines for extreme multi-label
  classification.
\newblock In \emph{Proceedings of the tenth ACM international conference on web
  search and data mining}, pp.\  721--729, 2017.

\bibitem[Baldi et~al.(2000)Baldi, Brunak, Chauvin, Andersen, and
  Nielsen]{baldi2000assessing}
Baldi, P., Brunak, S., Chauvin, Y., Andersen, C.~A., and Nielsen, H.
\newblock Assessing the accuracy of prediction algorithms for classification:
  an overview.
\newblock \emph{Bioinformatics}, 16\penalty0 (5):\penalty0 412--424, 2000.

\bibitem[Beck(2017)]{Beck2017}
Beck, A.
\newblock \emph{First-order methods in optimization}.
\newblock SIAM, 2017.

\bibitem[Beck \& Teboulle(2012)Beck and Teboulle]{BeckTeboulle2012}
Beck, A. and Teboulle, M.
\newblock Smoothing and first order methods: A unified framework, 2012.

\bibitem[Bhatia et~al.(2016)Bhatia, Dahiya, Jain, Mittal, Prabhu, and
  Varma]{Bhatia_et_al_2016}
Bhatia, K., Dahiya, K., Jain, H., Mittal, A., Prabhu, Y., and Varma, M.
\newblock The extreme classification repository: Multi-label datasets and code,
  2016.
\newblock URL \url{http://manikvarma.org/downloads/XC/XMLRepository.html}.

\bibitem[Bubeck(2015)]{bubeck2014convex}
Bubeck, S.
\newblock Convex optimization: Algorithms and complexity.
\newblock 2015.

\bibitem[Busa{-}Fekete et~al.(2015)Busa{-}Fekete, Sz{\"{o}}r{\'{e}}nyi,
  Dembczynski, and H{\"{u}}llermeier]{Busa-Fekete_et_al_2015}
Busa{-}Fekete, R., Sz{\"{o}}r{\'{e}}nyi, B., Dembczynski, K., and
  H{\"{u}}llermeier, E.
\newblock Online {F}-measure optimization.
\newblock In \emph{Advances in Neural Information Processing Systems 28: Annual
  Conference on Neural Information Processing Systems 2015, December 7-12,
  2015, Montreal, Quebec, Canada}, pp.\  595--603, 2015.

\bibitem[Cao et~al.(2019)Cao, Wei, Gaidon, Arechiga, and Ma]{cao2019learning}
Cao, K., Wei, C., Gaidon, A., Arechiga, N., and Ma, T.
\newblock Learning imbalanced datasets with label-distribution-aware margin
  loss.
\newblock \emph{Advances in neural information processing systems}, 32, 2019.

\bibitem[Cesa-Bianchi \& Lugosi(2006)Cesa-Bianchi and Lugosi]{CBLu06:book}
Cesa-Bianchi, N. and Lugosi, G.
\newblock \emph{Prediction, Learning, and Games}.
\newblock Cambridge University Press, New York, NY, USA, 2006.

\bibitem[Chalkidis et~al.(2021)Chalkidis, Jana, Hartung, Bommarito,
  Androutsopoulos, Katz, and Aletras]{chalkidis2021lexglue}
Chalkidis, I., Jana, A., Hartung, D., Bommarito, M., Androutsopoulos, I., Katz,
  D.~M., and Aletras, N.
\newblock Lexglue: A benchmark dataset for legal language understanding in
  english.
\newblock \emph{arXiv preprint arXiv:2110.00976}, 2021.

\bibitem[Chalkidis et~al.(2022)Chalkidis, Jana, Hartung, Bommarito,
  Androutsopoulos, Katz, and Aletras]{chalkidis-etal-2022-lexglue}
Chalkidis, I., Jana, A., Hartung, D., Bommarito, M., Androutsopoulos, I., Katz,
  D., and Aletras, N.
\newblock {L}ex{GLUE}: A benchmark dataset for legal language understanding in
  {E}nglish.
\newblock In Muresan, S., Nakov, P., and Villavicencio, A. (eds.),
  \emph{Proceedings of the 60th Annual Meeting of the Association for
  Computational Linguistics (Volume 1: Long Papers)}, pp.\  4310--4330, Dublin,
  Ireland, May 2022. Association for Computational Linguistics.
\newblock \doi{10.18653/v1/2022.acl-long.297}.

\bibitem[Dembczynski et~al.(2017)Dembczynski, Kotlowski, Koyejo, and
  Natarajan]{Dembczynski_et_al_2017}
Dembczynski, K., Kotlowski, W., Koyejo, O., and Natarajan, N.
\newblock Consistency analysis for binary classification revisited.
\newblock In \emph{Proceedings of the 34th International Conference on Machine
  Learning, {ICML} 2017, Sydney, NSW, Australia, 6-11 August 2017}, volume~70
  of \emph{Proceedings of Machine Learning Research}, pp.\  961--969. {PMLR},
  2017.

\bibitem[Dembczy{\'{n}}ski et~al.(2017)Dembczy{\'{n}}ski, Kot{\l}owski, Koyejo,
  and Natarajan]{Dembczynski_etal_ICML2017}
Dembczy{\'{n}}ski, K., Kot{\l}owski, W., Koyejo, O., and Natarajan, N.
\newblock Consistency analysis for binary classification revisited.
\newblock In \emph{Proceedings of the 34th International Conference on Machine
  Learning ({ICML} 2017)}, volume~70 of \emph{Proceedings of Machine Learning
  Research}, pp.\  961--969. PMLR, 2017.

\bibitem[Drummond \& Holte(2005)Drummond and Holte]{DrummondHolte_2005}
Drummond, C. and Holte, R.~C.
\newblock Severe class imbalance: Why better algorithms aren't the answer.
\newblock In \emph{European Conference on Machine Learning and Principles and
  Practice of Knowledge Discovery in Databases ({ECML PKDD})}, pp.\  539--546.
  Springer, 2005.

\bibitem[Elkan(2001)]{Elkan_2001}
Elkan, C.
\newblock The foundations of cost-sensitive learning.
\newblock JCAI'01, pp.\  973–978, San Francisco, CA, USA, 2001. Morgan
  Kaufmann Publishers Inc.

\bibitem[Fan et~al.(2008)Fan, Chang, Hsieh, Wang, and Lin]{liblinear}
Fan, R., Chang, K., Hsieh, C., Wang, X., and Lin, C.
\newblock {LIBLINEAR:} {A} library for large linear classification.
\newblock \emph{Journal of Machine Learning Research}, 9:\penalty0 1871--1874,
  2008.

\bibitem[Griffin et~al.(2007)Griffin, Holub, and Perona]{Caltech256}
Griffin, G., Holub, A., and Perona, P.
\newblock Caltech-256 object category dataset.
\newblock Technical Report 7694, California Institute of Technology, 2007.

\bibitem[Hazan(2016)]{Haz16}
Hazan, E.
\newblock Introduction to online convex optimization.
\newblock \emph{Foundations and Trends{\textregistered} in Optimization},
  2\penalty0 (3-4):\penalty0 157--325, 2016.

\bibitem[Hazan \& Kale(2012)Hazan and Kale]{HazanKale2012}
Hazan, E. and Kale, S.
\newblock Projection-free online learning.
\newblock International Conference on Machine Learning ({ICML}), pp.\
  521--528, 2012.

\bibitem[Hoeffding(1963)]{hoeffding1963probability}
Hoeffding, W.
\newblock Probability inequalities for sums of bounded random variables.
\newblock \emph{Journal of the American statistical association}, 58\penalty0
  (301):\penalty0 13--30, 1963.

\bibitem[Jansche(2007)]{Jansche_2007}
Jansche, M.
\newblock A maximum expected utility framework for binary sequence labeling.
\newblock In Zaenen, A. and van~den Bosch, A. (eds.), \emph{Proceedings of the
  45th Annual Meeting of the Association of Computational Linguistics}, pp.\
  736--743, Prague, Czech Republic, June 2007. Association for Computational
  Linguistics.

\bibitem[Jasinska et~al.(2016)Jasinska, Dembczynski, Busa{-}Fekete,
  Pfannschmidt, Klerx, and H{\"{u}}llermeier]{Jasinska_et_al_2016}
Jasinska, K., Dembczynski, K., Busa{-}Fekete, R., Pfannschmidt, K., Klerx, T.,
  and H{\"{u}}llermeier, E.
\newblock Extreme {F}-measure maximization using sparse probability estimates.
\newblock In \emph{Proceedings of the 33nd International Conference on Machine
  Learning, {ICML} 2016, New York City, NY, USA, June 19-24, 2016}, volume~48
  of \emph{{JMLR} Workshop and Conference Proceedings}, pp.\  1435--1444.
  JMLR.org, 2016.

\bibitem[Joachims(2005)]{Joachims_2005}
Joachims, T.
\newblock A support vector method for multivariate performance measures.
\newblock In \emph{Proceedings of the 22nd International Conference on Machine
  Learning (ICML 2005), August 7-11, 2005, Bonn, Germany}, 2005.

\bibitem[Kar et~al.(2014)Kar, Narasimhan, and Jain]{kar2014online}
Kar, P., Narasimhan, H., and Jain, P.
\newblock Online and stochastic gradient methods for non-decomposable loss
  functions.
\newblock \emph{Advances in Neural Information Processing Systems}, 27, 2014.

\bibitem[Kennedy et~al.(2009)Kennedy, Mac~Namee, and
  Delany]{kennedy2009learning}
Kennedy, K., Mac~Namee, B., and Delany, S.~J.
\newblock Learning without default: A study of one-class classification and the
  low-default portfolio problem.
\newblock In \emph{Irish conference on artificial intelligence and cognitive
  science}, pp.\  174--187. Springer, 2009.

\bibitem[Kingma \& Ba(2015)Kingma and Ba]{Kingma_et_al_2015}
Kingma, D.~P. and Ba, J.
\newblock Adam: {A} method for stochastic optimization.
\newblock In Bengio, Y. and LeCun, Y. (eds.), \emph{3rd International
  Conference on Learning Representations, {ICLR} 2015, San Diego, CA, USA, May
  7-9, 2015, Conference Track Proceedings}, 2015.

\bibitem[Kot{\l}owski \& Dembczy{\'{n}}ski(2017)Kot{\l}owski and
  Dembczy{\'{n}}ski]{Kotlowski_Dembczynski_2017}
Kot{\l}owski, W. and Dembczy{\'{n}}ski, K.
\newblock Surrogate regret bounds for generalized classification performance
  metrics.
\newblock \emph{Machine Learning}, 10:\penalty0 549--572, 2017.

\bibitem[Koyejo et~al.(2014)Koyejo, Natarajan, Ravikumar, and
  Dhillon]{Koyejo_et_al_2014}
Koyejo, O., Natarajan, N., Ravikumar, P., and Dhillon, I.~S.
\newblock Consistent binary classification with generalized performance
  metrics.
\newblock In \emph{Advances in Neural Information Processing Systems 27: Annual
  Conference on Neural Information Processing Systems 2014, December 8-13 2014,
  Montreal, Quebec, Canada}, pp.\  2744--2752, 2014.

\bibitem[Koyejo et~al.(2015)Koyejo, Natarajan, Ravikumar, and
  Dhillon]{Koyejo_et_al_2015}
Koyejo, O., Natarajan, N., Ravikumar, P., and Dhillon, I.~S.
\newblock Consistent multilabel classification.
\newblock In \emph{Advances in Neural Information Processing Systems 28: Annual
  Conference on Neural Information Processing Systems 2015, December 7-12,
  2015, Montreal, Quebec, Canada}, pp.\  3321--3329, 2015.

\bibitem[Lang(1995)]{lang1995newsweeder}
Lang, K.
\newblock Newsweeder: Learning to filter netnews.
\newblock In \emph{Machine learning proceedings 1995}, pp.\  331--339.
  Elsevier, 1995.

\bibitem[Lewis(1995)]{lewis1995evaluating}
Lewis, D.~D.
\newblock Evaluating and optimizing autonomous text classification systems.
\newblock In \emph{Proceedings of the 18th Intl. ACM SIGIR Conf. on Research
  and Development in Information Retrieval}, pp.\  246--254. ACM, 1995.

\bibitem[Lewis et~al.(2004)Lewis, Yang, Russell-Rose, and Li]{lewis2004rcv1}
Lewis, D.~D., Yang, Y., Russell-Rose, T., and Li, F.
\newblock Rcv1: A new benchmark collection for text categorization research.
\newblock \emph{Journal of machine learning research}, 5\penalty0
  (Apr):\penalty0 361--397, 2004.

\bibitem[Liu et~al.(2018)Liu, Zhang, Zhou, and Yang†]{Liu_et_al_2018}
Liu, M., Zhang, X., Zhou, X., and Yang†, T.
\newblock Faster online learning of optimal threshold for consistent
  {F}-measure optimization.
\newblock 2018.

\bibitem[McAuley \& Leskovec(2013)McAuley and Leskovec]{mcauley2013hidden}
McAuley, J. and Leskovec, J.
\newblock Hidden factors and hidden topics: understanding rating dimensions
  with review text.
\newblock In \emph{Proceedings of the 7th ACM conference on Recommender
  systems}, pp.\  165--172, 2013.

\bibitem[Menon et~al.(2013)Menon, Narasimhan, Agarwal, and
  Chawla]{Menon_etal2013}
Menon, A.~K., Narasimhan, H., Agarwal, S., and Chawla, S.
\newblock On the statistical consistency of algorithms for binary
  classification under class imbalance.
\newblock In \emph{International Conference on Machine Learning (ICML)}, 2013.

\bibitem[Narasimhan et~al.(2014)Narasimhan, Vaish, and
  Agarwal]{Narasimhan_et_al_2014}
Narasimhan, H., Vaish, R., and Agarwal, S.
\newblock On the statistical consistency of plug-in classifiers for
  non-decomposable performance measures.
\newblock In \emph{Advances in Neural Information Processing Systems 27: Annual
  Conference on Neural Information Processing Systems 2014, December 8-13 2014,
  Montreal, Quebec, Canada}, pp.\  1493--1501, 2014.

\bibitem[Narasimhan et~al.(2015{\natexlab{a}})Narasimhan, Kar, and
  Jain]{conf/icml/NarasimhanK015}
Narasimhan, H., Kar, P., and Jain, P.
\newblock Optimizing non-decomposable performance measures: A tale of two
  classes.
\newblock In \emph{ICML}, volume~37, pp.\  199--208, 2015{\natexlab{a}}.

\bibitem[Narasimhan et~al.(2015{\natexlab{b}})Narasimhan, Ramaswamy, Saha, and
  Agarwal]{Narasimhan_et_al_2015}
Narasimhan, H., Ramaswamy, H., Saha, A., and Agarwal, S.
\newblock Consistent multiclass algorithms for complex performance measures.
\newblock In Bach, F. and Blei, D. (eds.), \emph{Proceedings of the 32nd
  International Conference on Machine Learning}, volume~37 of \emph{Proceedings
  of Machine Learning Research}, pp.\  2398--2407, Lille, France, 07
  2015{\natexlab{b}}. PMLR.

\bibitem[Narasimhan et~al.(2016)Narasimhan, Pan, Kar, Protopapas, and
  Ramaswamy]{Narasimhan_eta_al_ICDM2016}
Narasimhan, H., Pan, W., Kar, P., Protopapas, P., and Ramaswamy, H.~G.
\newblock Optimizing the multiclass f-measure via biconcave programming.
\newblock In \emph{2016 IEEE 16th International Conference on Data Mining
  (ICDM)}, pp.\  1101--1106, 2016.

\bibitem[Narasimhan et~al.(2022)Narasimhan, Ramaswamy, Tavker, Khurana,
  Netrapalli, and Agarwal]{Narasimhan_et_al_2022}
Narasimhan, H., Ramaswamy, H.~G., Tavker, S.~K., Khurana, D., Netrapalli, P.,
  and Agarwal, S.
\newblock Consistent multiclass algorithms for complex metrics and constraints,
  2022.

\bibitem[Natarajan et~al.(2018)Natarajan, Dhillon, Ravikumar, and
  Tewari]{JMLR:v18:15-226}
Natarajan, N., Dhillon, I.~S., Ravikumar, P., and Tewari, A.
\newblock Cost-sensitive learning with noisy labels.
\newblock \emph{Journal of Machine Learning Research}, 18\penalty0
  (155):\penalty0 1--33, 2018.

\bibitem[Paszke et~al.(2019)Paszke, Gross, Massa, Lerer, Bradbury, Chanan,
  Killeen, Lin, Gimelshein, Antiga, Desmaison, Kopf, Yang, DeVito, Raison,
  Tejani, Chilamkurthy, Steiner, Fang, Bai, and Chintala]{pytorch}
Paszke, A., Gross, S., Massa, F., Lerer, A., Bradbury, J., Chanan, G., Killeen,
  T., Lin, Z., Gimelshein, N., Antiga, L., Desmaison, A., Kopf, A., Yang, E.,
  DeVito, Z., Raison, M., Tejani, A., Chilamkurthy, S., Steiner, B., Fang, L.,
  Bai, J., and Chintala, S.
\newblock Pytorch: An imperative style, high-performance deep learning library.
\newblock In Wallach, H., Larochelle, H., Beygelzimer, A., d\textquotesingle
  Alch\'{e}-Buc, F., Fox, E., and Garnett, R. (eds.), \emph{Advances in Neural
  Information Processing Systems 32}, pp.\  8024--8035. Curran Associates,
  Inc., 2019.

\bibitem[Perozzi et~al.(2014)Perozzi, Al-Rfou, and Skiena]{Perozzi_et_al_2014}
Perozzi, B., Al-Rfou, R., and Skiena, S.
\newblock Deepwalk: Online learning of social representations.
\newblock In \emph{Proceedings of the 20th ACM SIGKDD International Conference
  on Knowledge Discovery and Data Mining}, KDD '14, pp.\  701--710. ACM, 2014.

\bibitem[Reddi et~al.(2016)Reddi, Sra, P{\'o}czos, and
  Smola]{Reddi2016StochasticFM}
Reddi, S.~J., Sra, S., P{\'o}czos, B., and Smola, A.
\newblock Stochastic frank-wolfe methods for nonconvex optimization.
\newblock \emph{Allerton Conference on Communication, Control, and Computing
  (Allerton)}, pp.\  1244--1251, 2016.

\bibitem[Schultheis et~al.(2023)Schultheis, Wydmuch, Kot{\l}owski, Babbar, and
  Dembczy{\'n}ski]{Schultheis_etal_NeurIPS2023}
Schultheis, E., Wydmuch, M., Kot{\l}owski, W., Babbar, R., and Dembczy{\'n}ski,
  K.
\newblock Generalized test utilities for long-tail performance in extreme
  multi-label classification.
\newblock In \emph{Neural Information Processing Systems (NeurIPS)}, 2023.

\bibitem[Schultheis et~al.(2024)Schultheis, Kot{\l}owski, Wydmuch, Babbar,
  Borman, and Dembczy{\'n}ski]{Schultheis_etal_ICLR2024}
Schultheis, E., Kot{\l}owski, W., Wydmuch, M., Babbar, R., Borman, S., and
  Dembczy{\'n}ski, K.
\newblock Consistent algorithms for multi-label classification with
  macro-at-$k$ metrics.
\newblock In \emph{International Conference on Learning Representations
  (ICLR)}, 2024.

\bibitem[Shalev-Shwartz(2012)]{SS12}
Shalev-Shwartz, S.
\newblock Online learning and online convex optimization.
\newblock \emph{Foundations and Trends in Machine Learning}, 4\penalty0
  (2):\penalty0 107--194, 2012.

\bibitem[Simonyan \& Zisserman(2014)Simonyan and Zisserman]{Simonyan2014DCNN}
Simonyan, K. and Zisserman, A.
\newblock Very deep convolutional networks for large-scale image recognition.
\newblock \emph{arXiv preprint arXiv:1409.1556}, 2014.

\bibitem[Singh \& Khim(2022)Singh and Khim]{singh2022optimal}
Singh, S. and Khim, J.~T.
\newblock Optimal binary classification beyond accuracy.
\newblock \emph{Advances in Neural Information Processing Systems},
  35:\penalty0 18226--18240, 2022.

\bibitem[Snoek et~al.(2006)Snoek, Worring, van Gemert, Geusebroek, and
  Smeulders]{Snoek_et_al_2006}
Snoek, C. G.~M., Worring, M., van Gemert, J.~C., Geusebroek, J.-M., and
  Smeulders, A. W.~M.
\newblock The challenge problem for automated detection of 101 semantic
  concepts in multimedia.
\newblock In \emph{{ACM} International Conference on Multimedia}, pp.\
  421–430. Association for Computing Machinery, 2006.

\bibitem[Tang \& Liu(2009)Tang and Liu]{Tang_and_Liu_2009}
Tang, L. and Liu, H.
\newblock Relational learning via latent social dimensions.
\newblock In \emph{Proceedings of the 15th ACM SIGKDD International Conference
  on Knowledge Discovery and Data Mining}, KDD '09, pp.\  817–826.
  Association for Computing Machinery, 2009.

\bibitem[Waegeman et~al.(2014)Waegeman, Dembczynski, Jachnik, Cheng, and
  H{\"{u}}llermeier]{Waegeman_et_al_2014}
Waegeman, W., Dembczynski, K., Jachnik, A., Cheng, W., and H{\"{u}}llermeier,
  E.
\newblock On the bayes-optimality of {F}-measure maximizers.
\newblock \emph{Journal of Machine Learning Research}, 15\penalty0
  (1):\penalty0 3333--3388, 2014.

\bibitem[Wang \& Yao(2012)Wang and Yao]{WangYao_2012}
Wang, S. and Yao, X.
\newblock Multiclass imbalance problems: Analysis and potential solutions.
\newblock \emph{{IEEE} Transactions on Systems, Man, and Cybernetics, Part B:
  Cybernetics}, 42\penalty0 (4):\penalty0 1119--1130, 2012.

\bibitem[Yan et~al.(2017)Yan, Yang, Yang, and Chen]{Yan_et_al_2017}
Yan, Y., Yang, T., Yang, Y., and Chen, J.
\newblock A framework of online learning with imbalanced streaming data.
\newblock \emph{Proceedings of the AAAI Conference on Artificial Intelligence},
  31\penalty0 (1), 2017.

\bibitem[Ye et~al.(2012)Ye, Chai, Lee, and Chieu]{Ye_et_al_2012}
Ye, N., Chai, K. M.~A., Lee, W.~S., and Chieu, H.~L.
\newblock Optimizing {F}-measure: {A} tale of two approaches.
\newblock In \emph{Proceedings of the 29th International Conference on Machine
  Learning, {ICML} 2012, Edinburgh, Scotland, UK, June 26 - July 1, 2012}.
  icml.cc / Omnipress, 2012.

\end{thebibliography}
\bibliographystyle{icml2024}

\newpage
\appendix
\onecolumn

\section{Motivation for the \OMMA{} algorithm}
\label{app:greedy}

In this section, we motivate the \OMMA update \eqref{eq:algorithm} as a $O(1/t^2)$ approximation
of a greedy optimization method defined below. As this section is only meant to be a motivation form the algorithm,
we keep the derivation somewhat informal.
Fix a trial $t$, and let $\vecy^{t-1}$ and $\vecpred^{t-1}$
be the past predictions and labels, respectively.
Given $\vecx_t$, let $\empveceta_t(\vecx_t)$ be the conditional probability
estimate, which we concisely denote as $\empveceta_t$. The greedy method
is defined as returning prediction which \emph{maximizes the $\empveceta_t$-expected utility up to (and including)
trial $t$} (the expectation is with respect to label $\vecy_t$, which is
unknown at the moment of making prediction). That is, the greedy algorithm maximizes:
\[
F(\vecpred_t) = \EE_{\vecy_t \sim \veceta_t} \left[ \psi(\matC(\vecy^t,\vecpred^t)) \right]
\]
with respect to $\vecpred_t$. Denote $\matC_t = \matC(\vecy^t,\vecpred^t)$, $\matC_{t-1} = \matC(\vecy^{t-1},\vecpred^{t-1})$,
and $\bm\Delta_t = \matC(\vecy_t, \vecpred_t)$. Note that by \eqref{eq:confusion_matrix} we have
\[
\matC_t = \frac{t-1}{t} \matC_{t-1} + \frac{1}{t} \bm\Delta_t = \matC_{t-1} + \frac{1}{t} (\bm\Delta_t - \matC_{t-1}).
\]
Taylor-expanding $\psi(\matC_t)$ around $\matC_{t-1}$, and leaving only the first-order terms gives
\begin{align*}
\psi(\matC_t) &= \psi(\matC_{t-1}) + \frac{1}{t} \nabla \psi(\matC_{t-1}) \cdot (\bm\Delta_t - \matC_{t-1})
+ O \left(\frac{1}{t^2}\right) \\
&= \mathrm{const}(\bm \Delta_{t}) + \frac{1}{t} \nabla \psi(\matC_{t-1}) \cdot \bm\Delta_t + O \left(\frac{1}{t^2}\right).
\end{align*}
Taking expectation with respect to $\vecy_t$ on both sides gives:
\[
F(\vecpred_t) = \mathrm{const}(\vecpred_t) + \EE_{\vecy_t \sim \veceta_t} \left[\nabla \psi(\matC_{t-1}) \cdot \matC(\vecy_t, \vecpred_t) \right]
+ O \left(\frac{1}{t^2}\right),
\]
maximizing of which is equivalent (up to $O(1/t^2)$ to \OMMA{} update \eqref{eq:algorithm}.

\section{Regret bounds}
\label{sec:regret_bounds}

\subsection{Proof of Theorem \ref{thm:regret}}
\label{app:regret_bounds}

Let $\vech^{\star}$ be the optimal classifier. Let
\begin{equation}
\matC^{\star} = \EE_{(\vecx, \vecy)}\left[ \matC(\vecy,\vech^{\star}(\vecx)) \right],
\label{eq:C_star}
\end{equation}
denote its \emph{population confusion matrix}, which is the expected value of confusion matrix entries
on an instance $(\vecx, \vecy)$ randomly drawn from $\datadistribution$.
Due to concavity of $\psi$, we have:
\[
\EE_{(\vecx, \vecy)^n} \left[ \psi(\matC(\vecy^n,\vech^{\star}(\vecx^n))) \right]
\le \psi\left(\EE_{(\vecx, \vecy)^n}\left[ \matC(\vecy^n,\vech^{\star}(\vecx^n)) \right] \right)
= \psi\left(\matC^{\star} \right),
\]
thus the regret can be upper-bounded by:
\begin{equation}
\regret_n \le \psi(\matC^{\star})  - \EE\left[\psi(\matC_n)\right].
\label{eq:regret_upperbound_with_population_optimal_cm}
\end{equation}
where we used $\matC_n = \matC(\vecy^n, \vecpred^n)$. Comparing the algorithm with the population version of the optimal
confusion matrix, rather than the expected empirical one, will turn out to be useful in making an important argument later in the proof.

We now switch to the analysis of the \OMMA{} update \eqref{eq:algorithm}. One problematic issue there is that
the algorithms makes use of $\empveceta_t$, which might not coincide with the true
conditional $\veceta$, potentially leading to different predictions in both cases. We will, however, show now
that the discrepancy in algorithm's prediction due to $\empveceta_t$ can be bounded by its estimation error
$\|\empveceta_t(\vecx_t) - \veceta(\vecx_t)\|$, and is thus well controlled. To this end, we write
down the \OMMA{} update \eqref{eq:algorithm} as:
\begin{equation}
\vecpred_t = \argmax_{\vecpred} \widehat{F}(\vecpred),
\qquad \text{where~~} \widehat{F}(\vecpred) = \nabla \psi(\matC_{t-1}) \cdot \left(\EE_{\vecy_t \sim \textcolor{black!40!red}{\empveceta_t(\vecx_t)}}  \left[ \matC(\vecy_t,\vecpred)  \right]\right).
\label{eq:algorithm_update_in_the_proof}
\end{equation}
Let $\widetilde{\vecy}_t$ be a hypothetical prediction made as in \eqref{eq:algorithm_update_in_the_proof}, but with
\textcolor{black!40!red}{$\empveceta_t(\vecx_t)$} replaced by the true conditional probability \textcolor{black!50!green}{$\veceta(\vecx_t)$}, that is
\[
\widetilde{\vecy}_t = \argmax_{\vecpred} \widetilde{F}(\vecpred)
\qquad \text{where~~} \widetilde{F}(\vecpred) = \nabla \psi(\matC_{t-1}) \cdot \left(\EE_{\vecy_t \sim \textcolor{black!50!green}{\veceta(\vecx_t)}}  \left[ \matC(\vecy_t,\vecpred)  \right]\right).
\]
We will bound:
\begin{align*}
\widetilde{F}(\widetilde{\vecy}_t) - \widetilde{F}(\vecpred_t)
&= \widetilde{F}(\widetilde{\vecy}_t) - \widehat{F}(\widetilde{\vecy}_t) + \overbrace{\widehat{F}(\widetilde{\vecy}_t) - \widehat{F}(\vecpred_t)}^{\le 0 \text{~~from \eqref{eq:algorithm_update_in_the_proof}}} + \widehat{F}(\vecpred_t) - \widetilde{F}(\vecpred_t) \\
&\le 2 \max_{\vecpred} |\widetilde{F}(\vecpred) - \widehat{F}(\vecpred)| \\
&\le 2  \|\nabla \psi(\matC_{t-1})\| \max_{\vecpred} \Big\|\underbrace{\EE_{\vecy_t \sim \veceta(\vecx_t)}  \left[ \matC(\vecy_t,\vecpred)\right]
- \EE_{\vecy_t \sim \empveceta_t(\vecx_t)}  \left[ \matC(\vecy_t,\vecpred) \right]}_{\bm\Delta_t} \Big\|,
\end{align*}
where the last inequality is from Cauchy-Schwarz inequality. Due to Lipschitzness of $\psi$,
$\|\nabla \psi(\matC_{t-1})\| \le L$. For multi-class classification, $\bm\Delta_t$ is a matrix with entries equal to $\bm\Delta_{t,j\ell} = (\eta_j(\vecx_t) - \empeta_{tj}(\vecx_t))\pred_{\ell}$. Due to one-hot constraint $\sum_{\ell} \pred_{\ell}=1$, we get that for any $\vecpred$,
$\|\bm \Delta_t\|^2 = \sum_{j=1}^m |\eta_j(\vecx_t) - \empeta_{tj}(\vecx_t)|^2 = \|\veceta(\vecx_t) - \empveceta_t(\vecx_t)\|^2$.
For multi-label classification $\bm\Delta_t$ is a sequence of $m$ $2 \times 2$ matrices, with $j$-th matrix having entries
\[
\begin{pmatrix}
(\empeta_{tj}(\vecx_t) - \eta_j(\vecx_t))(1-\pred_j) & (\empeta_{tj}(\vecx_t) - \eta_j(\vecx_t))\pred_j \\
(\eta_j(\vecx_t) - \empeta_{tj}(\vecx_t))(1-\pred_j) & (\eta_j(\vecx_t) - \empeta_{tj}(\vecx_t))\pred_j
\end{pmatrix},
\]
so that $\|\bm \Delta_t\|^2 = 2 \sum_{j=1}^m |\eta_j(\vecx_t) - \empeta_{tj}(\vecx_t)|^2 =
2 \|\veceta(\vecx_t) - \empveceta_t(\vecx_t)\|^2$. Using $b=1$ for multi-class, and $b=\sqrt{2}$ for multi-label setting, we finally get:
\[
\widetilde{F}(\widetilde{\vecy}_t) - \widetilde{F}(\vecpred_t) \le 2 b L \|\veceta(\vecx_t) - \empveceta_t(\vecx_t)\|,
\]
or using the definition of $\widetilde{F}$ and rearranging,
\begin{equation}
\nabla \psi(\matC_{t-1}) \cdot \left(\EE_{\vecy_t \sim \veceta_t(\vecx_t)}  \left[ \matC(\vecy_t,\vecpred_t)  \right]\right)
\ge \max_{\vecpred} \left\{\nabla \psi(\matC_{t-1}) \cdot \left(\EE_{\vecy_t \sim \veceta_t(\vecx_t)}  \left[ \matC(\vecy_t,\vecpred)  \right]\right)\right\}
- 2 b L \|\veceta(\vecx_t) - \empveceta_t(\vecx_t)\|.
\label{eq:widetildeF}
\end{equation}
Thus, even thought the algorithm uses the estimator $\empveceta_t$ in place of $\veceta$,
the prediction it picks (left-hand side) is suboptimal with respect to the choice it should have made had it known true $\veceta$ (first term on the right-hand side)
by at most the estimation error of $\empveceta_t$ (second term on the right-hand side).
We will use this result later in the proof.

We now proceed with the main part of the proof. From the concavity of the utility,
\[
\psi(\matC^{\star}) \le \psi(\matC_{t-1}) + \nabla \psi(\matC_{t-1}) \cdot (\matC^{\star} - \matC_{t-1}),
\]
or, after rearranging,
\begin{equation}
\nabla \psi(\matC_{t-1}) \cdot (\matC_{t-1}-\matC^{\star}) \le \psi(\matC_{t-1}) - \psi(\matC^{\star}).
\label{eq:from_concavity}
\end{equation}
It follows from the concavity of $\psi$ and its smoothness that (see, e.g., \citet{bubeck2014convex}) for any $\matC_1, \matC_2 \in \mathcal{C}$,
\[
\psi(\matC_1) \ge \psi(\matC_2) + \nabla \psi(\matC_2) \cdot (\matC_1 - \matC_2) - \frac{M}{2} \|\matC_1 - \matC_2\|^2.
\]
Substituting $\matC_1 = \matC_t$ and $\matC_2 = \matC_{t-1}$, we get:
\begin{align*}
\psi(\matC_t) &\ge \psi(\matC_{t-1}) + \nabla \psi(\matC_{t-1}) \cdot (\matC_t - \matC_{t-1}) - \frac{M}{2} \|\matC_t - \matC_{t-1}\|_1^2 \\
&= \psi(\matC_{t-1}) + \frac{1}{t} \nabla \psi(\matC_{t-1}) \cdot (\matC(\vecy_t, \vecpred_t) - \matC_{t-1}) - 
\frac{M}{2t^2} \|\matC(\vecy_t, \vecpred_t) - \matC_{t-1}\|^2
\end{align*}
where we used \eqref{eq:confusion_matrix} to get:
\[
\matC_t = \frac{t-1}{t} \matC_{t-1} + \frac{1}{t} \matC(\vecy_t, \vecpred_t) = \matC_{t-1} + \frac{1}{t} (\matC(\vecy_t, \vecpred_t) - \matC_{t-1}).
\]
We upper-bound 
\[
\|\matC(\vecy_t, \vecpred_t) - \matC_{t-1}\|^2
\le \max_{\matC_1,\matC_2 \in \confusionmatrixspace} \|\matC_1 - \matC_2\|^2.
\]
We have
\[
\|\matC_1 - \matC_2\|^2 = \|\matC_1\|^2  - 2 \matC_1 \cdot \matC_2 + \|\matC_2\|^2
\le \|\matC_1\|^2 + \|\matC_2\|^2 \le \|\matC_1\|_1 + \|\matC_2\|_1,
\]
where the first inequality follows from the fact that confusion matrix has nonnegative entries,
while the second inequality follows from the fact that all entries are at most 1 (so that the sum of squares of the entries
is not larger than the sum of their absolute values). For multi-class classification $\|\matC\|_1 = 1$ for any $\matC \in \confusionmatrixspace$, while for multi-label
classification, $\|\matC\|_1 = m$ for any $\matC \in \confusionmatrixspace$. Thus, using $a = 1$ for multi-class, and $a = m$
for multi-label classification, we have:
\[
\psi(\matC_t) \ge \psi(\matC_{t-1}) + \frac{1}{t} \nabla \psi(\matC_{t-1}) \cdot (\matC(\vecy_t, \vecpred_t) - \matC_{t-1}) - 
\frac{M a}{t^2}.
\]
Multiplying both sides of the inequality by $(-1)$ followed by adding $\psi(\matC^{\star})$ gives:
\begin{align*}
\psi(\matC^{\star}) - \psi(\matC_t)
&\le \psi(\matC^{\star}) - \psi(\matC_{t-1})
+ \frac{1}{t} \nabla \psi(\matC_{t-1}) \cdot (\matC_{t-1} - \matC(\vecy_t, \vecpred_t)) + 
\frac{M a}{t^2} \\
&= \psi(\matC^{\star}) - \psi(\matC_{t-1})
+ \frac{1}{t} \nabla \psi(\matC_{t-1}) \cdot (\matC_{t-1} - \matC^{\star})
+ \frac{1}{t} \nabla \psi(\matC_{t-1}) \cdot (\matC^{\star} - \matC(\vecy_t, \vecpred_t)) + 
\frac{M a}{t^2} \\
&\le \frac{t-1}{t} \left(\psi(\matC^{\star}) - \psi(\matC_{t-1})\right)
+ \frac{1}{t} \nabla \psi(\matC_{t-1}) \cdot (\matC^{\star} - \matC(\vecy_t, \vecpred_t)) + 
\frac{M a}{t^2} 
\end{align*}
where in the last inequality we used \eqref{eq:from_concavity}. Multiplying both sides by $t$ and denoting
$A_t = t(\psi(\matC^{\star}) - \psi(\matC_t))$, we get
\[
A_t \le A_{t-1} + \nabla \psi(\matC_{t-1}) \cdot (\matC^{\star} - \matC(\vecy_t, \vecpred_t)) + 
\frac{M a}{t}.
\]
Now, let $\EE_{(\vecx_t,\vecy_t)}[\cdot]$ denote the 
expectation with respect to $(\vecx_t,\vecy_t)$
(formally: conditioned on all the past $(\vecx^{t-1},\vecy^{t-1})$). Since $\matC_{t-1}$ does not depend on $(\vecx_t,\vecy_t)$,
$\EE_{(\vecx_t,\vecy_t)}[\matC_{t-1}] = \matC_{t-1}$ (and similarly for $A_{t-1}$). Applying this expectation to both sides of the inequality above
gives:
\begin{equation}
\EE_{(\vecx_t,\vecy_t)}[A_t] \le A_{t-1} + \EE_{(\vecx_t,\vecy_t)} \left[\nabla \psi(\matC_{t-1}) \cdot (\matC^{\star} - \matC(\vecy_t, \vecpred_t))\right]
+ \frac{M a}{t}.
\label{eq:regret_bound_intermediate_1}
\end{equation}
We proceed with further bounding the middle term on the right-hand side of \eqref{eq:regret_bound_intermediate_1}. Using the rule of conditional
expectation, $\EE_{(\vecx_t,\vecy_t)}[\cdot] = \EE_{\vecx_t}[\EE_{\vecy_t \sim \veceta(\vecx_t)}[\cdot]]$,
\begin{align*}
\EE_{(\vecx_t,\vecy_t)} \left[\nabla \psi(\matC_{t-1}) \cdot \matC(\vecy_t, \vecpred_t)\right]
&= \EE_{\vecx_t} \left[\nabla \psi(\matC_{t-1}) \cdot \EE_{\vecy_t \sim \veceta(\vecx_t)}\left[\matC(\vecy_t, \vecpred_t)\right]\right] \\
\text{(from \eqref{eq:widetildeF})}\quad &\ge
\EE_{\vecx_t} \left[\max_{\vecpred} \left\{\nabla \psi(\matC_{t-1}) \cdot \EE_{\vecy_t \sim \veceta(\vecx_t)}\left[\matC(\vecy_t, \vecpred)\right] \right\}\right] - 2 b L \EE_{\vecx_t} \left[ \|\veceta(\vecx_t) - \empveceta_t(\vecx_t)\| \right],
\end{align*}
or, after rearranging,
\begin{equation}
\EE_{\vecx_t} \left[\max_{\vecpred} \left\{\nabla \psi(\matC_{t-1}) \cdot \EE_{\vecy_t \sim \veceta(\vecx_t)}\left[\matC(\vecy_t, \vecpred)\right] \right\}\right] \le \EE_{(\vecx_t,\vecy_t)} \left[\nabla \psi(\matC_{t-1}) \cdot \matC(\vecy_t, \vecpred_t)\right] + 
2 b L \EE_{\vecx_t} \left[ \|\veceta(\vecx_t) - \empveceta_t(\vecx_t)\| \right]
\label{eq:regret_bound_intermediate_2}
\end{equation}
Using the fact that neither $\matC_{t-1}$ nor $\matC^{\star}$ depend on $(\vecx_t,\vecy_t)$ (here we employ the fact that $\matC^{\star}$
is the population optimal matrix, which does not depend on any data), we rewrite:
\begin{align*}
\EE_{(\vecx_t,\vecy_t)} \left[\nabla \psi(\matC_{t-1}) \cdot \matC^{\star} \right] &=
\nabla \psi(\matC_{t-1}) \cdot \matC^{\star} \\
(\text{from \eqref{eq:C_star}})\quad &=
\nabla \psi(\matC_{t-1}) \cdot \EE_{(\vecx,\vecy)}[\matC(\vecy,\vech^{\star}(\vecx)] \\
(\text{$(\vecx,\vecy)$ has the same distr. as $(\vecx_t,\vecy_t)$})\quad &=
\nabla \psi(\matC_{t-1}) \cdot \EE_{(\vecx_t,\vecy_t)}[\matC(\vecy_t,\vech^{\star}(\vecx_t)] \\
(\text{$\matC_{t-1}$ does not depend on $(\vecx_t,\vecy_t)$})\quad
&= \EE_{\vecx_t} \left[\nabla \psi(\matC_{t-1}) \cdot \EE_{\vecy_t \sim \veceta(\vecx_t)}\left[ \matC(\vecy_t,\vech^{\star}(\vecx_t) \right] \right] \\
&\le \EE_{\vecx_t} \left[\max_{\vecpred} \left\{\nabla \psi(\matC_{t-1}) \cdot \EE_{\vecy_t \sim \veceta(\vecx_t)}\left[\matC(\vecy_t, \vecpred)\right] \right\}\right] \\
(\text{from \eqref{eq:regret_bound_intermediate_2}})\quad &\le
\EE_{(\vecx_t,\vecy_t)} \left[\nabla \psi(\matC_{t-1}) \cdot \matC(\vecy_t, \vecpred_t)\right]
+ 2 b L \EE_{\vecx_t} \left[ \|\veceta(\vecx_t) - \empveceta_t(\vecx_t)\| \right].
\end{align*}
Thus, the middle term on the right-hand side of \eqref{eq:regret_bound_intermediate_1} can be bounded by
\[
\EE_{(\vecx_t,\vecy_t)} \left[\nabla \psi(\matC_{t-1}) \cdot (\matC^{\star} - \matC(\vecy_t, \vecpred_t))\right]
\le 2 b L \EE_{\vecx_t} \left[ \|\veceta(\vecx_t) - \empveceta_t(\vecx_t)\| \right],
\]
which, after substituting to \eqref{eq:regret_bound_intermediate_1} and taking the expectation on both sides with respect
to all the past $(\vecx^{t-1},\vecy^{t-1})$,
\[
\EE[A_t] \le \EE[A_{t-1}] + 2 b L \EE\left[ \|\veceta(\vecx_t) - \empveceta_t(\vecx_t)\| \right]
+ \frac{M a}{t}.
\]
Applying the above to $t=n$, and recursively expanding,
\begin{align}
\EE[A_n] &\le \EE[A_{n-1}] + 2 b L \EE\left[ \|\veceta(\vecx_n) - \empveceta_n(\vecx_n)\| \right]
+ \frac{M a}{n} \nonumber \\
&\le \EE[A_{n-2}] + 2 b L \EE\left[ \|\veceta(\vecx_{n-1}) - \empveceta_{n-1}(\vecx_{n-1})\| \right]
+ 2 b L \EE\left[ \|\veceta(\vecx_n) - \empveceta_n(\vecx_n)\| \right]
+ \frac{M a}{n} + \frac{M a}{n-1} \nonumber \\
&\le \ldots \le \EE[A_0] + 2 b L \sum_{t=1}^n \EE\left[ \|\veceta(\vecx_t) - \empveceta_t(\vecx_t)\| \right]
+ M a\underbrace{\sum_{t=1}^n \frac{1}{t}}_{H_n} \nonumber \\
&\le \EE[A_0] + 2 b L \sum_{t=1}^n \EE\left[ \|\veceta(\vecx_t) - \empveceta_t(\vecx_t)\| \right]
+ M a (1 + \ln n),
\label{eq:regret_bound_intermediate_3}
\end{align}
where we bounded the harmonic number $H_n=\sum_{t=1}^n t^{-1}$ by $1 + \ln n$.
Recall that $A_t = t(\psi(\matC^{\star}) - \psi(\matC_t))$, which gives $A_0=0$ (our recursion
\eqref{eq:regret_bound_intermediate_1} was also valid for $t=1$). 
After dividing both sides of \eqref{eq:regret_bound_intermediate_3} by $n$ and using \eqref{eq:regret_upperbound_with_population_optimal_cm}
we get
\[
\regret_n \le \frac{M a (1+ \ln n)}{n} + \frac{2 b L}{n} \sum_{t=1}^n \EE\left[ \|\veceta(\vecx_t) - \empveceta_t(\vecx_t)\| \right],
\]
which was to be shown. \qed

\subsection{Non-smooth utilities}
\label{app:non-smooth}

While all commonly used utilities are smooth functions of confusion matrix, 
it is possible to modify the \OMMA{} algorithm so that it achieves 
vanishing regret even for a non-smooth utility $\psi$, 
but at a price of a slower convergence rate $O(\sqrt{(\ln n) / n})$. 
For the sake of clarity, here we give a regret bound for the case when the time horizon $n$ is known ahead of time.

The modification of the algorithm is based on optimizing a \emph{smoothed} version of $\psi$.

A collection of convex functions $\{\psi_{\gamma} \colon \gamma > 0 \}$ is called a $\gamma$-smoothing of a convex and closed function $\psi$ with parameters 
$(\alpha, \beta)$ if $0 \le \psi(\matC) - \psi_{\gamma}(\matC) \le \alpha \gamma$ for all $\matC \in \confusionmatrixspace$, and $\psi_{\gamma}$ is $\frac{\beta}{\gamma}$-smooth \citep{BeckTeboulle2012}. That is,
$\gamma$ dictates a trade-off between the smoothness of $\psi_{\gamma}$ and its deviation from the original function $\psi$. 

Running the \OMMA{} algorithm
with $\psi$ replaced by its $\frac{\beta}{\gamma}$-smoothed (concave) $\psi_{\gamma}$ results in the regret bound from Theorem \ref{thm:regret}:
\[
\regret^{\gamma}_n \le 
\frac{\beta a (1+ \ln n)}{\gamma n} + \frac{2 b L_{\gamma}}{n} \sum_{t=1}^n \EE\left[ \|\veceta(\vecx_t) - \empveceta_t(\vecx_t)\| \right]
\]
where $\regret^{\gamma}_n = \EE\left[\psi_{\gamma}(\matC(\vecy^n,\vech^{\star}(\vecx^n)))\right]  - \EE\left[\psi_{\gamma}(\matC(\vecy^n,\vecpred^n))\right]$
is the regret defined in terms of $\psi_{\gamma}$ rather than the original function $\psi$, while $L_{\gamma}$ is the Lipschitz constant
of $\psi_{\gamma}$.
Using the property of $\gamma$-smoothing
$0 \le \psi(\matC) - \psi_{\gamma}(\matC) \le \alpha \gamma$
twice to $\matC=\matC(\vecy^n,\vecpred^n)$ and $\matC=\matC(\vecy^n,\vech^{\star}(\vecx^n))$ we relate the regret with respect to the original
utility with that of its smoothed version:
\[
\regret_n = \EE\left[\psi(\matC(\vecy^n,\vech^{\star}(\vecx^n)))\right]  - \EE\left[\psi(\matC(\vecy^n,\vecpred^n))\right]
\le \EE\left[\psi_{\gamma}(\matC(\vecy^n,\vech^{\star}(\vecx^n)))\right]  - \EE\left[\psi_{\gamma}(\matC(\vecy^n,\vecpred^n))\right]
+ \alpha \gamma = \regret^{\gamma}_n + \alpha \gamma.
\]
Thus the resulting bound on the $\psi$-regret of the algorithm is:
\[
\regret_n \le
\frac{\beta a (1+ \ln n)}{\gamma n} + \frac{2 b L_{\gamma}}{n} \sum_{t=1}^n \EE\left[ \|\veceta(\vecx_t) - \empveceta_t(\vecx_t)\| \right]
+ \alpha \gamma.
\]
The optimal choice of $\gamma$ (i.e., the one minimizing the right-hand side of the bound) is
$\gamma = \sqrt{\frac{a \beta (1+\ln n)}{\alpha n}}$, which results in:
\begin{equation}
\regret_n \le 2 \sqrt{\frac{a \alpha \beta (1+\ln n)}{n}}
+ \frac{2 b L_{\gamma}}{n} \sum_{t=2}^n \EE\left[ \|\veceta(\vecx_t) - \empveceta_t(\vecx_t)\| \right].
\label{eq:regret_bound_non_smooth}
\end{equation}
An example of $\gamma$-smoothing is the \emph{infimal convolution smoothing} with Moreau-Yoshida regularization \citep{Beck2017}:
\[
\psi_{\gamma}(\matC) = \max_{\matC'} \left\{ \psi(\matC') - \frac{1}{2\gamma} \|\matC - \matC'\|^2\right\},
\]
which has parameters $\alpha = \frac{L^2}{2}$, $\beta = 1$, and $L_{\gamma} = L$, where $L$ is the Lipschitz constant of $\psi$. Using it
in the algorithm, the bound \eqref{eq:regret_bound_non_smooth} becomes:
\[
\regret_n \le L \sqrt{\frac{2a (1+\ln n)}{n}}
+ \frac{2 b L}{n} \sum_{t=2}^n \EE\left[ \|\veceta(\vecx_t) - \empveceta_t(\vecx_t)\| \right].
\]

\section{The adversarial setting is too difficult}
\label{sec:adversarial}

Consider the following variation of the problem. At trials $t=1,\ldots,n$:
\begin{itemize}
    \item The algorithm receives input instance $\vecx_t$ together with its conditional label probability vector
    $\veceta_t \equiv \veceta(\vecx_t)$
    \item The algorithm issues prediction $\vecpred_t$
    \item The label $\vecy_t$ is drawn from $\veceta_t$ and revealed to the algorithm
\end{itemize}
Note that we allow the algorithm to know \emph{exact} conditional probabilities (rather than only their estimate $\empveceta_t$), but
we do not make any assumptions about the sequence of inputs $\vecx^n$ (in particular, we do not assume they are generated i.i.d.).
At the end of the sequence, the algorithm evaluated by the expected utility, $\EE_{\vecy^n \sim \veceta^n} \psi(\vecy^n, \vecpred^n)$,
and is compared with the set of predictions maximizing the expected utility:
\begin{equation}
\optvecy{}^n = \argmax_{\vecpred^n} \EE_{\vecy^n \sim \veceta^n} \left[ \psi(\vecy^n, \vecpred^n) \right].
\label{eq:optimal_adversary}
\end{equation}
The regret is given by:
\[
R_n = \EE\left[ \psi(\vecy^n, \optvecy{}^n) \right] - \EE \left[ \psi(\vecy^n, \vecpred^n)\right],
\]
where we dropped the subscript in expectation as clear from the context.
We will show that in this setting, any algorithm has a regret which is at least $\frac{1}{36} - o(1)$.
To this end, take a binary classification setting ($m=2$). As in the example in Section \ref{sec:algorithm},
to simplify the presentation we turn the one-hot vector notation into scalars, and repeat the translation here
for the sake of convenience.
Let $y_t \in \{0,1\}$ denote a label at time $t$ (where $y_t=0$ corresponds to
$\vecy_t=(1,0)$ in the one-hot notation, while $y_t=1$ to $\vecy_t=(0,1)$). Similarly,
let $\pred_t \in \{0,1\}$ denote the prediction.
The conditional probability is $\eta_t = P(y_t =1 | \vecx_t)$
(which corresponds to $\eta_1(\vecx_t)$ in the vector notation).
The confusion matrix $\matC_t := \matC(\vecy^t,\vecpred^t)$
consists of four entries $C_{t-1,j\ell} = \sum_{i \le t} \bm{1}\{y_i=j\}\bm{1}\{\pred_i=\ell\}$,
$j,\ell \in \{0,1\}$ (true negatives, false positives, false negatives, true positives).

We will consider the following utility:
\[
\psi(\matC) = \min\{C_{00}, C_{11}\},
\]
which is concave and 1-Lipschitz. While $\psi$ is non-smooth, we can show a similar bound
with a smoothed version of it, however, it would require much more involved analysis.
Given predictions $\pred^n$, 
the expected value (with respect to $y^n$)
of $n C_{n,11}$ can be calculated as:
\[
n \EE[C_{n,11}] = (n-1) \EE[C_{n-1,11}] + 
\EE[\pred_t \EE_{y_t}[y_t]] = (n-1) \EE[C_{n-1,11}] + \EE[\pred_t\eta_t] = \ldots = \EE \left[\sum_t \pred_t \eta_t\right],
\]
where we used the fact that $\pred_t$ does not depend on $y_t$. A similar method can be applied to $n C_{n,00}$. Thus, we get:
\begin{equation}
\EE[C_{n,11}] = \EE \left[\frac{1}{n}\sum_{t=1}^n \pred_t \eta_t\right], \qquad
\EE[C_{n,00}] = \EE \left[\frac{1}{n}\sum_{t=1}^n (1-\pred_t) (1-\eta_t) \right].
\label{eq:expected_conf_matrix_elements}
\end{equation}
Now, as $\psi$ is concave, we have 
\begin{equation}
\EE\left[\psi(\matC_n)\right] \le \psi\left(\EE\left[\matC_n\right]\right),
\label{eq:concavity_adversarial}
\end{equation}
and the regret can be lower-bounded by:
\[
R_n \ge \EE\left[ \psi(y^n, \opty{}^n) \right] - \psi\left(\EE\left[\matC_n\right]\right).
\]
We now turn into defining a sequence of outcomes.
Since the only property of inputs $\vecx_t$ used by the algorithm (and appearing in the regret)
is the conditional probability $\eta_t$, we will directly operate on sequences of $\eta_t$'s. Let $n=2m$ be even,
and suppose the algorithm observes for the first half of the game $\eta_1=\eta_2 = \ldots = \eta_m = \frac{2}{3}$.
In the second half of the game, the algorithm will observe one of the two sequences:
\begin{enumerate}
    \item either $\eta_{m+1} = \eta_{m+2} = \ldots = \eta_n = \frac{2}{3}$ (sequence $\bm{1}$),
    \item or $\eta_{m+1} = \eta_{m+2} = \ldots = \eta_n = \frac{1}{3}$ (sequence $\bm{2}$).
\end{enumerate}
We will exploit the fact that the algorithm up to iteration $m$ does not know which sequence it should prepare for, while the optimal predictions
for both sequences are substantially different.
Let $S_1 = \EE\left[m^{-1} \sum_{t=1}^m \pred_t\right]$ and $S_2 = \EE\left[m^{-1} \sum_{t=m+1}^n \pred_t\right]$ be the expected average
values of the predictions of the algorithm in the first half and second half of the sequence, respectively. Using \eqref{eq:expected_conf_matrix_elements},
and the fact the first $m$ elements of the sequence are all equal to $\eta_1$, while the last $m$ elements are all equal to $\eta_n$, we have
\[
\EE[C_{n,11}] = \frac{1}{n} m S_1 \eta_1 + m S_2 \eta_n = \frac{S_1\eta_1 + S_2\eta_n}{2},
\]
and similarly $\EE[C_{n,10}] = \frac{(1-S_1)(1-\eta_1) + (1-S_2) (1-\eta_n)}{2}$. Using \eqref{eq:concavity_adversarial}, 
the performance of the algorithm for
sequence $\bm{1}$ ($\eta_1 = \eta_n = \frac{2}{3}$) is at most:
\begin{equation}
\text{(sequence $\bm{1}$)}\quad\EE\left[\psi(\matC_n)\right] \le
\min\left\{\frac{1}{3}(S_1 + S_2), \frac{1}{6}(2-S_1-S_2)\right\},
\label{eq:first_sequence_perf_alg_1}
\end{equation}
while for sequence $\bm{2}$ ($\eta_1 = \frac{2}{3}, \eta_n = \frac{1}{3}$) it is at most:
\begin{equation}
\text{(sequence $\bm{2}$)}\quad\EE\left[\psi(\matC_n)\right] \le
 \min\left\{\frac{1}{6}(2 S_1 + S_2), \frac{1}{6}(3-S_1- 2 S_2)\right\}
\label{eq:second_sequence_perf_alg_1}
\end{equation}
Note that we must have $0 \le S_1, S_2 \le 1$ as these corresponds to expected averages of predictions of the algorithm.
Furthermore $S_2$ can be optimally chosen by the algorithm \emph{separately} for case $\bm{1}$ and $\bm{2}$, as it corresponds
to the second half of predictions, 
when the algorithm already knows whether the sequence is of type $\bm{1}$ or $\bm{2}$. 
In both cases, the optimal choice (maximizing the right-hand sides of \eqref{eq:first_sequence_perf_alg_1}-\eqref{eq:second_sequence_perf_alg_1}) is to
make both terms under min equal. For sequence $\bm{2}$ it is always possible:
\[
\frac{1}{6}(2 S_1 + S_2) =  \frac{1}{6}(3-S_1- 2 S_2) \quad \Rightarrow \quad S_2 = 1-S_1,
\]
which leads to a bound:
\begin{equation}
\text{(sequence $\bm{2}$)}\quad\EE\left[\psi(\matC_n)\right] \le \frac{1}{6}(1+S_1)
\label{eq:second_sequence_perf_alg_2}
\end{equation}
For sequence $\bm{1}$ we get:
\[
\frac{1}{3}(S_1 + S_2) = \frac{1}{6}(2-S_1-S_2) \quad \Rightarrow \quad S_2 = \frac{2}{3} - S_1,
\]
which is only possible when $S_1 \le \frac{2}{3}$. Thus, we set $S_2 = \max\left\{0, \frac{2}{3} - S_1\right\}$, which gives
\begin{equation}
\text{(sequence $\bm{1}$)}\quad\EE\left[\psi(\matC_n)\right] \le
\min\left\{\frac{2}{9}, \frac{2-S_1}{6}\right\},
\label{eq:first_sequence_perf_alg_2}
\end{equation}

We will lower bound the value of the optimal utility by simply picking a particular set of predictions $\opty{}^n$ rather than
using \eqref{eq:optimal_adversary}. Assume $n$ is divisible by $3$. For sequence $\bm{1}$, set $\opty_t=1$ in the first one-third of
the sequence, while $\opty_t = 0$ in the last two-thirds of the sequence. Let $Y_1 = \sum_{t=1}^{n/3} y_i$ and $Y_2 = \sum_{t=n/3}^n y_i$.
We get $C_{11}^{\star} := C_{11}(y^n, \opty{}^n) = Y_1/n$ and $C_{00}^{\star} = Y_2/n$.
Note that $Y_1 \sim \mathrm{Binomial}(n/3, 2/3)$, $Y_2 \sim \mathrm{Binomial}(2n/3, 1/3)$, both having the same expectation
$\EE[Y_1]=\EE[Y_2]=\frac{2n}{9}$. Using the fact that $Y_i - \EE[Y_i]$, $i=1,2$, are subgaussian (as being binomial), we have
for any $s \in \mathbb{R}$:
\[
\EE\left[e^{s (Y_1 - \EE[Y_1])}\right] \le e^{\frac{n s^2}{24}}, \qquad
\EE\left[e^{s (Y_2 - \EE[Y_2])}\right] \le e^{\frac{n s^2}{12}},
\]
which we can plug into a standard derivation of a bound for the minimum of subgaussian variables. First note that:
\[
\EE\left[\min\{Y_1,Y_2\}\right] = \frac{2n}{9} - \EE\left[\max\{\mathbb{E}[Y_1] - Y_1, \mathbb{E}[Y_1] - Y_2\} \right].
\]
To bound the maximum, we have for any $s > 0$:
\begin{align*}
\EE\left[\max\{\mathbb{E}[Y_1] - Y_1, \mathbb{E}[Y_1] - Y_2\} \right]
&= \frac{1}{s} \EE\left[\ln e^{s \max\{\mathbb{E}[Y_1] - Y_1, \mathbb{E}[Y_1] - Y_2\}} \right] \\
\text{(Jensen's ineq.)} \quad &\le \frac{1}{s} \ln\EE\left[ e^{s \max\{\mathbb{E}[Y_1] - Y_1, \mathbb{E}[Y_1] - Y_2\}} \right] 
\le \frac{1}{s} \ln\EE\left[ e^{s (\mathbb{E}[Y_1] - Y_1)} + e^{s (\mathbb{E}[Y_1] - Y_1)} \right] \\
&\le \frac{1}{s} \ln \left(e^{\frac{n s^2}{24}} + e^{\frac{n s^2}{12}}\right)
\le \frac{1}{s} \ln \left(2e^{\frac{n s^2}{12}}\right) = \frac{\ln2}{s} + \frac{ns}{12} \\
\left(s=\sqrt{\frac{12 \ln2 }{n}}\right) \quad &=2 \sqrt{\frac{\ln 2}{12 n}} \le \frac{\sqrt{n}}{2}.
\end{align*}
Thus, $\EE\left[\min\{Y_1,Y_2\}\right] \ge \frac{2n}{9} - \frac{\sqrt{n}}{2}$, and 
$\EE\left[\psi(\matC^{\star})\right] \ge \frac{2}{9} - \frac{1}{2\sqrt{n}}$.

For sequence $\bm{2}$ we set $\opty_t=1$ in the first half of the sequence and $\opty_t=0$ in the second half of the sequence.
We get $C_{11}^{\star} := Y_1/n$ and $C_{00}^{\star} = Y_2/n$, where $Y_1 \sim \mathrm{Binomial}(n/2, 2/3)$, $Y_2 \sim \mathrm{Binomial}(n/2, 2/3)$
with $\EE[Y_1]=\EE[Y_2]=\frac{n}{3}$. Proceeding similarly as in the previous case, we get
$\EE\left[\psi(\matC^{\star})\right] \ge \frac{1}{3} - \frac{1}{2\sqrt{n}}$.

Using \eqref{eq:first_sequence_perf_alg_2} and \eqref{eq:second_sequence_perf_alg_2}, the regret of the algorithm can
be lower-bounded for the sequence $\bm{1}$ and $\bm{2}$, respectively, as:
\begin{align*}
\regret_{\bm{1}} &\ge \frac{2}{9} - \min\left\{\frac{2}{9}, \frac{2-S_1}{6}\right\} - \frac{1}{2\sqrt{n}}
= \max\left\{0, \frac{S_1}{6} - \frac{1}{9} \right\} - \frac{1}{2\sqrt{n}}, \\
\regret_{\bm{2}} &\ge \frac{1}{3} - \frac{1}{6}(1+S_1) - \frac{1}{2\sqrt{n}}
= \frac{1-S_1}{6} - \frac{1}{2\sqrt{n}}.
\end{align*}
Now, the crucial point is that the algorithm \emph{cannot} tune $S_1$ separately for both cases, as it does not know
which of the two sequences will occur. Since the adversary will select a sequence which incurs more regret to the algorithm,
the best choice of $S_1$ is to make the regret (bound) equal in both cases. Clearly, the algorithm will choose $S_1 \ge \frac{2}{3}$,
as any smaller value does not change the regret bound for sequence $\bm{1}$ (it is already zero), while it only increases the regret bound
for sequence $\bm{2}$. For $S_1 \ge \frac{2}{3}$, we can equalize the bounds in both cases by setting:
\[
\frac{S_1}{6} - \frac{1}{9} = \frac{1-S_1}{6} \quad \Rightarrow \quad S_1 = \frac{5}{6},
\]
which gives the worse-case regret at least $\frac{1}{36} - \frac{1}{2\sqrt{n}}$. Thus, no matter how large $n$ is, the regret
does not decrease to zero.

\section{\OMMAeta{} algorithm}
\label{app:OMMA_eta}

In this section, we prove a regret bound for \OMMAeta{} algorithm defined
as Algorithm \ref{alg:OMMA_eta}, in the case when the CPE is exact, that
is $\empveceta_t \equiv \veceta$ for all $t$. 
The bound is of the same order as that of \OMMA{} in Theorem \ref{thm:regret}
under the same assumptions. We assume familiarity of the reader
with the proof of Theorem \ref{thm:regret} in Section \ref{app:regret_bounds}.

The algorithm updates its parameter matrix according to
\[
\matC_t = \frac{t-1}{t} \matC_{t-1} + \frac{1}{t}\matC(\veceta_t,\vecpred_t),
\]
with prediction give by
\begin{equation}
\vecpred_t = \argmax_{\vecpred} \nabla \psi(\matC_{t-1}) \cdot 
     \matC(\veceta_t,\vecpred),
\label{eq:OMMA_eta_prediction}
\end{equation}
where $\matC(\veceta_t,\vecpred_t) = \EE_{\vecy_t \sim \veceta_t} \left[\matC(\vecy_t, \vecpred_t)\right]$,
and we abbreviate $\veceta_t = \veceta_t(\vecx_t)$. Note that neither algorithm's predictions $\vecpred_t$
nor its parameters $\matC_t$ depend on the labels.
We remind that $\matC_n = \matC(\textcolor{black!40!red}{\veceta^n}, \vecpred^n)$ \emph{does not correspond}
to the confusion matrix of the algorithm $\matC(\textcolor{black!40!red}{\vecy^n}, \vecpred^n)$, as it does not contain labels.
We can however interpret $\matC_n$ as the \emph{expected algorithm's confusion matrix} $\matC(\vecy^n, \vecpred^n)$
with labels $\vecy_t$ drawn from $\datadistribution(\cdot | \vecx_t)$, as:
\[
\matC_n = \sum_{t=1}^n \matC(\veceta_t,\vecpred_t)
= \sum_{t=1}^n \EE_{\vecy_t | \vecx_t} [\matC(\vecy_t,\vecpred_t)]
= \EE_{\vecy^n | \vecx^n} \left[\sum_{t=1}^n \matC(\vecy_t,\vecpred_t) \right]
= \EE_{\vecy^n | \vecx^n} \left[ \matC(\vecy^n, \vecpred^n) \right].
\]
Using the smoothness of $\psi$, for any $\vecy^n$,
we have \citep{bubeck2014convex}
\[
\psi(\matC(\vecy^n, \vecpred^n)) \ge \psi(\matC_n) + \nabla \psi(\matC_n) \cdot (\matC(\vecy^n, \vecpred^n) - \matC_n)
- \frac{M}{2} \|\matC(\vecy^n, \vecpred^n) - \psi(\matC_n)\|^2,
\]
By taking expectation over $\vecy^n | \vecx^n$ on both sides and using the fact proven above that
$\matC_n = \EE_{\vecy^n | \vecx^n} \left[ \matC(\vecy^n, \vecpred^n) \right]$, the gradient
term on the right-hand side disappears, while the quadratic term becomes the total 
\emph{variance} of $\matC(\vecy^n, \vecpred^n)$:
\begin{equation}
\EE_{\vecy^n | \vecx^n} [\psi(\matC(\vecy^n, \vecpred^n))]
\ge \psi(\matC_n) - \frac{L}{2} \mathrm{Var}_{\vecy^n | \vecx^n}\left(\matC(\vecy^n, \vecpred^n)\right),
\label{eq:OMMA_eta_proof_eq_1}
\end{equation}
where we define $\mathrm{Var}(\matC) = \EE[\|\matC - \EE[\matC]\|^2]$ to be the total variance of all entries of $\matC$. Since
$\matC(\vecy^n, \vecpred^n) = \frac{1}{n} \sum_{t=1}^n \matC(\vecy_t, \vecpred_t)$, with each $\vecy_t$
independent of each other we have, from the scaling property of variance and its additivity for independent random variables,
\[
\mathrm{Var}_{\vecy^n | \vecx^n}\left(\matC(\vecy^n, \vecpred^n)\right)
= \frac{1}{n^2} \sum_{t=1}^n \mathrm{Var}_{\vecy^t | \vecx^t} \left(\matC(\vecy_t, \vecpred_t)\right)
\le \frac{2 a}{n},
\]
where we bounded each term in the sum by $\max_{\matC_1, \matC_2 \in \confusionmatrixspace} \|\matC_1 - \matC_2\|^2 \le 2 a$,
where $a = 1$ for multi-class classification and $a=m$ for multi-label classification (see the proof of Theorem \ref{thm:regret} in Appendix
\ref{app:regret_bounds}). Using the bound on the variance in \eqref{eq:OMMA_eta_proof_eq_1} and taking expectation with respect 
to $\vecx^n$ on both sides gives
\begin{equation}
\EE[\psi(\matC(\vecy^n, \vecpred^n))] \ge \EE_{\vecx^n}[\psi(\matC_n)] - \frac{Ma}{n}
\label{eq:OMMA_eta_proof_eq_2}
\end{equation}
Let $\vech^{\star}$ be the optimal classifier and let 
\begin{equation}
\matC^{\star} = \EE_{(\vecx, \vecy)}\left[ \matC(\vecy,\vech^{\star}(\vecx)) \right]
= \EE_{\vecx} \left[\matC(\veceta(\vecx), \vech^{\star}(\vecx)) \right]
\label{eq:C_star_OMMA_eta}
\end{equation}
be its population confusion matrix. From the concavity of $\psi$, 
$\EE \left[ \psi(\matC(\vecy^n,\vech^{\star}(\vecx^n))) \right]
\le \psi\left(\EE\left[ \matC(\vecy^n,\vech^{\star}(\vecx^n)) \right] \right)
= \psi\left(\matC^{\star} \right)$, which, together with \eqref{eq:OMMA_eta_proof_eq_2}
gives an upper bound on the regret:
\begin{equation}
\regret_n
= \EE \left[ \psi(\matC(\vecy^n,\vech^{\star}(\vecx^n))) \right] - \EE[\psi(\matC(\vecy^n, \vecpred^n))]
\le \underbrace{\psi(\matC^{\star}) - \EE_{\vecx^n}[\psi(\matC_n)]}_{=: Q_n} + \frac{Ma}{n}.
\label{eq:regret_bound_OMMA_eta_Q}
\end{equation}
The rest of the proof derives a bound on $Q_n$.
From the concavity of the utility,
\[
\psi(\matC^{\star}) \le \psi(\matC_{t-1}) + \nabla \psi(\matC_{t-1}) \cdot (\matC^{\star} - \matC_{t-1}),
\]
or, after rearranging,
\begin{equation}
\nabla \psi(\matC_{t-1}) \cdot (\matC_{t-1}-\matC^{\star}) \le \psi(\matC_{t-1}) - \psi(\matC^{\star}).
\label{eq:from_concavity_OMMA_eta}
\end{equation}
Using the smoothness of $\psi$,
\begin{align}
\psi(\matC_t) &\ge \psi(\matC_{t-1}) + \nabla \psi(\matC_{t-1}) \cdot (\matC_t - \matC_{t-1}) - \frac{M}{2} \|\matC_t - \matC_{t-1}\|_1^2 \nonumber \\
&= \psi(\matC_{t-1}) + \frac{1}{t} \nabla \psi(\matC_{t-1}) \cdot (\matC(\veceta_t, \vecpred_t) - \matC_{t-1}) - 
\frac{M}{2t^2} \|\matC(\veceta_t, \vecpred_t) - \matC_{t-1}\|^2 \nonumber \\
&\ge \psi(\matC_{t-1}) + \frac{1}{t} \nabla \psi(\matC_{t-1}) \cdot (\matC(\veceta_t, \vecpred_t) - \matC_{t-1}) - \frac{Ma}{2t^2},
\label{eq:OMMA_eta_proof_eq_3}
\end{align}
where in the first equality we used
\[
\matC_t = \frac{t-1}{t} \matC_{t-1} + \frac{1}{t} \matC(\veceta_t, \vecpred_t) = \matC_{t-1} + \frac{1}{t} (\matC(\veceta_t, \vecpred_t) - \matC_{t-1}),
\]
while in the second inequality we bounded $\|\matC(\veceta_t, \vecpred_t) - \matC_{t-1}\|^2 \le \max_{\matC_1, \matC_2 \in \confusionmatrixspace} \|\matC_1 - \matC_2\|^2 \le 2 a$.
Multiplying both sides of \eqref{eq:OMMA_eta_proof_eq_3} by $(-1)$ and adding $\psi(\matC^{\star})$ gives:
\begin{align*}
\psi(\matC^{\star}) - \psi(\matC_t)
&\le \psi(\matC^{\star}) - \psi(\matC_{t-1})
+ \frac{1}{t} \nabla \psi(\matC_{t-1}) \cdot (\matC_{t-1} - \matC(\veceta_t, \vecpred_t)) + 
\frac{M a}{t^2} \\
&= \psi(\matC^{\star}) - \psi(\matC_{t-1})
+ \frac{1}{t} \nabla \psi(\matC_{t-1}) \cdot (\matC_{t-1} - \matC^{\star})
+ \frac{1}{t} \nabla \psi(\matC_{t-1}) \cdot (\matC^{\star} - \matC(\veceta_t, \vecpred_t)) + 
\frac{M a}{t^2} \\
&\le \frac{t-1}{t} \left(\psi(\matC^{\star}) - \psi(\matC_{t-1})\right)
+ \frac{1}{t} \nabla \psi(\matC_{t-1}) \cdot (\matC^{\star} - \matC(\veceta_t, \vecpred_t)) + 
\frac{M a}{t^2} 
\end{align*}
where in the last inequality we used \eqref{eq:from_concavity_OMMA_eta}. Multiplying both sides by $t$ and denoting
$A_t = t(\psi(\matC^{\star}) - \psi(\matC_t))$, we get
\[
A_t \le A_{t-1} + \nabla \psi(\matC_{t-1}) \cdot (\matC^{\star} - \matC(\veceta_t, \vecpred_t)) + 
\frac{M a}{t}.
\]
Taking expectation with respect to $\vecx_t$ (conditioned on $\vecx^{t-1}$) on both sides and using the fact
than $\matC_{t-1}, \matC^{\star}, A_{t-1}$ do not depend on $\vecx_t$,
\begin{equation}
\EE_{\vecx_t}[A_t] \le A_{t-1} + \underbrace{\nabla \psi(\matC_{t-1}) \cdot \matC^{\star} - \nabla \psi(\matC_{t-1}) \cdot \EE_{\vecx_t} \left[\matC(\veceta_t, \vecpred_t))\right]}_{B_t}
+ \frac{M a}{t}.
\label{eq:regret_bound_intermediate_1_OMMA_eta}
\end{equation}
We show that $B_t$ is nonpositive.
Using \eqref{eq:C_star_OMMA_eta},
\begin{align*}
\nabla \psi(\matC_{t-1}) \cdot \matC^{\star} &=
\nabla \psi(\matC_{t-1}) \cdot \EE_{\vecx} \left[\matC(\veceta(\vecx), \vech^{\star}(\vecx)) \right] \\
(\text{$\vecx$ has the same distr. as $\vecx_t$})\quad
&= \nabla \psi(\matC_{t-1}) \cdot \EE_{\vecx_t} \left[\matC(\veceta_t, \vech^{\star}(\vecx_t)) \right] \\
&=\EE_{\vecx_t} \left[ \nabla \psi(\matC_{t-1}) \cdot \matC(\veceta_t, \vech^{\star}(\vecx_t)) \right] \\
(\text{from \eqref{eq:OMMA_eta_prediction}})\quad
&\le \EE_{\vecx_t} \left[ \nabla \psi(\matC_{t-1}) \cdot \matC(\veceta_t, \vecpred_t) \right] \\
&= \nabla \psi(\matC_{t-1}) \cdot \EE_{\vecx_t} \left[\matC(\veceta_t, \vecpred_t))\right],
\end{align*}
which implies $B_t \le 0$. Using this fact and taking the expectation on both sides of \eqref{eq:regret_bound_intermediate_1_OMMA_eta} with respect
to all the past $\vecx^{t-1}$ gives:
\[
\EE[A_t] \le \EE[A_{t-1}] + \frac{M a}{t}.
\]
Taking $t=n$, expanding the recursion down to $A_0=0$, and bounding $\sum_{t=1}^n \frac{1}{t} \le 1 + \ln n$ as in the Appendix \ref{app:regret_bounds}
gives $\EE[A_n] \le Ma (1 + \ln n)$. Since $\EE[A_n] = n Q_n$ as defined in \eqref{eq:regret_bound_OMMA_eta_Q},
we have $Q_n \le \frac{Ma(1+ \ln n)}{n}$ and \eqref{eq:regret_bound_OMMA_eta_Q} gives:
\[
\regret_n \le \frac{Ma(2+ \ln n)}{n}.
\]

\section{An alternative regret bound for \OMMA{} with $\vecp$-dependent Lipschitzness and smoothness constants}
\label{app:alternative_proof}

As mentioned in Section \ref{sec:regret_bounds}, some of the utilities in Table \ref{tab:binary-metrics}, or their
micro- and macro-extensions in multi-label classification, are not globally Lipschitz and smooth with fixed constants $L$ and $M$
over all admissible confusion matrices. One can, however, make them smooth and Lipschitz in a slightly narrower sense,
if one allows these constants to depend on the
\emph{label frequencies} in the confusion matrix, that is the averaged/expected rate of positive labels per class, which leads to a so called
$\vecp$-Lipschitnzess property \cite{Dembczynski_etal_ICML2017}, as well as $\vecp$-smoothness.

To make it formal, define a label frequency vector $\vecp(\matC) \in [0,1]^m$ for \emph{multi-class} confusion matrix $\matC \in [0,1]^{m \times m}$
as $p_k(\matC) = \frac{1}{m} \sum_{\ell=1}^m C_{k\ell}$, while
for \emph{multi-label} confusion matrix $\matC \in [0,1]^{m \times 2 \times 2}$ as $p_k(\matC) = C_{k,11} + C_{k,10}$. Note if $\matC$ is computed over some data distribution $\datadistribution$, 
$p_k(\matC) = \datadistribution(y_k=1)$ is a probability
of class $k$ in the multi-class case, or a marginal probability of label $k$ in the multi-label case. 
For empirical confusion matrix $\matC = \matC(\vecy^n, \vecpred^n)$, $\vecp(\matC)$ is a vector of average counts,
$\vecp(\matC) = \avgy_n := \frac{1}{n} \sum_{i=1}^n \vecy^i$. 
Note that the label frequency vector is independent on the predictions and only depends on the distribution/sequence
of labels.

We say utility $\psi$ is $\vecp$-Lipschitz if for any $\matC,\matC'$, $|\psi(\matC) - \psi(\matC')| \le L(\vecp(\matC)) \|\matC - \matC'\|$,
that is, the Lipschitz constant may depend on $\vecp(\matC)$. Similarly, $\psi$ is $\vecp$-smooth if
$\|\nabla \psi(\matC) - \nabla \psi(\matC')| \le M(\vecp(\matC)) \|\matC - \matC'\|$. Generally these functions
$L(\vecp)$ and $M(\vecp)$ will degrade (increase) when
any of coordinates of $\vecp$ gets close to zero. Therefore, we define:
\[
L_{\gamma} = \max_{\vecp \colon \forall k, p_k \ge \gamma} L(\vecp),
\]
as the largest value of $L(\vecp)$ when all elements of $\vecp$ are separated from $0$ by at least $\gamma$; $M_{\gamma}$ is defined analogously.
In what follows, we assume that $L_{\gamma}$ and $M_{\gamma}$ are well-controlled when $\gamma$ is separated from $0$.

Given a data distribution $\datadistribution$, let $\vecp^{\star} = \EE\left[\vecy\right]$ denote a vector of label (marginal) probabilities,
and we let $\gamma^{\star} = \min_{k} p^{\star}_k$.

\begin{theorem}
Let $\psi$ be concave, $\vecp$-Lipschitz and $\vecp$-smooth. Assume $\psi(\matC) \in [0,1]$ for
all $\matC \in \confusionmatrixspace$. The regret of \OMMA{} algorithm is bounded by:
\[
\regret_n \le O\left(\sqrt{\frac{\log n}{n}}\right) + 2bL_{\frac{\gamma^{\star}}{4}} e_n
\]
where $e_n = \frac{1}{n} \sum_{t=1}^n \EE[\|\veceta(\vecx_t) - \empveceta_t(\vecx_t)\|]$ is the average
estimation error of the CPE,
while the constants under $O(\cdot)$ depend on $m$, $L_{\frac{\gamma^{\star}}{4}}$ and $M_{\frac{\gamma^{\star}}{4}}$.
\label{thm:alternative_regret_bound}
\end{theorem}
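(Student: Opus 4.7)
The strategy is to restrict the analysis to a high-probability event on which the entire path of the algorithm stays in a region where $\psi$ enjoys the uniform Lipschitz and smoothness constants $L_{\gamma^{\star}/4}$ and $M_{\gamma^{\star}/4}$, then replay the argument from the proof of \cref{thm:regret} with these constants in place of $L$ and $M$. The key observation is that the label-frequency vector $\vecp(\matC_t) = \frac{1}{t}\sum_{i=1}^{t} \vecy_i$ does not depend on the algorithm's predictions at all, so it is simply the sample mean of $t$ i.i.d.\ random vectors with common mean $\vecp^{\star}$. This decouples the concentration step from the algorithmic dynamics.

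First I would apply Hoeffding's inequality coordinate-wise, together with a union bound over $t \in \{T_0, \dots, n\}$ and $k \in \{1,\dots,m\}$, to define a good event $E$ of probability at least $1-\delta$ on which $\|\vecp(\matC_t) - \vecp^{\star}\|_\infty \le \tfrac{3}{4}\gamma^{\star}$ for every $t \ge T_0$, where $T_0 = O(\log(nm/\delta)/(\gamma^{\star})^2)$ is the burn-in time. On $E$, every coordinate of $\vecp(\matC_t)$ is at least $\gamma^{\star}/4$, so both $\matC_t$ (for $t \ge T_0$) and the comparator $\matC^{\star}$ --- whose label frequencies are exactly $\vecp^{\star}$ --- lie in the region where the $\vecp$-Lipschitz and $\vecp$-smooth bounds with constants $L_{\gamma^{\star}/4}$ and $M_{\gamma^{\star}/4}$ are available.

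Next I would replay the argument from \cref{app:regret_bounds} starting at $t = T_0$, using these constants in place of the global $L$ and $M$. The key per-step inequalities of that proof --- concavity at $\matC_{t-1}$, the descent-lemma bound on $\psi(\matC_t) - \psi(\matC_{t-1})$, and the ``suboptimality under the true CPE'' estimate \eqref{eq:widetildeF} --- are all deterministic statements that remain valid on $E$ for $t \ge T_0$, since the confusion matrices involved lie in the good region. This yields the same recursion on $A_t := t(\psi(\matC^{\star}) - \psi(\matC_t))$ with constants $L_{\gamma^{\star}/4}$ and $M_{\gamma^{\star}/4}$. Unrolling from $T_0$ to $n$ and using the crude bound $A_{T_0} \le T_0$ (which follows from $\psi \in [0,1]$) gives, conditionally on $E$, a regret of order $T_0/n + M_{\gamma^{\star}/4}\, a\, (\ln n)/n + 2 b L_{\gamma^{\star}/4} e_n$.

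Finally I would combine the two events: on $E^c$, bound the regret trivially by $1$, so $E^c$ contributes at most $\delta$ to the expected regret; on $E$, use the bound just derived. This yields a total of $O\!\left(\log(nm/\delta) / ((\gamma^{\star})^2 n)\right) + 2 b L_{\gamma^{\star}/4} e_n + \delta$. Choosing $\delta = \sqrt{\log n / n}$ balances the trivial-bound contribution against the refined one and produces the claimed $O(\sqrt{\log n / n}) + 2 b L_{\gamma^{\star}/4} e_n$ rate, with constants absorbing dependence on $m$, $L_{\gamma^{\star}/4}$ and $M_{\gamma^{\star}/4}$. The main obstacle I anticipate is the careful bookkeeping around the burn-in phase: one must verify simultaneously that $T_0$ is small enough to cost only logarithmic factors, that the union-bounded concentration event is strong enough for \emph{every} $\matC_t$ with $t \ge T_0$ to lie in the smooth region (so the telescoping recursion on $A_t$ is genuinely valid end-to-end), and that conditioning on $E$ only perturbs the expected estimation error by a harmless factor $\Pr(E)^{-1} \le 1 + O(\delta)$, which is absorbed into the leading term.
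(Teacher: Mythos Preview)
Your high-level strategy --- define a good event on which the empirical label frequencies stay bounded away from zero, bound the burn-in trivially, and on the remainder use $L_{\gamma^{\star}/4}$, $M_{\gamma^{\star}/4}$ as uniform constants --- is exactly what the paper does. But there is a real gap at the step where you ``replay the argument from \cref{app:regret_bounds}'' on $E$.

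The per-step recursion $A_t \le A_{t-1} + \nabla\psi(\matC_{t-1}) \cdot (\matC^{\star} - \matC(\vecy_t,\vecpred_t)) + Ma/t$ is indeed deterministic, but eliminating the middle term is not: in the original proof this is done by taking $\EE_{(\vecx_t,\vecy_t)}[\cdot]$ at each step, using that $(\vecx_t,\vecy_t)$ is a fresh draw independent of $\matC_{t-1}$, so that $\matC^{\star}$ can be rewritten as $\EE_{(\vecx_t,\vecy_t)}[\matC(\vecy_t, \vech^{\star}(\vecx_t))]$ and compared with $\vecpred_t$ via the argmax. Once you condition on $E$ --- which depends on the \emph{entire} label sequence $\vecy^n$ --- this structure is lost: the conditional law of $(\vecx_t,\vecy_t)$ given the past and $E$ is no longer $\datadistribution$, and the step-by-step expectation telescoping breaks down. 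Your remark that conditioning only perturbs the estimation-error term by a factor $\Pr(E)^{-1}$ misidentifies the problem; the issue is with the gradient term, not the CPE term.

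The paper fills this gap with an explicit martingale concentration step that your outline omits. It sets $V_t = \nabla\psi(\matC_{t-1})\cdot(\matC^{\star}-\matC(\vecy_t,\vecpred_t)) - 2bL\,\EE_{\vecx_t}[\|\veceta-\empveceta_t\|]$, notes $\EE_{(\vecx_t,\vecy_t)}[V_t]\le 0$, truncates the increments (taking $U_t = V_t - \EE_{(\vecx_t,\vecy_t)}[V_t]$ when $\|\nabla\psi(\matC_{t-1})\|\le L$ and $U_t=0$ otherwise, so $|U_t|$ is bounded unconditionally), and applies Hoeffding--Azuma to control $\sum_{t>n_0} U_t$ with high probability. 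Intersecting this event with the nice event is what makes the telescoping go through \emph{pathwise}. The Azuma deviation $\beta \asymp \sqrt{(\log n)/n}$ is precisely where the $\sqrt{(\log n)/n}$ rate in the theorem comes from --- not from the burn-in $T_0/n$, which your outline suggests is the bottleneck.
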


\begin{proof}
We assume the reader is familiar with the proof of Theorem \ref{thm:regret} in Appendix \ref{app:regret_bounds}, to which we will
refer at multiple occasions. We start with recalling that the regret can be upper-bounded by
$\regret_n \le \psi(\matC^{\star})  - \EE\left[\psi(\matC_n)\right]$, where $\matC^{\star} = \EE_{(\vecx, \vecy)}\left[ \matC(\vecy,\vech^{\star}(\vecx)) \right]$ is the population confusion matrix of the optimal classifier $\vech^{\star}$.

We say that a sequence of labels $\vecy^n$ is $(n_0,\beta)$-\emph{nice} if for all $t \ge n_0$, the running averages are all at least $\beta$,
that is $\overline{y}_{tk} \ge \beta$ for all $t \ge n_0$ and all $k=1,\ldots,m$.

\begin{lemma}
Let $\vecy^n$ be drawn i.i.d. from $\datadistribution$. Then $\vecy^n$ is $(n_0,\frac{\gamma^{\star}}{4})$-nice
with probability at least $1 - m n_0 e^{-\gamma^{\star}{}^2 n_0 / 2}$.
\label{lem:nice}
\end{lemma}
\begin{proof}(of Lemma \ref{lem:nice})
\newcommand{\nJ}{n_{\!J}}
Split the sequence into $J$ intervals of length at least $\nJ = \lfloor \frac{n}{J} \rfloor$ each (for the sake of readability,
assume $n$ is divisible by $J$). Let
$$\bm{s}_j = \frac{1}{\nJ} \sum_{t=1 + (j-1) \nJ}^{j \nJ} \vecy_t$$ 
be the vector of average
label counts within the $j$-th interval, $j \in \{1,\ldots,J\}$. Note that $\EE[\bm{s}_j] = \vecp^{\star}$.
Using Hoeffding's inequality \citep{hoeffding1963probability,CBLu06:book} and a union bound
over coordinates/labels $k=1,\ldots,m$,
we have $\min_k \{s_{j,k} - p^{\star}_k\} \ge -\beta$ with probability at least $1-m e^{-2 \beta^2 n_J}$. 
Next, taking union bound over
the intervals, $\min_{j,k} \{s_{j,k} - p^{\star}_k\} \ge -\beta$ with probability at least $1-\delta$ where $\delta = \nJ m e^{-2 \beta^2 \nJ}$. 
Thus, with probability at least $1-\delta$, for every interval $j$ and every label $k$ simultaneously,
\begin{equation}
\sum_{i=1 + (j-1) \nJ}^{j \nJ} y_{ik} \ge \nJ (p^{\star}_k - \beta) \,.
\end{equation}

Take any iteration $t$ in the $j$-th interval with $j \ge 2$,
that is $t = (j-1)\nJ + r$ for $1 \le r \le \nJ$. We have, with probability at least $1-\delta$:
\[
\overline{y}_{tk} = \frac{1}{t} \sum_{i=1}^t y_{ik} \ge \frac{1}{t} \sum_{i=1}^{(j-1)\nJ} y_{ik} \ge \frac{(j-1) \nJ}{t} (p^{\star}_k- \beta) \,.
\]
Plugging in $t = (j-1)\nJ + r$, 
\[
\overline{y}_{tk} \geq \frac{(j-1) n_J}{(j-1)\nJ + r} (p^{\star}_k - \beta)
\ge \frac{(j-1) \nJ}{j\nJ} (p^{\star}_k - \beta) = \frac{(j-1)}{j} (p^{\star}_k - \beta)  \ge \frac{1}{2} (p^{\star}_k - \beta), 
\]
where we used the fact that $j \ge 2$ for the last inequality. Thus, for every $t$ outside the first interval, the running average for each label
is at least $\frac{1}{2} (p^{\star}_k - \beta)$. Taking $\beta = \frac{\gamma^{\star}}{2}$, using the fact
that $p^{\star}_k \ge \gamma^{\star}$, we see that
$\overline{y}_{tk} \ge \frac{\gamma^{\star}}{4}$ for all $t > \nJ \eqqcolon n_0$ and all $k$ with probability at least
$1 - m n_0 e^{-\gamma^{\star}{}^2 n_0 / 2}$.
\end{proof}

We use Lemma \ref{lem:nice} as follows: with probability at least $1-\delta_1$ with $\delta_1 := m n_0 e^{-\gamma^{\star}{}^2 n_0 / 2}$, starting
from $t \ge n_0+1$, $\overline{y}_{tk} \ge \frac{\gamma^{\star}}{4}$ for all $t$ and $k$, and thus
we have $\vecp$-Lipschitzness and $\vecp$-smoothness well-controlled for $\psi$ evaluated at $\matC_{t-1}$, that is
for $t \ge n_0+1$, all constants used in the proof of Theorem \ref{thm:regret} are at most
$L_{\frac{\gamma^{\star}}{4}}$ and $M_{\frac{\gamma^{\star}}{4}}$, which we denote simply by $L$ and $M$.

We will need one more probabilistic bound. In Appendix \ref{app:regret_bounds} we have shown that
\[
\EE_{(\vecx_t,\vecy_t)} \left[\nabla \psi(\matC_{t-1}) \cdot (\matC^{\star} - \matC(\vecy_t, \vecpred_t))\right]
\le 2 b L \EE_{\vecx_t} \left[ \|\veceta(\vecx_t) - \empveceta_t(\vecx_t)\| \right].
\]
Define $V_t = \nabla \psi(\matC_{t-1}) \cdot (\matC^{\star} - \matC(\vecy_t, \vecpred_t))
-  2 b L \EE_{\vecx_t} \left[ \|\veceta(\vecx_t) - \empveceta_t(\vecx_t) \| \right]$ and note that
the above inequality implies $\EE_{(\vecx_t,\vecy_t)}[V_t] \le 0$.
We also have 
\[
|V_t - \EE_{(\vecx_t,\vecy_t)}[V_t]| \le |V_t| + |\EE_{(\vecx_t,\vecy_t)}[V_t]|
\le 2 \|\nabla \psi(\matC_{t-1})\| \max_{\matC, \matC'} \|\matC - \matC'\|
+ 4 bL \max_{\veceta,\veceta'} \|\veceta - \veceta'\|,
\]
where we used Cauchy-Schwarz inequality. We note that $ \max_{\matC, \matC'} \|\matC - \matC'\| \le 2a$
and $\max_{\veceta,\veceta'} \|\veceta - \veceta'\| \le c$ where $c=2$ for multi-class classification
and $c=m$ for multi-label classification. Since $b=1$ for multi-class and $b=\sqrt{2}$ for multi-label classification, denoting
$G_{t-1} = \|\nabla \psi(\matC_{t-1})\|$ we conclude that
$|V_t - \EE_{(\vecx_t,\vecy_t)}[V_t]| \le 4 a G_{t-1} + 8aL$ (note that $a=1$ for multi-class and $a=m$ for multi-label case).
We define $U_t = V_t - \EE_{(\vecx_t,\vecy_t)}[V_t]$ if $G_{t-1} \le L$ and otherwise $U_t = 0$.
It follows that $|U_t| \le 4aL + 8aL \le 12aL$, and $U_t$ forms
a martingale difference sequence (see \citet{CBLu06:book}, Appendix A.1.3). Thus, by Hoeffding-Azuma
inequality, $\frac{1}{n-m_0} \sum_{t=m_0+1}^{n} U_t \le \beta$ with probability
at most $1-\delta_2 = 1 - \exp\left\{-\frac{2 \beta^2 (n-m_0)}{144 a^2 L^2} \right\}$. Using it on a nice sequence (when $G_{t-1} \le L$ for all
$t \ge n_0 + 1$), we conclude (by the union bound) that with probability $1- \delta_1 - \delta_2$,
$$\sum_{t > n_0} V_t \le \sum_{t > n_0} (V_t - \EE_{(\vecx_t,\vecy_t)}[V_t]) 
= \sum_{t > n_0} U_t \le \beta (n - n_0).$$
Having control over $L$ and $M$, and over the term $\sum_{t > n_0} V_t$, we use the entire machinery of the proof of Theorem \ref{thm:regret}
in the Appendix \ref{app:regret_bounds} (with expectation $\EE_{(\vecx_t,\vecy_t)}[\cdot]$ replaced by just proven high-probability bound)
to conclude that with probability at least $1 - \delta_0 - \delta_1$,
\begin{align*}
A_n &\le A_{n_0} + 2 b L \sum_{t=n_0+1}^n\EE\left[ \|\veceta(\vecx_t) - \empveceta_t(\vecx_t)\| \right]
 + \beta(n - n_0) + \sum_{t=n_0+1}^n \frac{M a}{t} \\
&\le A_{n_0} + 2 b L \sum_{t=1}^n\EE\left[ \|\veceta(\vecx_t) - \empveceta_t(\vecx_t)\| \right] + \beta (n - n_0)
+ Ma (1+ \ln n)
\end{align*}
where $A_t = t(\psi(\matC^{\star}) - \psi(\matC_t))$. This means that with probability at least $1 - \delta_0 - \delta_1$,
\[
\psi(\matC^{\star}) - \psi(\matC_n) \le \frac{2n_0}{n} 
+ \frac{\beta (n - n_0)}{n} + \frac{Ma (1+ \ln n)}{n} + \frac{2bL}{n} \sum_{t=1}^n\EE\left[ \|\veceta(\vecx_t) - \empveceta_t(\vecx_t)\| \right],
\]
where we bounded $|\psi(\matC^{\star}) - \psi(\matC_n)| \le 1$ using assumption $\psi(\matC) \in [0,1]$ for all $\matC \in \confusionmatrixspace$.
Let $E$ be the event that the sequence is nice and Hoeffding-Azuma
inequality holds, which happens with probability at least $1 - \delta_0 - \delta_1$. We have
\begin{align*}
\MoveEqLeft
\EE[\psi(\matC^{\star}) - \psi(\matC_n)] =
\EE[\psi(\matC^{\star}) - \psi(\matC_n) | E] \; \datadistribution(E)
+ \underbrace{\EE[\psi(\matC^{\star}) - \psi(\matC_n) | \neg E]}_{\le 1} \; \datadistribution(\neg E) \\
&\le \EE[\psi(\matC^{\star}) - \psi(\matC_n) | E] + \datadistribution(\neg E) \\
&\le \frac{2n_0}{n} 
+ \frac{\beta (n - n_0)}{n} + \frac{Ma (1+ \ln n)}{n} + \frac{2bL}{n} \sum_{t=1}^n\EE\left[ \|\veceta(\vecx_t) - \empveceta_t(\vecx_t)\| \right]
+ \delta_0 + \delta_1 \\
&= \frac{2n_0}{n} 
+ \frac{\beta (n - n_0)}{n}+ \frac{Ma (1+ \ln n)}{n} + \frac{2bL}{n} \sum_{t=1}^n\EE\left[ \|\veceta(\vecx_t) - \empveceta_t(\vecx_t)\| \right] +
m n_0 e^{-\frac{\gamma^{\star}{}^2 n_0}{2}} + e^{-\frac{2 \beta^2 (n-m_0)}{144 a^2 L^2}}.
\end{align*}
Finally, we tune the remaining parameters $\beta$ and $n_0$. We set $n_0 = \sqrt{n}$, and $\beta = \frac{12 aL \sqrt{\ln n}}{\sqrt{2 (n-n_0)}}$
(in order to have $e^{-\frac{2 \beta^2 (n-m_0)}{144 a^2 L^2}} = e^{-\ln n} = \frac{1}{n}$). Bounding $\sqrt{n-n_0} \le \sqrt{n}$,
and denoting the estimation error by $e_n = \frac{1}{n}\sum_{t=1}^n\EE\left[ \|\veceta(\vecx_t) - \empveceta_t(\vecx_t)\| \right]$, we finally get:
\begin{align*}
\EE[\psi(\matC^{\star}) - \psi(\matC_n)]
&\le \frac{2 + 12 a L_{\frac{\gamma^{\star}}{4}} \sqrt{\ln n}}{\sqrt{n}} 
+ \frac{M_{\frac{\gamma^{\star}}{4}} a (1+ \ln n)}{n} + \frac{2bL_{\frac{\gamma^{\star}}{4}}}{n}
+ m \sqrt{n} e^{-\frac{\gamma^{\star}{}^2 \sqrt{n}}{2}} + \frac{1}{n}
+ 2bL_{\frac{\gamma^{\star}}{4}} e_n \\
&= O\left(\sqrt{\frac{\log n}{n}}\right) + 2bL_{\frac{\gamma^{\star}}{4}} e_n,
\end{align*}
which was to be shown.
\end{proof}
\section{Details of experiments setup}
\label{app:experiments-setup}

In multi-label experiments with fixed CPE, we use LIBLINEAR model with $L_2$-regularized logistic loss~\citep{liblinear}. For larger problems (RCV1X and AmazonCat), we made it sparse by truncating all the weights whose absolute value is above the threshold of 0.01 as introduced in \citep{babbar2017dismec} to significantly reduce memory requirements and inference time, then we made the probability estimates $\hat \eta$ sparse for these datasets by only keeping top 100 values for each sample, similarly to~\citep{Schultheis_etal_NeurIPS2023}. In multi-class experiments with both fixed and online CPE, we use PyTorch~\citep{pytorch} linear model with $L_2$-regularized binary cross entropy or cross entropy loss, optimized using ADAM~\citep{Kingma_et_al_2015}. We tuned the learning rate and regularization strength and used the best-obtained model.

We evaluate all the algorithms using 5 random sequences of test sets according to~\cref{fig:online-protocol}. All algorithms are tested on the same 5 sequences for fair comparison. All the methods were implemented in Python. For RCV1X and AmazonCat, we implemented all the algorithms (OFO, Greedy, Frank-Wolfe, and OMMA) to allow them to leverage the sparsity for a more fair computational cost comparison. 

All the experiments were conducted on a workstation with 64 GB of RAM.

\section{Extended results with the fixed CPE}
\label{app:extended-results}

In this section, we include extended results of our empirical experiments. In~\cref{tab:extended-results-multi-label}, we report the results presented in the main paper but also results for the Greedy algorithm that only uses $\empveceta_t$ (Greedy$(\hat \eta)$) as initially proposed by~\citet{Schultheis_etal_NeurIPS2023} and Offline Frank-Wolfe (Offline-FW) algorithm. We additionally report standard deviations of the results as well as mean running times. In~\cref{fig:all-plots}, we present running performance for all datasets and metrics presented in~\cref{tab:extended-results-multi-label}.
In~\cref{fig:all-reg-plots}, we present the effect of selecting different regularization constants $\lambda$ for a selected subset of datasets (without RCV1X and AmazonCat) and all metrics presented in~\cref{tab:extended-results-multi-label}.

In most cases, Greedy$(\hat \eta)$ performs similarly to \OMMAeta{}, as in the case of standard Greedy and \OMMA performing similarly. Offline-FW obtains mixed results, sometimes much better and sometimes much worse than online algorithms. Similarly to Online-FW, it is not performing that well on macro-averaged Precision@$k$.

We note that the running time comparison favors the Online-FW algorithm, as its updates are better-optimized thanks to vectorization and less frequent than OMMA's and other online methods, which update after every step. We expect OMMA to be much faster if implemented more efficiently.

\begin{table*}[ht]
    \caption{Predictive performance and running times of the different online algorithms on \emph{multi-label} problems, averaged over 5 runs and reported with standard deviations ($\pm$)s. The best result on each metric is in \textbf{bold}, the second best is in \textit{italic}. We additionally report basic statistics of the benchmarks: number of classes/labels and instances in the test sequence. $\times$ -- means that the algorithm does not support the optimization of that metric. The inference time of Offline-FW was not measured.}
    \label{tab:extended-results-multi-label}
    \vspace{8pt}
    \small
    \centering

\resizebox{\linewidth}{!}{
\setlength\tabcolsep{3 pt}
\begin{tabular}{l|r@{}lr@{}l|r@{}lr@{}l|r@{}lr@{}l|r@{}lr@{}l|r@{}lr@{}l|r@{}lr@{}l|r@{}lr@{}l}
\toprule
    Method & \multicolumn{4}{c|}{Micro F1} & \multicolumn{4}{c|}{Macro F1} & \multicolumn{4}{c|}{Macro F1$@3$} & \multicolumn{4}{c|}{Macro Recall$@3$} & \multicolumn{4}{c|}{Macro Precision$@3$} & \multicolumn{4}{c|}{Macro G-mean}& \multicolumn{4}{c}{Macro H-mean} \\
    & \multicolumn{2}{c}{(\%)} & \multicolumn{2}{c|}{time (s)} & \multicolumn{2}{c}{(\%)} & \multicolumn{2}{c|}{time (s)} & \multicolumn{2}{c}{(\%)} & \multicolumn{2}{c|}{time (s)} & \multicolumn{2}{c}{(\%)} & \multicolumn{2}{c|}{time (s)} & \multicolumn{2}{c}{(\%)} & \multicolumn{2}{c|}{time (s)} & \multicolumn{2}{c}{(\%)} & \multicolumn{2}{c|}{time (s)} & \multicolumn{2}{c}{(\%)} & \multicolumn{2}{c}{time (s)} \\
\midrule
& \multicolumn{28}{c}{\datasettable{YouTube} ($m = 46, n = 7926$)}  \\ 
 \midrule
    Top-$k$ / $\hat \eta \!>\!0.5$ & 31.20 & \scriptsize $\pm$ 0.00 & 0.13 & \scriptsize $\pm$ 0.01 & 22.74 & \scriptsize $\pm$ 0.00 & 0.13 & \scriptsize $\pm$ 0.01 & 30.99 & \scriptsize $\pm$ 0.00 & 0.08 & \scriptsize $\pm$ 0.00 & 42.13 & \scriptsize $\pm$ 0.00 & 0.09 & \scriptsize $\pm$ 0.02 & 26.39 & \scriptsize $\pm$ 0.00 & 0.08 & \scriptsize $\pm$ 0.00 & 32.82 & \scriptsize $\pm$ 0.00 & 0.13 & \scriptsize $\pm$ 0.01 & 24.46 & \scriptsize $\pm$ 0.00 & 0.13 & \scriptsize $\pm$ 0.01 \\
    \midrule
    OFO & 43.71 & \scriptsize $\pm$ 0.05 & 0.33 & \scriptsize $\pm$ 0.01 & 36.15 & \scriptsize $\pm$ 0.17 & 0.58 & \scriptsize $\pm$ 0.10 & & $\times$ & & $\times$ & & $\times$ & & $\times$ & & $\times$ & & $\times$ & & $\times$ & & $\times$ & & $\times$ & & $\times$ \\
    Greedy & & $\times$ & & $\times$ & 36.32 & \scriptsize $\pm$ 0.15 & 1.35 & \scriptsize $\pm$ 0.08 & 34.72 & \scriptsize $\pm$ 0.17 & 1.27 & \scriptsize $\pm$ 0.02 & 45.84 & \scriptsize $\pm$ 0.09 & 1.22 & \scriptsize $\pm$ 0.02 & \textit{67.18}&\textit{\scriptsize $\pm$ 1.56} & 1.22 & \scriptsize $\pm$ 0.03 & \textit{77.98}&\textit{\scriptsize $\pm$ 0.05} & 1.44 & \scriptsize $\pm$ 0.04 & \textbf{77.93}&\textbf{\scriptsize $\pm$ 0.07} & 1.57 & \scriptsize $\pm$ 0.16 \\
    Greedy$(\hat \eta)$ & & $\times$ & & $\times$ & \textbf{36.47}&\textbf{\scriptsize $\pm$ 0.07} & 1.30 & \scriptsize $\pm$ 0.04 & \textit{35.38}&\textit{\scriptsize $\pm$ 0.05} & 1.27 & \scriptsize $\pm$ 0.02 & 45.84 & \scriptsize $\pm$ 0.09 & 1.23 & \scriptsize $\pm$ 0.04 & \textit{67.18}&\textit{\scriptsize $\pm$ 1.56} & 1.23 & \scriptsize $\pm$ 0.04 & \textbf{77.99}&\textbf{\scriptsize $\pm$ 0.02} & 1.42 & \scriptsize $\pm$ 0.05 & \textit{77.92}&\textit{\scriptsize $\pm$ 0.03} & 1.51 & \scriptsize $\pm$ 0.08 \\
    Online-FW & 43.67 & \scriptsize $\pm$ 0.05 & 6.80 & \scriptsize $\pm$ 0.55 & 36.00 & \scriptsize $\pm$ 0.11 & 10.92 & \scriptsize $\pm$ 0.21 & 34.38 & \scriptsize $\pm$ 0.25 & 10.76 & \scriptsize $\pm$ 0.42 & 45.83 & \scriptsize $\pm$ 0.10 & 5.34 & \scriptsize $\pm$ 0.07 & 38.92 & \scriptsize $\pm$ 1.56 & 6.54 & \scriptsize $\pm$ 0.39 & 77.96 & \scriptsize $\pm$ 0.05 & 6.60 & \scriptsize $\pm$ 0.26 & 77.91 & \scriptsize $\pm$ 0.04 & 11.46 & \scriptsize $\pm$ 0.57 \\
    Online-FW$(\hat \eta)$ & 43.69 & \scriptsize $\pm$ 0.02 & 10.02 & \scriptsize $\pm$ 0.14 & \textbf{36.47}&\textbf{\scriptsize $\pm$ 0.04} & 11.57 & \scriptsize $\pm$ 0.19 & \textbf{35.43}&\textbf{\scriptsize $\pm$ 0.08} & 29.99 & \scriptsize $\pm$ 0.39 & \textbf{45.89}&\textbf{\scriptsize $\pm$ 0.05} & 6.39 & \scriptsize $\pm$ 0.07 & 50.12 & \scriptsize $\pm$ 1.95 & 11.50 & \scriptsize $\pm$ 0.06 & 77.92 & \scriptsize $\pm$ 0.03 & 2.00 & \scriptsize $\pm$ 0.01 & \textbf{77.93}&\textbf{\scriptsize $\pm$ 0.03} & 15.52 & \scriptsize $\pm$ 1.48 \\
    \midrule
    \OMMA{} & \textbf{43.73}&\textbf{\scriptsize $\pm$ 0.04} & 2.78 & \scriptsize $\pm$ 0.18 & \textit{36.34}&\textit{\scriptsize $\pm$ 0.19} & 2.84 & \scriptsize $\pm$ 0.09 & 34.81 & \scriptsize $\pm$ 0.11 & 2.80 & \scriptsize $\pm$ 0.05 & \textit{45.85}&\textit{\scriptsize $\pm$ 0.09} & 2.41 & \scriptsize $\pm$ 0.07 & \textbf{67.74}&\textbf{\scriptsize $\pm$ 1.61} & 2.40 & \scriptsize $\pm$ 0.08 & \textit{77.98}&\textit{\scriptsize $\pm$ 0.05} & 2.92 & \scriptsize $\pm$ 0.13 & \textbf{77.93}&\textbf{\scriptsize $\pm$ 0.07} & 3.15 & \scriptsize $\pm$ 0.17 \\
    \OMMAeta{} & \textit{43.72}&\textit{\scriptsize $\pm$ 0.03} & 2.79 & \scriptsize $\pm$ 0.17 & \textbf{36.47}&\textbf{\scriptsize $\pm$ 0.08} & 2.73 & \scriptsize $\pm$ 0.10 & \textit{35.38}&\textit{\scriptsize $\pm$ 0.06} & 2.72 & \scriptsize $\pm$ 0.05 & \textbf{45.89}&\textbf{\scriptsize $\pm$ 0.04} & 2.38 & \scriptsize $\pm$ 0.06 & 65.49 & \scriptsize $\pm$ 2.14 & 2.38 & \scriptsize $\pm$ 0.07 & \textit{77.98}&\textit{\scriptsize $\pm$ 0.02} & 2.94 & \scriptsize $\pm$ 0.15 & \textbf{77.93}&\textbf{\scriptsize $\pm$ 0.03} & 3.15 & \scriptsize $\pm$ 0.22 \\
    \midrule
    Offline-FW & 43.75 & \scriptsize $\pm$ 0.01 & & $-$ & 36.55 & \scriptsize $\pm$ 0.05 & & $-$ & 35.68 & \scriptsize $\pm$ 0.05 & & $-$ & 44.34 & \scriptsize $\pm$ 0.00 & & $-$ & 51.51 & \scriptsize $\pm$ 5.55 & & $-$ & 78.03 & \scriptsize $\pm$ 0.01 & & $-$ & 77.96 & \scriptsize $\pm$ 0.01 & & $-$ \\
\midrule
& \multicolumn{28}{c}{\datasettable{Eurlex-LexGlue} ($m = 100, n = 5000$)}  \\ 
 \midrule
    Top-$k$ / $\hat \eta \!>\!0.5$ & 70.99 & \scriptsize $\pm$ 0.00 & 0.08 & \scriptsize $\pm$ 0.00 & 52.43 & \scriptsize $\pm$ 0.00 & 0.08 & \scriptsize $\pm$ 0.00 & 46.35 & \scriptsize $\pm$ 0.00 & 0.06 & \scriptsize $\pm$ 0.01 & 36.67 & \scriptsize $\pm$ 0.00 & 0.05 & \scriptsize $\pm$ 0.00 & 74.70 & \scriptsize $\pm$ 0.00 & 0.05 & \scriptsize $\pm$ 0.00 & 62.33 & \scriptsize $\pm$ 0.00 & 0.08 & \scriptsize $\pm$ 0.01 & 55.95 & \scriptsize $\pm$ 0.00 & 0.08 & \scriptsize $\pm$ 0.00 \\
    \midrule
    OFO & 73.23 & \scriptsize $\pm$ 0.01 & 0.20 & \scriptsize $\pm$ 0.01 & 58.93 & \scriptsize $\pm$ 0.22 & 0.18 & \scriptsize $\pm$ 0.01 & & $\times$ & & $\times$ & & $\times$ & & $\times$ & & $\times$ & & $\times$ & & $\times$ & & $\times$ & & $\times$ & & $\times$ \\
    Greedy & & $\times$ & & $\times$ & 59.83 & \scriptsize $\pm$ 0.11 & 0.84 & \scriptsize $\pm$ 0.03 & 54.19 & \scriptsize $\pm$ 0.07 & 0.81 & \scriptsize $\pm$ 0.03 & 52.67 & \scriptsize $\pm$ 0.13 & 0.76 & \scriptsize $\pm$ 0.03 & \textit{88.21}&\textit{\scriptsize $\pm$ 1.24} & 0.76 & \scriptsize $\pm$ 0.02 & 89.74 & \scriptsize $\pm$ 0.09 & 0.90 & \scriptsize $\pm$ 0.05 & 89.73 & \scriptsize $\pm$ 0.10 & 0.96 & \scriptsize $\pm$ 0.03 \\
    Greedy$(\hat \eta)$ & & $\times$ & & $\times$ & \textbf{59.88}&\textbf{\scriptsize $\pm$ 0.15} & 0.84 & \scriptsize $\pm$ 0.04 & \textbf{54.39}&\textbf{\scriptsize $\pm$ 0.15} & 0.83 & \scriptsize $\pm$ 0.04 & 52.67 & \scriptsize $\pm$ 0.13 & 0.75 & \scriptsize $\pm$ 0.02 & \textit{88.21}&\textit{\scriptsize $\pm$ 1.24} & 0.75 & \scriptsize $\pm$ 0.03 & \textbf{89.92}&\textbf{\scriptsize $\pm$ 0.05} & 0.90 & \scriptsize $\pm$ 0.04 & \textbf{89.85}&\textbf{\scriptsize $\pm$ 0.05} & 0.94 & \scriptsize $\pm$ 0.04 \\
    Online-FW & \textbf{73.68}&\textbf{\scriptsize $\pm$ 0.04} & 3.76 & \scriptsize $\pm$ 0.29 & 59.53 & \scriptsize $\pm$ 0.14 & 7.60 & \scriptsize $\pm$ 0.50 & 54.18 & \scriptsize $\pm$ 0.06 & 8.72 & \scriptsize $\pm$ 0.39 & 52.68 & \scriptsize $\pm$ 0.19 & 3.95 & \scriptsize $\pm$ 0.15 & 58.76 & \scriptsize $\pm$ 1.64 & 8.20 & \scriptsize $\pm$ 1.25 & 89.69 & \scriptsize $\pm$ 0.05 & 3.88 & \scriptsize $\pm$ 0.26 & 89.75 & \scriptsize $\pm$ 0.08 & 7.68 & \scriptsize $\pm$ 0.41 \\
    Online-FW$(\hat \eta)$ & 73.22 & \scriptsize $\pm$ 0.01 & 7.26 & \scriptsize $\pm$ 0.11 & 59.78 & \scriptsize $\pm$ 0.11 & 9.68 & \scriptsize $\pm$ 0.13 & 54.34 & \scriptsize $\pm$ 0.15 & 34.28 & \scriptsize $\pm$ 0.40 & \textbf{53.87}&\textbf{\scriptsize $\pm$ 0.15} & 5.40 & \scriptsize $\pm$ 0.04 & 51.37 & \scriptsize $\pm$ 0.89 & 23.53 & \scriptsize $\pm$ 0.87 & \textit{89.91}&\textit{\scriptsize $\pm$ 0.02} & 1.71 & \scriptsize $\pm$ 0.02 & \textit{89.84}&\textit{\scriptsize $\pm$ 0.04} & 20.59 & \scriptsize $\pm$ 1.95 \\
    \midrule
    \OMMA{} & \textit{73.29}&\textit{\scriptsize $\pm$ 0.02} & 1.73 & \scriptsize $\pm$ 0.07 & \textit{59.85}&\textit{\scriptsize $\pm$ 0.07} & 1.84 & \scriptsize $\pm$ 0.11 & 54.15 & \scriptsize $\pm$ 0.11 & 1.81 & \scriptsize $\pm$ 0.07 & 52.67 & \scriptsize $\pm$ 0.13 & 1.45 & \scriptsize $\pm$ 0.05 & \textbf{88.41}&\textbf{\scriptsize $\pm$ 0.73} & 1.44 & \scriptsize $\pm$ 0.04 & 89.74 & \scriptsize $\pm$ 0.09 & 1.88 & \scriptsize $\pm$ 0.17 & 89.73 & \scriptsize $\pm$ 0.10 & 1.97 & \scriptsize $\pm$ 0.11 \\
    \OMMAeta{} & 73.22 & \scriptsize $\pm$ 0.01 & 1.75 & \scriptsize $\pm$ 0.09 & 59.84 & \scriptsize $\pm$ 0.12 & 1.73 & \scriptsize $\pm$ 0.08 & \textit{54.37}&\textit{\scriptsize $\pm$ 0.13} & 1.73 & \scriptsize $\pm$ 0.07 & \textit{53.85}&\textit{\scriptsize $\pm$ 0.15} & 1.47 & \scriptsize $\pm$ 0.05 & 84.99 & \scriptsize $\pm$ 1.31 & 1.46 & \scriptsize $\pm$ 0.04 & \textbf{89.92}&\textbf{\scriptsize $\pm$ 0.05} & 1.90 & \scriptsize $\pm$ 0.18 & \textbf{89.85}&\textbf{\scriptsize $\pm$ 0.05} & 2.08 & \scriptsize $\pm$ 0.25 \\
    \midrule
    Offline-FW & 73.31 & \scriptsize $\pm$ 0.01 & & $-$ & 55.82 & \scriptsize $\pm$ 0.29 & & $-$ & 50.05 & \scriptsize $\pm$ 0.01 & & $-$ & 52.02 & \scriptsize $\pm$ 0.00 & & $-$ & 61.11 & \scriptsize $\pm$ 3.07 & & $-$ & 89.18 & \scriptsize $\pm$ 0.00 & & $-$ & 89.00 & \scriptsize $\pm$ 0.01 & & $-$ \\
\midrule
& \multicolumn{28}{c}{\datasettable{Mediamill} ($m = 101, n = 12914$)}  \\ 
 \midrule
    Top-$k$ / $\hat \eta \!>\!0.5$ & 52.45 & \scriptsize $\pm$ 0.00 & 0.23 & \scriptsize $\pm$ 0.02 & 4.06 & \scriptsize $\pm$ 0.00 & 0.23 & \scriptsize $\pm$ 0.01 & 4.43 & \scriptsize $\pm$ 0.00 & 0.13 & \scriptsize $\pm$ 0.00 & 4.35 & \scriptsize $\pm$ 0.00 & 0.13 & \scriptsize $\pm$ 0.00 & 7.42 & \scriptsize $\pm$ 0.00 & 0.13 & \scriptsize $\pm$ 0.00 & 4.62 & \scriptsize $\pm$ 0.00 & 0.23 & \scriptsize $\pm$ 0.02 & 3.65 & \scriptsize $\pm$ 0.00 & 0.23 & \scriptsize $\pm$ 0.01 \\
    \midrule
    OFO & \textit{56.99}&\textit{\scriptsize $\pm$ 0.01} & 0.57 & \scriptsize $\pm$ 0.01 & 12.36 & \scriptsize $\pm$ 0.03 & 1.06 & \scriptsize $\pm$ 0.03 & & $\times$ & & $\times$ & & $\times$ & & $\times$ & & $\times$ & & $\times$ & & $\times$ & & $\times$ & & $\times$ & & $\times$ \\
    Greedy & & $\times$ & & $\times$ & 12.43 & \scriptsize $\pm$ 0.07 & 2.85 & \scriptsize $\pm$ 0.27 & 10.29 & \scriptsize $\pm$ 0.07 & 2.18 & \scriptsize $\pm$ 0.17 & \textit{9.41}&\textit{\scriptsize $\pm$ 0.32} & 1.99 & \scriptsize $\pm$ 0.10 & \textit{22.54}&\textit{\scriptsize $\pm$ 1.35} & 1.98 & \scriptsize $\pm$ 0.11 & \textit{65.68}&\textit{\scriptsize $\pm$ 0.07} & 3.08 & \scriptsize $\pm$ 0.31 & \textbf{66.48}&\textbf{\scriptsize $\pm$ 0.41} & 3.14 & \scriptsize $\pm$ 0.22 \\
    Greedy$(\hat \eta)$ & & $\times$ & & $\times$ & \textit{14.33}&\textit{\scriptsize $\pm$ 0.04} & 2.32 & \scriptsize $\pm$ 0.10 & \textit{11.88}&\textit{\scriptsize $\pm$ 0.14} & 2.12 & \scriptsize $\pm$ 0.11 & \textit{9.41}&\textit{\scriptsize $\pm$ 0.32} & 1.99 & \scriptsize $\pm$ 0.11 & \textit{22.54}&\textit{\scriptsize $\pm$ 1.35} & 1.96 & \scriptsize $\pm$ 0.08 & \textbf{65.93}&\textbf{\scriptsize $\pm$ 0.17} & 3.05 & \scriptsize $\pm$ 0.26 & 65.37 & \scriptsize $\pm$ 0.16 & 3.14 & \scriptsize $\pm$ 0.21 \\
    Online-FW & 56.98 & \scriptsize $\pm$ 0.01 & 8.72 & \scriptsize $\pm$ 0.50 & 12.22 & \scriptsize $\pm$ 0.10 & 11.70 & \scriptsize $\pm$ 0.58 & 10.18 & \scriptsize $\pm$ 0.14 & 9.69 & \scriptsize $\pm$ 1.50 & 9.16 & \scriptsize $\pm$ 0.70 & 3.56 & \scriptsize $\pm$ 0.29 & 12.70 & \scriptsize $\pm$ 0.84 & 9.82 & \scriptsize $\pm$ 0.54 & 64.73 & \scriptsize $\pm$ 0.50 & 6.92 & \scriptsize $\pm$ 0.93 & \textit{65.68}&\textit{\scriptsize $\pm$ 0.45} & 20.44 & \scriptsize $\pm$ 1.45 \\
    Online-FW$(\hat \eta)$ & \textit{56.99}&\textit{\scriptsize $\pm$ 0.00} & 11.25 & \scriptsize $\pm$ 0.17 & \textit{14.33}&\textit{\scriptsize $\pm$ 0.02} & 16.15 & \scriptsize $\pm$ 1.44 & \textbf{11.97}&\textbf{\scriptsize $\pm$ 0.10} & 47.57 & \scriptsize $\pm$ 0.45 & \textbf{16.43}&\textbf{\scriptsize $\pm$ 0.08} & 8.13 & \scriptsize $\pm$ 0.05 & 17.35 & \scriptsize $\pm$ 0.68 & 27.09 & \scriptsize $\pm$ 1.23 & 65.37 & \scriptsize $\pm$ 0.22 & 2.48 & \scriptsize $\pm$ 0.02 & 65.56 & \scriptsize $\pm$ 0.07 & 49.09 & \scriptsize $\pm$ 0.26 \\
    \midrule
    \OMMA{} & \textbf{57.00}&\textbf{\scriptsize $\pm$ 0.01} & 4.56 & \scriptsize $\pm$ 0.33 & 12.39 & \scriptsize $\pm$ 0.07 & 4.51 & \scriptsize $\pm$ 0.05 & 10.25 & \scriptsize $\pm$ 0.10 & 4.56 & \scriptsize $\pm$ 0.23 & 9.39 & \scriptsize $\pm$ 0.32 & 3.83 & \scriptsize $\pm$ 0.19 & \textbf{22.85}&\textbf{\scriptsize $\pm$ 1.11} & 3.76 & \scriptsize $\pm$ 0.16 & 65.67 & \scriptsize $\pm$ 0.11 & 4.77 & \scriptsize $\pm$ 0.36 & \textbf{66.48}&\textbf{\scriptsize $\pm$ 0.40} & 4.95 & \scriptsize $\pm$ 0.23 \\
    \OMMAeta{} & \textit{56.99}&\textit{\scriptsize $\pm$ 0.01} & 4.52 & \scriptsize $\pm$ 0.25 & \textbf{14.34}&\textbf{\scriptsize $\pm$ 0.03} & 4.75 & \scriptsize $\pm$ 0.63 & 11.87 & \scriptsize $\pm$ 0.11 & 4.45 & \scriptsize $\pm$ 0.22 & \textbf{16.43}&\textbf{\scriptsize $\pm$ 0.07} & 3.82 & \scriptsize $\pm$ 0.16 & 18.31 & \scriptsize $\pm$ 0.49 & 3.81 & \scriptsize $\pm$ 0.19 & \textbf{65.93}&\textbf{\scriptsize $\pm$ 0.17} & 4.88 & \scriptsize $\pm$ 0.37 & 65.35 & \scriptsize $\pm$ 0.16 & 5.08 & \scriptsize $\pm$ 0.35 \\
    \midrule
    Offline-FW & 56.99 & \scriptsize $\pm$ 0.01 & & $-$ & 13.15 & \scriptsize $\pm$ 0.14 & & $-$ & 10.39 & \scriptsize $\pm$ 0.12 & & $-$ & 16.51 & \scriptsize $\pm$ 0.00 & & $-$ & 14.38 & \scriptsize $\pm$ 0.99 & & $-$ & 66.08 & \scriptsize $\pm$ 0.20 & & $-$ & 65.80 & \scriptsize $\pm$ 0.27 & & $-$ \\
\midrule
& \multicolumn{28}{c}{\datasettable{Flickr} ($m = 195, n = 24154$)}  \\ 
 \midrule
    Top-$k$ / $\hat \eta \!>\!0.5$ & 29.46 & \scriptsize $\pm$ 0.00 & 0.39 & \scriptsize $\pm$ 0.03 & 18.27 & \scriptsize $\pm$ 0.00 & 0.38 & \scriptsize $\pm$ 0.01 & 26.39 & \scriptsize $\pm$ 0.00 & 0.23 & \scriptsize $\pm$ 0.04 & 38.96 & \scriptsize $\pm$ 0.00 & 0.22 & \scriptsize $\pm$ 0.04 & 21.38 & \scriptsize $\pm$ 0.00 & 0.19 & \scriptsize $\pm$ 0.01 & 27.08 & \scriptsize $\pm$ 0.00 & 0.39 & \scriptsize $\pm$ 0.04 & 20.02 & \scriptsize $\pm$ 0.00 & 0.39 & \scriptsize $\pm$ 0.03 \\
    \midrule
    OFO & \textbf{41.05}&\textbf{\scriptsize $\pm$ 0.02} & 0.92 & \scriptsize $\pm$ 0.03 & 30.46 & \scriptsize $\pm$ 0.13 & 0.95 & \scriptsize $\pm$ 0.04 & & $\times$ & & $\times$ & & $\times$ & & $\times$ & & $\times$ & & $\times$ & & $\times$ & & $\times$ & & $\times$ & & $\times$ \\
    Greedy & & $\times$ & & $\times$ & 30.90 & \scriptsize $\pm$ 0.10 & 3.62 & \scriptsize $\pm$ 0.22 & 30.42 & \scriptsize $\pm$ 0.03 & 3.30 & \scriptsize $\pm$ 0.16 & \textbf{46.41}&\textbf{\scriptsize $\pm$ 0.15} & 3.03 & \scriptsize $\pm$ 0.11 & \textit{57.55}&\textit{\scriptsize $\pm$ 1.29} & 3.05 & \scriptsize $\pm$ 0.10 & 83.39 & \scriptsize $\pm$ 0.07 & 5.11 & \scriptsize $\pm$ 0.37 & \textbf{83.37}&\textbf{\scriptsize $\pm$ 0.03} & 5.20 & \scriptsize $\pm$ 0.31 \\
    Greedy$(\hat \eta)$ & & $\times$ & & $\times$ & \textit{31.16}&\textit{\scriptsize $\pm$ 0.06} & 3.45 & \scriptsize $\pm$ 0.20 & \textbf{30.63}&\textbf{\scriptsize $\pm$ 0.11} & 3.33 & \scriptsize $\pm$ 0.13 & \textbf{46.41}&\textbf{\scriptsize $\pm$ 0.15} & 3.06 & \scriptsize $\pm$ 0.12 & \textit{57.55}&\textit{\scriptsize $\pm$ 1.29} & 3.04 & \scriptsize $\pm$ 0.10 & \textbf{83.42}&\textbf{\scriptsize $\pm$ 0.03} & 5.05 & \scriptsize $\pm$ 0.37 & 83.24 & \scriptsize $\pm$ 0.03 & 5.12 & \scriptsize $\pm$ 0.29 \\
    Online-FW & \textbf{41.05}&\textbf{\scriptsize $\pm$ 0.01} & 10.82 & \scriptsize $\pm$ 1.13 & 30.60 & \scriptsize $\pm$ 0.06 & 17.19 & \scriptsize $\pm$ 0.46 & 29.58 & \scriptsize $\pm$ 0.08 & 14.36 & \scriptsize $\pm$ 0.81 & \textit{46.38}&\textit{\scriptsize $\pm$ 0.16} & 7.31 & \scriptsize $\pm$ 0.15 & 28.66 & \scriptsize $\pm$ 0.93 & 10.06 & \scriptsize $\pm$ 1.25 & 83.37 & \scriptsize $\pm$ 0.03 & 12.04 & \scriptsize $\pm$ 0.67 & \textit{83.28}&\textit{\scriptsize $\pm$ 0.05} & 17.76 & \scriptsize $\pm$ 1.18 \\
    Online-FW$(\hat \eta)$ & \textit{41.02}&\textit{\scriptsize $\pm$ 0.02} & 14.66 & \scriptsize $\pm$ 0.18 & \textbf{31.17}&\textbf{\scriptsize $\pm$ 0.06} & 18.17 & \scriptsize $\pm$ 0.24 & 29.65 & \scriptsize $\pm$ 0.21 & 41.54 & \scriptsize $\pm$ 3.51 & 46.28 & \scriptsize $\pm$ 0.08 & 10.30 & \scriptsize $\pm$ 0.09 & 25.73 & \scriptsize $\pm$ 0.82 & 31.20 & \scriptsize $\pm$ 0.83 & 83.37 & \scriptsize $\pm$ 0.02 & 9.35 & \scriptsize $\pm$ 1.03 & 83.20 & \scriptsize $\pm$ 0.03 & 37.08 & \scriptsize $\pm$ 0.84 \\
    \midrule
    \OMMA{} & 41.01 & \scriptsize $\pm$ 0.02 & 6.94 & \scriptsize $\pm$ 0.35 & 30.90 & \scriptsize $\pm$ 0.13 & 7.14 & \scriptsize $\pm$ 0.44 & 30.39 & \scriptsize $\pm$ 0.05 & 7.06 & \scriptsize $\pm$ 0.35 & \textbf{46.41}&\textbf{\scriptsize $\pm$ 0.15} & 5.83 & \scriptsize $\pm$ 0.19 & \textbf{58.35}&\textbf{\scriptsize $\pm$ 0.34} & 5.86 & \scriptsize $\pm$ 0.17 & 83.39 & \scriptsize $\pm$ 0.07 & 7.57 & \scriptsize $\pm$ 0.67 & \textbf{83.37}&\textbf{\scriptsize $\pm$ 0.03} & 7.91 & \scriptsize $\pm$ 0.44 \\
    \OMMAeta{} & \textit{41.02}&\textit{\scriptsize $\pm$ 0.03} & 7.05 & \scriptsize $\pm$ 0.43 & 31.15 & \scriptsize $\pm$ 0.08 & 6.97 & \scriptsize $\pm$ 0.38 & \textit{30.55}&\textit{\scriptsize $\pm$ 0.09} & 7.09 & \scriptsize $\pm$ 0.44 & 46.33 & \scriptsize $\pm$ 0.05 & 5.83 & \scriptsize $\pm$ 0.18 & 55.56 & \scriptsize $\pm$ 0.43 & 5.90 & \scriptsize $\pm$ 0.19 & \textit{83.41}&\textit{\scriptsize $\pm$ 0.03} & 7.51 & \scriptsize $\pm$ 0.58 & 83.23 & \scriptsize $\pm$ 0.04 & 8.08 & \scriptsize $\pm$ 0.61 \\
    \midrule
    Offline-FW & 40.96 & \scriptsize $\pm$ 0.00 & & $-$ & 30.80 & \scriptsize $\pm$ 0.04 & & $-$ & 30.26 & \scriptsize $\pm$ 0.09 & & $-$ & 46.45 & \scriptsize $\pm$ 0.00 & & $-$ & 39.27 & \scriptsize $\pm$ 1.14 & & $-$ & 83.44 & \scriptsize $\pm$ 0.01 & & $-$ & 83.22 & \scriptsize $\pm$ 0.01 & & $-$ \\
\midrule
& \multicolumn{28}{c}{\datasettable{RCV1X} ($m = 2456, n = 155962$)}  \\ 
 \midrule
    Top-$k$ / $\hat \eta \!>\!0.5$ & 68.57 & \scriptsize $\pm$ 0.00 & 12.72 & \scriptsize $\pm$ 2.65 & 11.29 & \scriptsize $\pm$ 0.00 & 13.48 & \scriptsize $\pm$ 4.35 & 5.34 & \scriptsize $\pm$ 0.00 & 1.44 & \scriptsize $\pm$ 0.02 & 4.59 & \scriptsize $\pm$ 0.00 & 1.45 & \scriptsize $\pm$ 0.03 & 13.24 & \scriptsize $\pm$ 0.00 & 1.44 & \scriptsize $\pm$ 0.04 & 16.01 & \scriptsize $\pm$ 0.00 & 12.01 & \scriptsize $\pm$ 2.14 & \textit{12.36}&\textit{\scriptsize $\pm$ 0.00} & 15.99 & \scriptsize $\pm$ 6.34 \\
    \midrule
    OFO & \textbf{69.83}&\textbf{\scriptsize $\pm$ 0.00} & 27.57 & \scriptsize $\pm$ 13.52 & 20.26 & \scriptsize $\pm$ 0.15 & 76.05 & \scriptsize $\pm$ 39.24 & & $\times$ & & $\times$ & & $\times$ & & $\times$ & & $\times$ & & $\times$ & & $\times$ & & $\times$ & & $\times$ & & $\times$ \\
    Greedy & & $\times$ & & $\times$ & \textbf{20.80}&\textbf{\scriptsize $\pm$ 0.16} & 91.71 & \scriptsize $\pm$ 34.37 & 16.01 & \scriptsize $\pm$ 0.12 & 27.04 & \scriptsize $\pm$ 0.43 & 21.20 & \scriptsize $\pm$ 0.08 & 25.58 & \scriptsize $\pm$ 0.38 & \textbf{30.91}&\textbf{\scriptsize $\pm$ 0.37} & 25.37 & \scriptsize $\pm$ 0.39 & \textbf{69.07}&\textbf{\scriptsize $\pm$ 0.00} & 183.85 & \scriptsize $\pm$ 69.54 & \textbf{67.04}&\textbf{\scriptsize $\pm$ 0.00} & 234.79 & \scriptsize $\pm$ 106.38 \\
    Greedy$(\hat \eta)$ & & $\times$ & & $\times$ & 20.50 & \scriptsize $\pm$ 0.09 & 57.47 & \scriptsize $\pm$ 19.59 & \textbf{16.09}&\textbf{\scriptsize $\pm$ 0.09} & 27.00 & \scriptsize $\pm$ 0.42 & 21.20 & \scriptsize $\pm$ 0.08 & 25.56 & \scriptsize $\pm$ 0.42 & \textbf{30.91}&\textbf{\scriptsize $\pm$ 0.37} & 25.20 & \scriptsize $\pm$ 0.56 & \textit{69.06}&\textit{\scriptsize $\pm$ 0.00} & 192.01 & \scriptsize $\pm$ 126.11 & \textbf{67.04}&\textbf{\scriptsize $\pm$ 0.00} & 233.71 & \scriptsize $\pm$ 102.62 \\
    Online-FW & \textbf{69.83}&\textbf{\scriptsize $\pm$ 0.00} & 30.37 & \scriptsize $\pm$ 1.42 & 19.82 & \scriptsize $\pm$ 0.16 & 36.40 & \scriptsize $\pm$ 0.99 & 15.33 & \scriptsize $\pm$ 0.09 & 52.14 & \scriptsize $\pm$ 1.35 & 21.09 & \scriptsize $\pm$ 0.15 & 36.73 & \scriptsize $\pm$ 0.62 & 19.88 & \scriptsize $\pm$ 0.21 & 62.42 & \scriptsize $\pm$ 6.57 & \textbf{69.07}&\textbf{\scriptsize $\pm$ 0.00} & 30.96 & \scriptsize $\pm$ 0.65 & \textbf{67.04}&\textbf{\scriptsize $\pm$ 0.00} & 40.11 & \scriptsize $\pm$ 1.61 \\
    Online-FW$(\hat \eta)$ & \textit{69.79}&\textit{\scriptsize $\pm$ 0.01} & 46.18 & \scriptsize $\pm$ 0.47 & 20.40 & \scriptsize $\pm$ 0.08 & 63.85 & \scriptsize $\pm$ 0.56 & 15.55 & \scriptsize $\pm$ 0.10 & 220.60 & \scriptsize $\pm$ 0.55 & \textit{22.21}&\textit{\scriptsize $\pm$ 0.08} & 47.24 & \scriptsize $\pm$ 0.49 & 22.17 & \scriptsize $\pm$ 0.34 & 205.95 & \scriptsize $\pm$ 0.77 & \textit{69.06}&\textit{\scriptsize $\pm$ 0.00} & 48.66 & \scriptsize $\pm$ 0.87 & \textbf{67.04}&\textbf{\scriptsize $\pm$ 0.00} & 68.64 & \scriptsize $\pm$ 1.10 \\
    \midrule
    \OMMA{} & 69.77 & \scriptsize $\pm$ 0.00 & 58.51 & \scriptsize $\pm$ 6.52 & 20.57 & \scriptsize $\pm$ 0.16 & 56.42 & \scriptsize $\pm$ 4.68 & 15.87 & \scriptsize $\pm$ 0.11 & 66.80 & \scriptsize $\pm$ 13.58 & 21.08 & \scriptsize $\pm$ 0.10 & 56.39 & \scriptsize $\pm$ 0.94 & \textit{30.39}&\textit{\scriptsize $\pm$ 0.21} & 53.22 & \scriptsize $\pm$ 8.16 & \textbf{69.07}&\textbf{\scriptsize $\pm$ 0.00} & 63.82 & \scriptsize $\pm$ 10.34 & \textbf{67.04}&\textbf{\scriptsize $\pm$ 0.00} & 64.56 & \scriptsize $\pm$ 5.64 \\
    \OMMAeta{} & 69.71 & \scriptsize $\pm$ 0.00 & 59.59 & \scriptsize $\pm$ 6.46 & \textit{20.71}&\textit{\scriptsize $\pm$ 0.11} & 60.31 & \scriptsize $\pm$ 9.50 & \textit{16.07}&\textit{\scriptsize $\pm$ 0.08} & 62.89 & \scriptsize $\pm$ 15.22 & \textbf{22.23}&\textbf{\scriptsize $\pm$ 0.09} & 56.86 & \scriptsize $\pm$ 0.86 & 30.38 & \scriptsize $\pm$ 0.24 & 49.32 & \scriptsize $\pm$ 4.36 & \textit{69.06}&\textit{\scriptsize $\pm$ 0.00} & 65.63 & \scriptsize $\pm$ 9.65 & \textbf{67.04}&\textbf{\scriptsize $\pm$ 0.00} & 69.29 & \scriptsize $\pm$ 8.55 \\
    \midrule
    Offline-FW & 69.84 & \scriptsize $\pm$ 0.00 & & $-$ & 15.08 & \scriptsize $\pm$ 0.05 & & $-$ & 11.74 & \scriptsize $\pm$ 0.05 & & $-$ & 22.22 & \scriptsize $\pm$ 0.00 & & $-$ & 20.98 & \scriptsize $\pm$ 0.71 & & $-$ & 69.48 & \scriptsize $\pm$ 0.00 & & $-$ & 67.50 & \scriptsize $\pm$ 0.00 & & $-$ \\
\midrule
& \multicolumn{28}{c}{\datasettable{AmazonCat} ($m = 13330, n = 306784$)}  \\ 
 \midrule
    Top-$k$ / $\hat \eta \!>\!0.5$ & 67.77 & \scriptsize $\pm$ 0.00 & 52.41 & \scriptsize $\pm$ 14.16 & 28.76 & \scriptsize $\pm$ 0.00 & 60.25 & \scriptsize $\pm$ 23.57 & 14.98 & \scriptsize $\pm$ 0.00 & 3.03 & \scriptsize $\pm$ 0.08 & 11.18 & \scriptsize $\pm$ 0.00 & 2.83 & \scriptsize $\pm$ 0.05 & 30.98 & \scriptsize $\pm$ 0.00 & 2.83 & \scriptsize $\pm$ 0.04 & \textit{33.93}&\textit{\scriptsize $\pm$ 0.00} & 77.78 & \scriptsize $\pm$ 35.26 & \textit{30.03}&\textit{\scriptsize $\pm$ 0.00} & 39.14 & \scriptsize $\pm$ 6.60 \\
    \midrule
    OFO & \textit{70.38}&\textit{\scriptsize $\pm$ 0.00} & 88.22 & \scriptsize $\pm$ 33.36 & 39.60 & \scriptsize $\pm$ 0.06 & 154.54 & \scriptsize $\pm$ 62.87 & & $\times$ & & $\times$ & & $\times$ & & $\times$ & & $\times$ & & $\times$ & & $\times$ & & $\times$ & & $\times$ & & $\times$ \\
    Greedy & & $\times$ & & $\times$ & 44.20 & \scriptsize $\pm$ 0.06 & 175.96 & \scriptsize $\pm$ 48.04 & 43.64 & \scriptsize $\pm$ 0.05 & 52.42 & \scriptsize $\pm$ 0.87 & 57.81 & \scriptsize $\pm$ 0.01 & 49.01 & \scriptsize $\pm$ 1.22 & \textbf{54.00}&\textbf{\scriptsize $\pm$ 0.07} & 48.90 & \scriptsize $\pm$ 1.10 & \textbf{80.86}&\textbf{\scriptsize $\pm$ 0.00} & 415.92 & \scriptsize $\pm$ 182.65 & \textbf{80.04}&\textbf{\scriptsize $\pm$ 0.00} & 449.12 & \scriptsize $\pm$ 231.23 \\
    Greedy$(\hat \eta)$ & & $\times$ & & $\times$ & 44.08 & \scriptsize $\pm$ 0.04 & 127.17 & \scriptsize $\pm$ 28.98 & \textbf{46.11}&\textbf{\scriptsize $\pm$ 0.06} & 52.50 & \scriptsize $\pm$ 0.95 & 57.81 & \scriptsize $\pm$ 0.01 & 50.08 & \scriptsize $\pm$ 0.62 & \textbf{54.00}&\textbf{\scriptsize $\pm$ 0.07} & 48.58 & \scriptsize $\pm$ 0.72 & \textbf{80.86}&\textbf{\scriptsize $\pm$ 0.00} & 465.89 & \scriptsize $\pm$ 144.03 & \textbf{80.04}&\textbf{\scriptsize $\pm$ 0.00} & 597.36 & \scriptsize $\pm$ 209.20 \\
    Online-FW & \textbf{70.61}&\textbf{\scriptsize $\pm$ 0.01} & 44.76 & \scriptsize $\pm$ 1.36 & 42.42 & \scriptsize $\pm$ 0.03 & 60.33 & \scriptsize $\pm$ 1.26 & 40.20 & \scriptsize $\pm$ 0.04 & 69.75 & \scriptsize $\pm$ 0.46 & 57.74 & \scriptsize $\pm$ 0.03 & 73.35 & \scriptsize $\pm$ 0.52 & 40.10 & \scriptsize $\pm$ 0.18 & 79.41 & \scriptsize $\pm$ 0.97 & \textbf{80.86}&\textbf{\scriptsize $\pm$ 0.00} & 63.70 & \scriptsize $\pm$ 0.94 & \textbf{80.04}&\textbf{\scriptsize $\pm$ 0.00} & 83.86 & \scriptsize $\pm$ 1.95 \\
    Online-FW$(\hat \eta)$ & 70.34 & \scriptsize $\pm$ 0.00 & 88.21 & \scriptsize $\pm$ 1.10 & \textit{47.32}&\textit{\scriptsize $\pm$ 0.06} & 111.40 & \scriptsize $\pm$ 1.59 & 44.63 & \scriptsize $\pm$ 0.05 & 425.18 & \scriptsize $\pm$ 1.53 & \textit{58.87}&\textit{\scriptsize $\pm$ 0.04} & 89.94 & \scriptsize $\pm$ 0.38 & 49.68 & \scriptsize $\pm$ 0.14 & 416.26 & \scriptsize $\pm$ 1.83 & \textbf{80.86}&\textbf{\scriptsize $\pm$ 0.00} & 97.05 & \scriptsize $\pm$ 1.06 & \textbf{80.04}&\textbf{\scriptsize $\pm$ 0.00} & 129.23 & \scriptsize $\pm$ 1.45 \\
    \midrule
    \OMMA{} & 69.90 & \scriptsize $\pm$ 0.00 & 107.61 & \scriptsize $\pm$ 5.61 & 43.04 & \scriptsize $\pm$ 0.05 & 106.79 & \scriptsize $\pm$ 5.02 & 42.14 & \scriptsize $\pm$ 0.05 & 114.83 & \scriptsize $\pm$ 11.24 & 57.80 & \scriptsize $\pm$ 0.03 & 91.40 & \scriptsize $\pm$ 1.46 & \textit{52.04}&\textit{\scriptsize $\pm$ 0.15} & 92.92 & \scriptsize $\pm$ 3.23 & \textbf{80.86}&\textbf{\scriptsize $\pm$ 0.00} & 129.35 & \scriptsize $\pm$ 16.42 & \textbf{80.04}&\textbf{\scriptsize $\pm$ 0.00} & 120.60 & \scriptsize $\pm$ 5.64 \\
    \OMMAeta{} & 70.02 & \scriptsize $\pm$ 0.00 & 106.94 & \scriptsize $\pm$ 4.27 & \textbf{47.67}&\textbf{\scriptsize $\pm$ 0.03} & 109.30 & \scriptsize $\pm$ 6.82 & \textit{45.06}&\textit{\scriptsize $\pm$ 0.05} & 111.92 & \scriptsize $\pm$ 9.07 & \textbf{58.89}&\textbf{\scriptsize $\pm$ 0.04} & 91.80 & \scriptsize $\pm$ 0.94 & 51.52 & \scriptsize $\pm$ 0.13 & 93.68 & \scriptsize $\pm$ 3.58 & \textbf{80.86}&\textbf{\scriptsize $\pm$ 0.00} & 118.54 & \scriptsize $\pm$ 6.60 & \textbf{80.04}&\textbf{\scriptsize $\pm$ 0.00} & 121.44 & \scriptsize $\pm$ 5.28 \\
    \midrule
    Offline-FW & 70.43 & \scriptsize $\pm$ 0.01 & & $-$ & 38.20 & \scriptsize $\pm$ 0.09 & & $-$ & 39.38 & \scriptsize $\pm$ 0.06 & & $-$ & 62.39 & \scriptsize $\pm$ 0.00 & & $-$ & 42.01 & \scriptsize $\pm$ 0.61 & & $-$ & 83.23 & \scriptsize $\pm$ 0.00 & & $-$ & 82.58 & \scriptsize $\pm$ 0.00 & & $-$ \\
\bottomrule
\end{tabular}
}
\end{table*}

\begin{table*}[ht]
    \caption{Predictive performance and running times of the different online algorithms on \emph{multi-class} problems, averaged over 5 runs and reported with standard deviations ($\pm$)s. The best result on each metric is in \textbf{bold}, the second best is in \textit{italic}. We additionally report basic statistics of the benchmarks: number of classes/labels and instances in the test sequence. $\times$ -- means that the algorithm does not support the optimization of that metric. The inference time of Offline-FW was not measured.}
    \label{tab:extended-results-multi-class}
    \vspace{8pt}
    \small
    \centering

\resizebox{\linewidth}{!}{
\setlength\tabcolsep{3 pt}
\begin{tabular}{l|r@{}lr@{}l|r@{}lr@{}l|r@{}lr@{}l|r@{}lr@{}l|r@{}lr@{}l|r@{}lr@{}l|r@{}lr@{}l}
\toprule
    Method & \multicolumn{4}{c|}{Macro F1} & \multicolumn{4}{c|}{Macro F1$@3$} & \multicolumn{4}{c|}{Macro Recall$@3$} & \multicolumn{4}{c|}{Macro Precision$@3$} & \multicolumn{4}{c|}{Multi-class G-Mean} & \multicolumn{4}{c|}{Multi-class H-Mean}& \multicolumn{4}{c}{Multi-class Q-Mean} \\
    & \multicolumn{2}{c}{(\%)} & \multicolumn{2}{c|}{time (s)} & \multicolumn{2}{c}{(\%)} & \multicolumn{2}{c|}{time (s)} & \multicolumn{2}{c}{(\%)} & \multicolumn{2}{c|}{time (s)} & \multicolumn{2}{c}{(\%)} & \multicolumn{2}{c|}{time (s)} & \multicolumn{2}{c}{(\%)} & \multicolumn{2}{c|}{time (s)} & \multicolumn{2}{c}{(\%)} & \multicolumn{2}{c|}{time (s)} & \multicolumn{2}{c}{(\%)} & \multicolumn{2}{c}{time (s)} \\
\midrule
& \multicolumn{28}{c}{\datasettable{News20} ($m = 20, n = 7532$)}  \\ 
 \midrule
    Top-$k$ & \textbf{83.37}&\textbf{\scriptsize $\pm$ 0.00} & 0.06 & \scriptsize $\pm$ 0.00 & 49.23 & \scriptsize $\pm$ 0.00 & 0.08 & \scriptsize $\pm$ 0.00 & 94.78 & \scriptsize $\pm$ 0.00 & 0.06 & \scriptsize $\pm$ 0.00 & 33.95 & \scriptsize $\pm$ 0.00 & 0.07 & \scriptsize $\pm$ 0.01 & 82.51 & \scriptsize $\pm$ 0.00 & 0.07 & \scriptsize $\pm$ 0.02 & 81.66 & \scriptsize $\pm$ 0.00 & 0.09 & \scriptsize $\pm$ 0.01 & 80.12 & \scriptsize $\pm$ 0.00 & 0.07 & \scriptsize $\pm$ 0.02 \\
    \midrule
    Greedy & 82.44 & \scriptsize $\pm$ 0.44 & 1.30 & \scriptsize $\pm$ 0.04 & 72.69 & \scriptsize $\pm$ 0.71 & 1.27 & \scriptsize $\pm$ 0.03 & \textit{94.89}&\textit{\scriptsize $\pm$ 0.06} & 1.28 & \scriptsize $\pm$ 0.05 & \textbf{83.89}&\textbf{\scriptsize $\pm$ 0.38} & 1.22 & \scriptsize $\pm$ 0.07 & & $\times$ & & $\times$ & & $\times$ & & $\times$ & & $\times$ & & $\times$ \\
    Greedy$(\hat \eta)$ & \textit{83.08}&\textit{\scriptsize $\pm$ 0.09} & 1.29 & \scriptsize $\pm$ 0.03 & \textit{73.37}&\textit{\scriptsize $\pm$ 0.60} & 1.27 & \scriptsize $\pm$ 0.03 & \textit{94.89}&\textit{\scriptsize $\pm$ 0.06} & 1.32 & \scriptsize $\pm$ 0.05 & \textbf{83.89}&\textbf{\scriptsize $\pm$ 0.38} & 1.17 & \scriptsize $\pm$ 0.01 & & $\times$ & & $\times$ & & $\times$ & & $\times$ & & $\times$ & & $\times$ \\
    Online-FW & 82.88 & \scriptsize $\pm$ 0.05 & 6.67 & \scriptsize $\pm$ 0.31 & 54.10 & \scriptsize $\pm$ 1.09 & 8.18 & \scriptsize $\pm$ 0.24 & 94.88 & \scriptsize $\pm$ 0.05 & 6.76 & \scriptsize $\pm$ 0.24 & 19.90 & \scriptsize $\pm$ 2.95 & 6.66 & \scriptsize $\pm$ 0.46 & 82.68 & \scriptsize $\pm$ 0.15 & 1.93 & \scriptsize $\pm$ 0.42 & \textit{82.35}&\textit{\scriptsize $\pm$ 0.04} & 8.93 & \scriptsize $\pm$ 1.65 & 80.95 & \scriptsize $\pm$ 0.06 & 8.17 & \scriptsize $\pm$ 0.34 \\
    Online-FW$(\hat \eta)$ & 82.55 & \scriptsize $\pm$ 0.27 & 19.46 & \scriptsize $\pm$ 1.66 & 56.06 & \scriptsize $\pm$ 3.04 & 13.04 & \scriptsize $\pm$ 0.26 & \textbf{95.01}&\textbf{\scriptsize $\pm$ 0.02} & 6.85 & \scriptsize $\pm$ 0.18 & 15.58 & \scriptsize $\pm$ 4.48 & 11.69 & \scriptsize $\pm$ 0.74 & \textbf{82.78}&\textbf{\scriptsize $\pm$ 0.05} & 1.92 & \scriptsize $\pm$ 0.43 & \textbf{82.57}&\textbf{\scriptsize $\pm$ 0.10} & 21.99 & \scriptsize $\pm$ 5.32 & \textit{80.97}&\textit{\scriptsize $\pm$ 0.10} & 22.87 & \scriptsize $\pm$ 1.17 \\
    \midrule
    OMMA & 82.11 & \scriptsize $\pm$ 0.35 & 3.51 & \scriptsize $\pm$ 0.08 & 72.84 & \scriptsize $\pm$ 0.61 & 3.53 & \scriptsize $\pm$ 0.07 & \textit{94.89}&\textit{\scriptsize $\pm$ 0.05} & 2.83 & \scriptsize $\pm$ 0.14 & \textit{83.88}&\textit{\scriptsize $\pm$ 0.51} & 3.19 & \scriptsize $\pm$ 0.58 & 82.72 & \scriptsize $\pm$ 0.15 & 3.15 & \scriptsize $\pm$ 0.66 & 82.08 & \scriptsize $\pm$ 0.14 & 3.57 & \scriptsize $\pm$ 0.60 & 80.95 & \scriptsize $\pm$ 0.08 & 3.81 & \scriptsize $\pm$ 0.40 \\
    OMMA$(\hat \eta)$ & 83.07 & \scriptsize $\pm$ 0.11 & 3.57 & \scriptsize $\pm$ 0.07 & \textbf{73.41}&\textbf{\scriptsize $\pm$ 1.06} & 3.53 & \scriptsize $\pm$ 0.03 & \textbf{95.01}&\textbf{\scriptsize $\pm$ 0.03} & 3.11 & \scriptsize $\pm$ 0.21 & 83.30 & \scriptsize $\pm$ 0.43 & 3.16 & \scriptsize $\pm$ 0.56 & \textit{82.77}&\textit{\scriptsize $\pm$ 0.06} & 3.44 & \scriptsize $\pm$ 0.81 & 82.18 & \scriptsize $\pm$ 0.10 & 3.45 & \scriptsize $\pm$ 0.56 & \textbf{81.00}&\textbf{\scriptsize $\pm$ 0.09} & 3.56 & \scriptsize $\pm$ 0.07 \\
    \midrule
    Offline-FW & 83.31 & \scriptsize $\pm$ 0.02 & & $-$ & 72.37 & \scriptsize $\pm$ 0.28 & & $-$ & 94.93 & \scriptsize $\pm$ 0.00 & & $-$ & 80.87 & \scriptsize $\pm$ 1.48 & & $-$ & 82.61 & \scriptsize $\pm$ 0.11 & & $-$ & 82.07 & \scriptsize $\pm$ 0.01 & & $-$ & 78.52 & \scriptsize $\pm$ 0.59 & & $-$ \\
\midrule
& \multicolumn{28}{c}{\datasettable{Ledgar-LexGlue} ($m = 100, n = 10000$)}  \\ 
 \midrule
    Top-$k$ & 79.06 & \scriptsize $\pm$ 0.00 & 0.09 & \scriptsize $\pm$ 0.00 & 51.80 & \scriptsize $\pm$ 0.00 & 0.12 & \scriptsize $\pm$ 0.00 & 92.04 & \scriptsize $\pm$ 0.00 & 0.10 & \scriptsize $\pm$ 0.00 & 38.88 & \scriptsize $\pm$ 0.00 & 0.10 & \scriptsize $\pm$ 0.01 & 0.00 & \scriptsize $\pm$ 0.00 & 0.10 & \scriptsize $\pm$ 0.02 & 0.00 & \scriptsize $\pm$ 0.00 & 0.12 & \scriptsize $\pm$ 0.03 & 69.21 & \scriptsize $\pm$ 0.00 & 0.09 & \scriptsize $\pm$ 0.00 \\
    \midrule
    Greedy & 79.30 & \scriptsize $\pm$ 0.03 & 1.71 & \scriptsize $\pm$ 0.05 & 78.08 & \scriptsize $\pm$ 0.29 & 1.81 & \scriptsize $\pm$ 0.03 & 93.26 & \scriptsize $\pm$ 0.23 & 1.69 & \scriptsize $\pm$ 0.06 & \textit{91.17}&\textit{\scriptsize $\pm$ 0.58} & 1.70 & \scriptsize $\pm$ 0.08 & & $\times$ & & $\times$ & & $\times$ & & $\times$ & & $\times$ & & $\times$ \\
    Greedy$(\hat \eta)$ & \textbf{79.35}&\textbf{\scriptsize $\pm$ 0.24} & 1.72 & \scriptsize $\pm$ 0.02 & \textit{78.21}&\textit{\scriptsize $\pm$ 0.29} & 1.75 & \scriptsize $\pm$ 0.03 & 93.26 & \scriptsize $\pm$ 0.23 & 1.64 & \scriptsize $\pm$ 0.05 & \textit{91.17}&\textit{\scriptsize $\pm$ 0.58} & 1.69 & \scriptsize $\pm$ 0.24 & & $\times$ & & $\times$ & & $\times$ & & $\times$ & & $\times$ & & $\times$ \\
    Online-FW & 79.22 & \scriptsize $\pm$ 0.10 & 7.21 & \scriptsize $\pm$ 1.60 & 70.94 & \scriptsize $\pm$ 0.80 & 7.32 & \scriptsize $\pm$ 0.21 & 93.30 & \scriptsize $\pm$ 0.28 & 6.31 & \scriptsize $\pm$ 0.09 & 54.52 & \scriptsize $\pm$ 0.74 & 8.04 & \scriptsize $\pm$ 1.97 & 62.31 & \scriptsize $\pm$ 31.15 & 3.06 & \scriptsize $\pm$ 0.47 & 75.81&\scriptsize $\pm$ 0.67 & 8.73 & \scriptsize $\pm$ 1.86 & \textbf{74.59}&\textbf{\scriptsize $\pm$ 0.31} & 5.49 & \scriptsize $\pm$ 0.67 \\
    Online-FW$(\hat \eta)$ & 79.22 & \scriptsize $\pm$ 0.09 & 14.62 & \scriptsize $\pm$ 3.47 & 73.85 & \scriptsize $\pm$ 0.28 & 13.48 & \scriptsize $\pm$ 1.04 & \textit{93.38}&\textit{\scriptsize $\pm$ 0.10} & 8.56 & \scriptsize $\pm$ 0.63 & 49.63 & \scriptsize $\pm$ 2.36 & 9.97 & \scriptsize $\pm$ 0.19 & \textit{78.02}&\textit{\scriptsize $\pm$ 0.33} & 3.14 & \scriptsize $\pm$ 0.64 & \textbf{77.58}&\textbf{\scriptsize $\pm$ 0.31} & 37.99 & \scriptsize $\pm$ 4.17 & 74.50 & \scriptsize $\pm$ 0.32 & 41.84 & \scriptsize $\pm$ 3.15 \\
    \midrule
    OMMA & 79.28 & \scriptsize $\pm$ 0.04 & 4.50 & \scriptsize $\pm$ 0.17 & 78.10 & \scriptsize $\pm$ 0.27 & 5.05 & \scriptsize $\pm$ 0.08 & 93.26 & \scriptsize $\pm$ 0.23 & 4.12 & \scriptsize $\pm$ 0.29 & \textbf{91.41}&\textbf{\scriptsize $\pm$ 0.75} & 3.84 & \scriptsize $\pm$ 0.27 & 77.48 & \scriptsize $\pm$ 0.61 & 3.96 & \scriptsize $\pm$ 0.75 & 74.62 & \scriptsize $\pm$ 2.49 & 5.20 & \scriptsize $\pm$ 1.14 & \textbf{74.59}&\textbf{\scriptsize $\pm$ 0.25} & 4.76 & \scriptsize $\pm$ 0.10 \\
    OMMA$(\hat \eta)$ & \textit{79.34}&\textit{\scriptsize $\pm$ 0.26} & 4.70 & \scriptsize $\pm$ 0.09 & \textbf{78.22}&\textbf{\scriptsize $\pm$ 0.30} & 5.11 & \scriptsize $\pm$ 0.11 & \textbf{93.39}&\textbf{\scriptsize $\pm$ 0.09} & 4.02 & \scriptsize $\pm$ 0.27 & 90.10 & \scriptsize $\pm$ 0.33 & 4.01 & \scriptsize $\pm$ 0.45 & \textbf{78.03}&\textbf{\scriptsize $\pm$ 0.32} & 4.01 & \scriptsize $\pm$ 0.69 & \textit{76.08}&\textit{\scriptsize $\pm$ 0.37} & 4.54 & \scriptsize $\pm$ 0.09 & \textit{74.53}&\textit{\scriptsize $\pm$ 0.24} & 4.84 & \scriptsize $\pm$ 0.05 \\
    \midrule
    Offline-FW & 78.76 & \scriptsize $\pm$ 0.13 & & $-$ & 77.82 & \scriptsize $\pm$ 0.51 & & $-$ & 93.46 & \scriptsize $\pm$ 0.00 & & $-$ & 82.78 & \scriptsize $\pm$ 1.16 & & $-$ & 77.33 & \scriptsize $\pm$ 0.36 & & $-$ & 75.12 & \scriptsize $\pm$ 0.07 & & $-$ & 74.25 & \scriptsize $\pm$ 0.13 & & $-$ \\
\midrule
& \multicolumn{28}{c}{\datasettable{Caltech-256} ($m = 256, n = 14890$)}  \\ 
 \midrule
    Top-$k$ & 79.45 & \scriptsize $\pm$ 0.00 & 0.16 & \scriptsize $\pm$ 0.00 & 46.82 & \scriptsize $\pm$ 0.00 & 0.18 & \scriptsize $\pm$ 0.00 & 89.85 & \scriptsize $\pm$ 0.00 & 0.18 & \scriptsize $\pm$ 0.00 & 32.58 & \scriptsize $\pm$ 0.00 & 0.20 & \scriptsize $\pm$ 0.01 & 77.32 & \scriptsize $\pm$ 0.00 & 0.17 & \scriptsize $\pm$ 0.01 & 75.69 & \scriptsize $\pm$ 0.00 & 0.18 & \scriptsize $\pm$ 0.00 & 74.53 & \scriptsize $\pm$ 0.00 & 0.16 & \scriptsize $\pm$ 0.00 \\
    \midrule
    Greedy & 79.59 & \scriptsize $\pm$ 0.07 & 2.60 & \scriptsize $\pm$ 0.04 & 79.02 & \scriptsize $\pm$ 0.10 & 2.69 & \scriptsize $\pm$ 0.04 & 90.20 & \scriptsize $\pm$ 0.07 & 2.72 & \scriptsize $\pm$ 0.07 & 96.61 & \scriptsize $\pm$ 0.32 & 2.93 & \scriptsize $\pm$ 0.29 & & $\times$ & & $\times$ & & $\times$ & & $\times$ & & $\times$ & & $\times$ \\
    Greedy$(\hat \eta)$ & \textbf{79.70}&\textbf{\scriptsize $\pm$ 0.06} & 2.71 & \scriptsize $\pm$ 0.05 & \textbf{79.14}&\textbf{\scriptsize $\pm$ 0.07} & 2.74 & \scriptsize $\pm$ 0.07 & 90.20 & \scriptsize $\pm$ 0.07 & 2.79 & \scriptsize $\pm$ 0.15 & 96.61 & \scriptsize $\pm$ 0.32 & 2.88 & \scriptsize $\pm$ 0.39 & & $\times$ & & $\times$ & & $\times$ & & $\times$ & & $\times$ & & $\times$ \\
    Online-FW & 79.15 & \scriptsize $\pm$ 0.16 & 10.07 & \scriptsize $\pm$ 0.42 & 70.29 & \scriptsize $\pm$ 0.77 & 12.46 & \scriptsize $\pm$ 1.39 & 90.20 & \scriptsize $\pm$ 0.05 & 57.84 & \scriptsize $\pm$ 2.57 & 63.22 & \scriptsize $\pm$ 0.61 & 12.08 & \scriptsize $\pm$ 1.01 & 78.31 & \scriptsize $\pm$ 0.14 & 3.38 & \scriptsize $\pm$ 0.04 & \textit{77.99}&\textit{\scriptsize $\pm$ 0.12} & 16.06 & \scriptsize $\pm$ 1.98 & 76.00 & \scriptsize $\pm$ 0.11 & 15.86 & \scriptsize $\pm$ 2.15 \\
    Online-FW$(\hat \eta)$ & 79.29 & \scriptsize $\pm$ 0.08 & 52.60 & \scriptsize $\pm$ 2.07 & 72.97 & \scriptsize $\pm$ 0.85 & 28.32 & \scriptsize $\pm$ 2.72 & \textit{90.34}&\textit{\scriptsize $\pm$ 0.03} & 21.59 & \scriptsize $\pm$ 1.07 & 65.42 & \scriptsize $\pm$ 1.30 & 22.67 & \scriptsize $\pm$ 1.24 & \textbf{78.41}&\textbf{\scriptsize $\pm$ 0.12} & 3.94 & \scriptsize $\pm$ 0.03 & \textbf{78.08}&\textbf{\scriptsize $\pm$ 0.12} & 53.02 & \scriptsize $\pm$ 0.23 & \textit{76.07}&\textit{\scriptsize $\pm$ 0.12} & 52.89 & \scriptsize $\pm$ 0.17 \\
    \midrule
    OMMA & 79.54 & \scriptsize $\pm$ 0.08 & 7.01 & \scriptsize $\pm$ 0.10 & 78.96 & \scriptsize $\pm$ 0.13 & 7.28 & \scriptsize $\pm$ 0.16 & 90.20 & \scriptsize $\pm$ 0.07 & 5.71 & \scriptsize $\pm$ 0.10 & \textit{96.77}&\textit{\scriptsize $\pm$ 0.36} & 7.13 & \scriptsize $\pm$ 0.46 & 78.33 & \scriptsize $\pm$ 0.13 & 5.61 & \scriptsize $\pm$ 0.12 & 77.12 & \scriptsize $\pm$ 0.17 & 6.42 & \scriptsize $\pm$ 0.20 & \textbf{76.15}&\textbf{\scriptsize $\pm$ 0.07} & 6.99 & \scriptsize $\pm$ 0.17 \\
    OMMA$(\hat \eta)$ & \textit{79.66}&\textit{\scriptsize $\pm$ 0.04} & 7.11 & \scriptsize $\pm$ 0.09 & \textit{79.11}&\textit{\scriptsize $\pm$ 0.06} & 7.23 & \scriptsize $\pm$ 0.08 & \textbf{90.35}&\textbf{\scriptsize $\pm$ 0.03} & 5.88 & \scriptsize $\pm$ 0.18 & \textbf{96.96}&\textbf{\scriptsize $\pm$ 0.12} & 7.16 & \scriptsize $\pm$ 0.54 & \textit{78.36}&\textit{\scriptsize $\pm$ 0.11} & 5.68 & \scriptsize $\pm$ 0.13 & 77.01 & \scriptsize $\pm$ 0.10 & 6.35 & \scriptsize $\pm$ 0.06 & 75.99 & \scriptsize $\pm$ 0.09 & 7.13 & \scriptsize $\pm$ 0.14 \\
    \midrule
    Offline-FW & 78.40 & \scriptsize $\pm$ 0.00 & & $-$ & 77.74 & \scriptsize $\pm$ 0.07 & & $-$ & 90.28 & \scriptsize $\pm$ 0.00 & & $-$ & 91.34 & \scriptsize $\pm$ 1.24 & & $-$ & 77.88 & \scriptsize $\pm$ 0.09 & & $-$ & 76.59 & \scriptsize $\pm$ 0.00 & & $-$ & 75.72 & \scriptsize $\pm$ 0.00 & & $-$ \\
\bottomrule
\end{tabular}
}
\end{table*}

\begin{figure*}[ht]
    \centering
    \scriptsize
    
    \vspace{5pt}\datasettable{YouTube}\vspace{5pt}
    
    \includegraphics[width=0.135\textwidth]{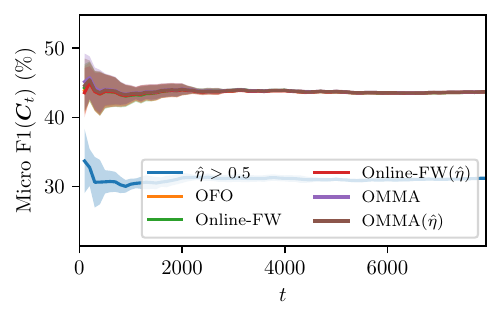}
    \includegraphics[width=0.135\textwidth]{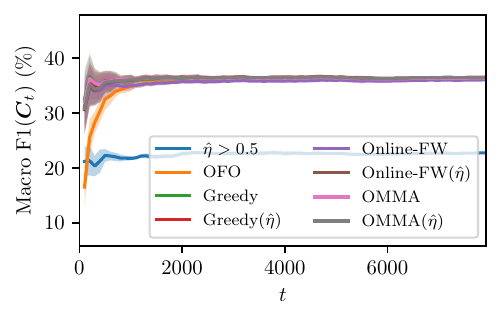}
    \includegraphics[width=0.135\textwidth]{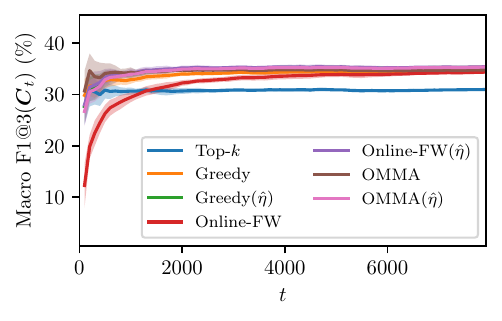}
    \includegraphics[width=0.135\textwidth]{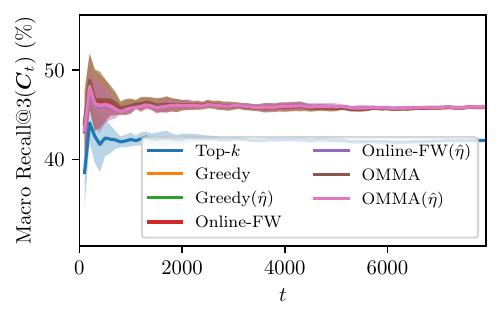}
    \includegraphics[width=0.135\textwidth]{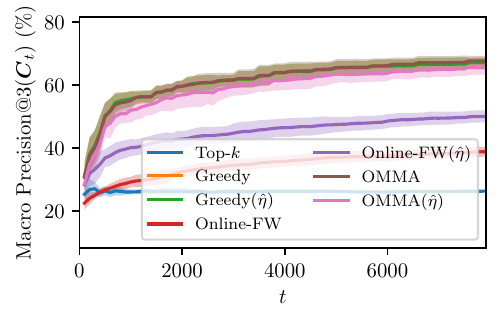}
    \includegraphics[width=0.135\textwidth]{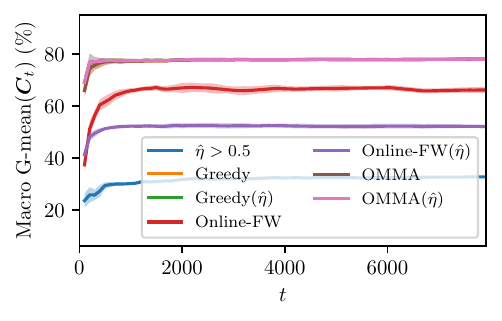}
    \includegraphics[width=0.135\textwidth]{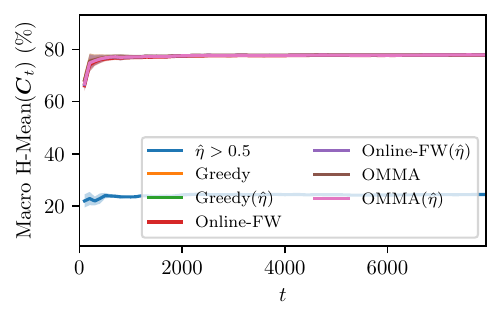}

    \vspace{5pt}\datasettable{Eurlex-LexGlue}\vspace{5pt}
    
    \includegraphics[width=0.135\textwidth]{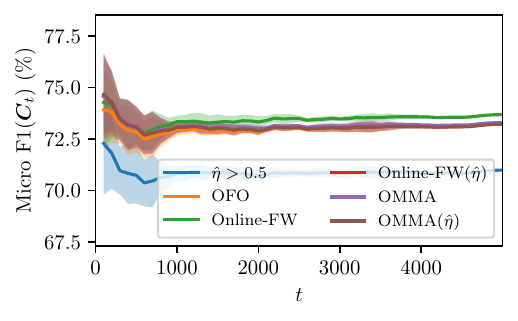}
    \includegraphics[width=0.135\textwidth]{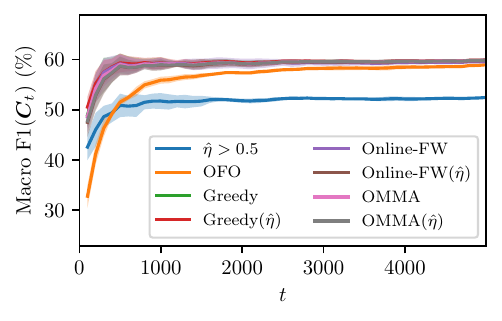}
    \includegraphics[width=0.135\textwidth]{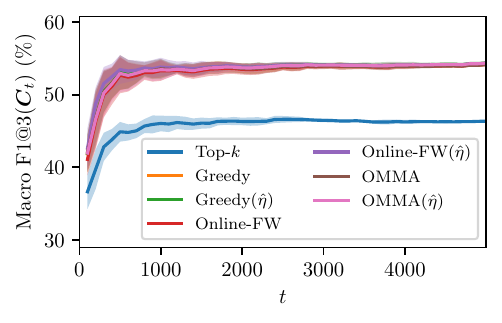}
    \includegraphics[width=0.135\textwidth]{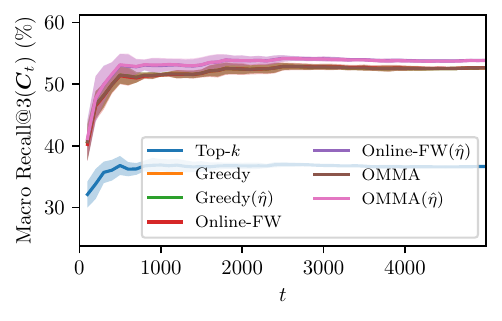}
    \includegraphics[width=0.135\textwidth]{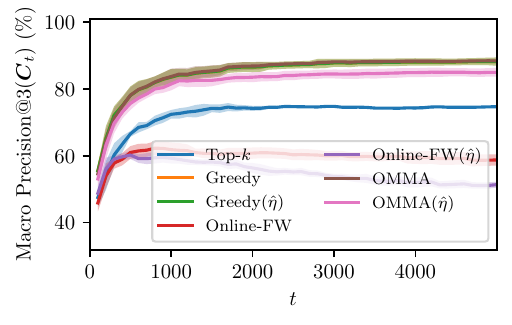}
    \includegraphics[width=0.135\textwidth]{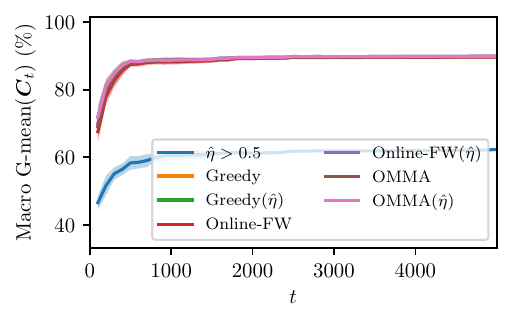}
    \includegraphics[width=0.135\textwidth]{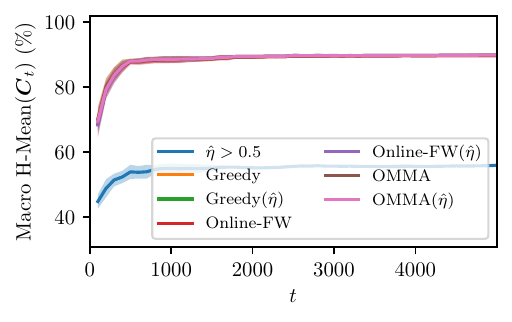}

    \vspace{5pt}\datasettable{Mediamill}\vspace{5pt}

    \includegraphics[width=0.135\textwidth]{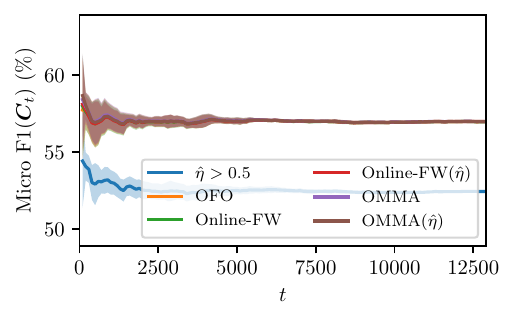}
    \includegraphics[width=0.135\textwidth]{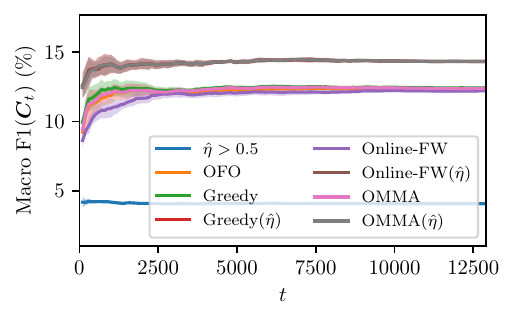}
    \includegraphics[width=0.135\textwidth]{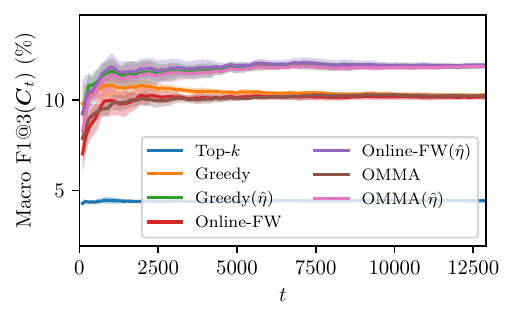}
    \includegraphics[width=0.135\textwidth]{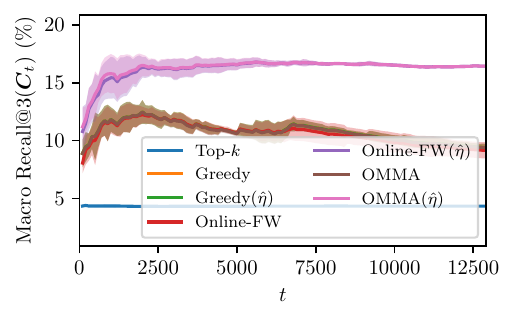}
    \includegraphics[width=0.135\textwidth]{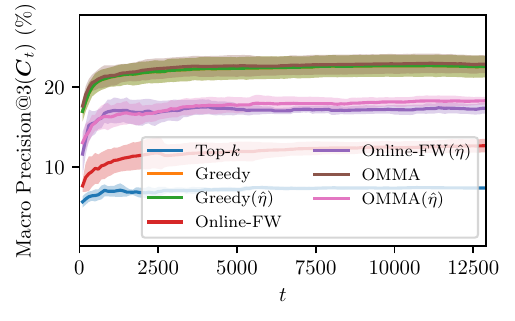}
    \includegraphics[width=0.135\textwidth]{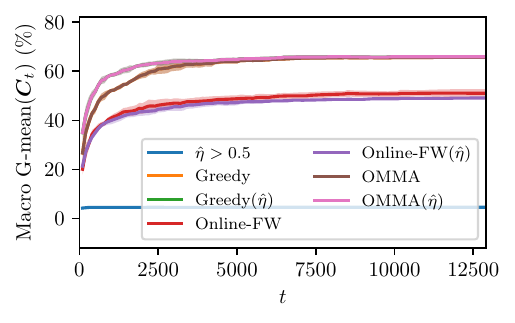}
    \includegraphics[width=0.135\textwidth]{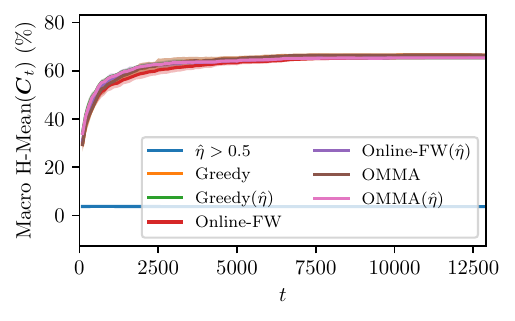}

    \vspace{5pt}\datasettable{Flickr}\vspace{5pt}
    
    \includegraphics[width=0.135\textwidth]{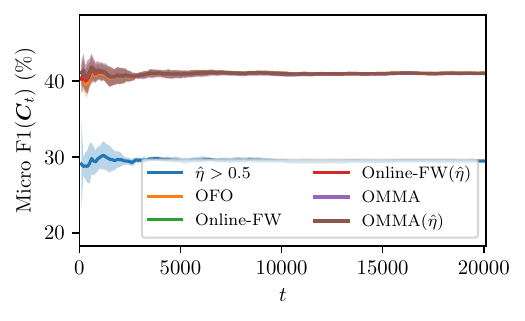}
    \includegraphics[width=0.135\textwidth]{figs/plots-v2/flicker_deepwalk_plt_macro_f1_k=0_pred_utility_history.pdf}
    \includegraphics[width=0.135\textwidth]{figs/plots-v2/flicker_deepwalk_plt_macro_f1_k=3_pred_utility_history.pdf}
    \includegraphics[width=0.135\textwidth]{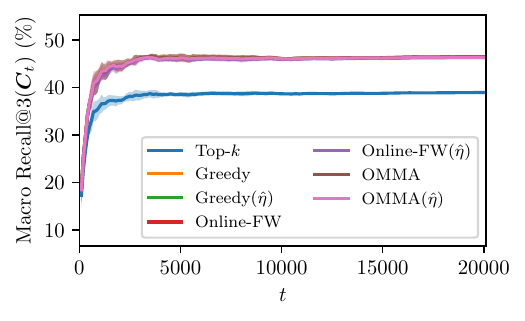}
    \includegraphics[width=0.135\textwidth]{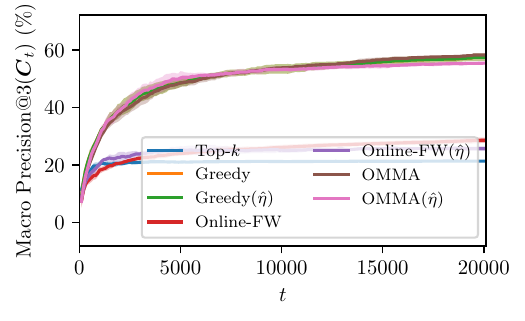}
    \includegraphics[width=0.135\textwidth]{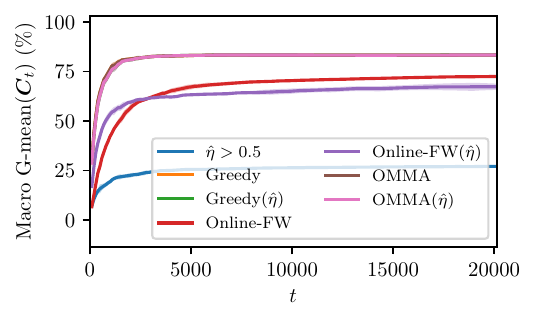}
    \includegraphics[width=0.135\textwidth]{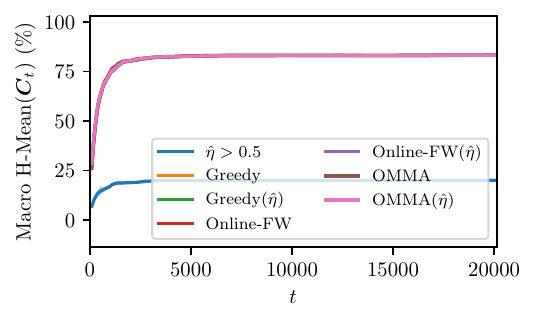}

    \vspace{5pt}\datasettable{RCV1X}\vspace{5pt}
    
    \includegraphics[width=0.135\textwidth]{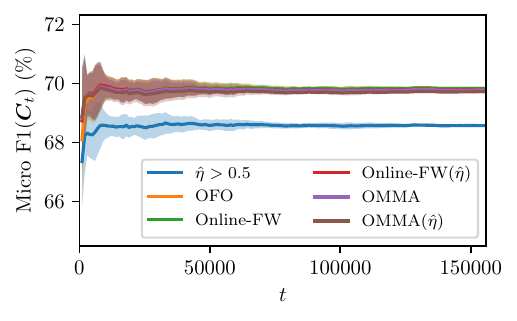}
    \includegraphics[width=0.135\textwidth]{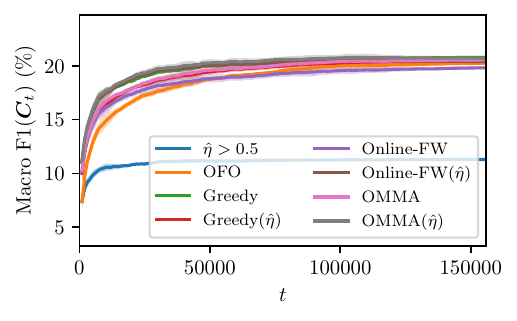}
    \includegraphics[width=0.135\textwidth]{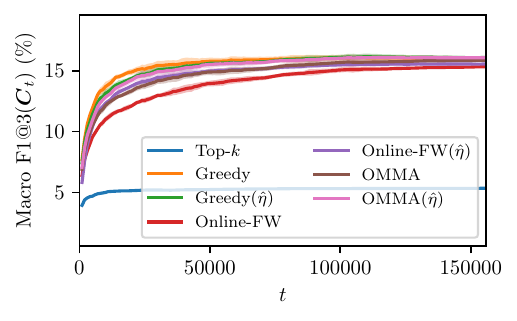}
    \includegraphics[width=0.135\textwidth]{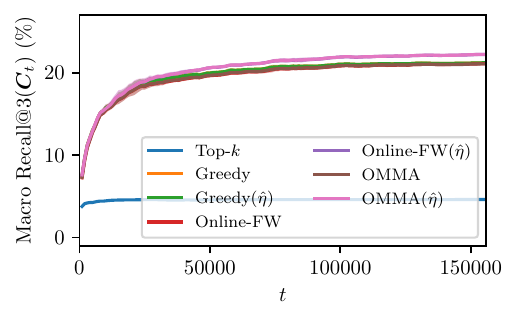}
    \includegraphics[width=0.135\textwidth]{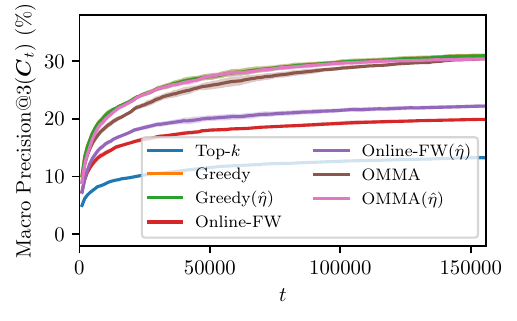}
    \includegraphics[width=0.135\textwidth]{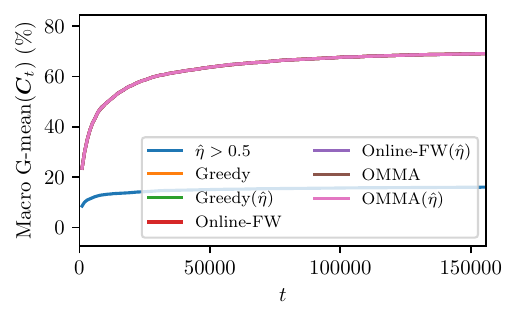}
    \includegraphics[width=0.135\textwidth]{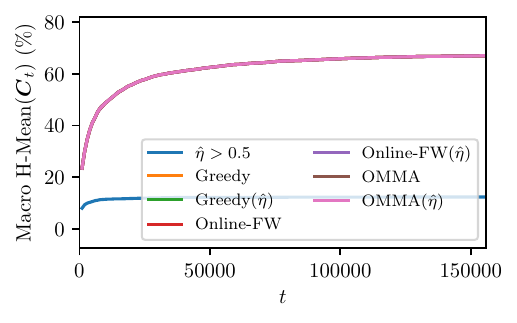}

    \vspace{5pt}\datasettable{AmazonCat}\vspace{5pt}
    
    \includegraphics[width=0.135\textwidth]{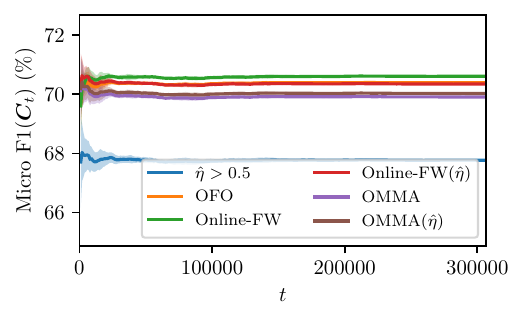}
    \includegraphics[width=0.135\textwidth]{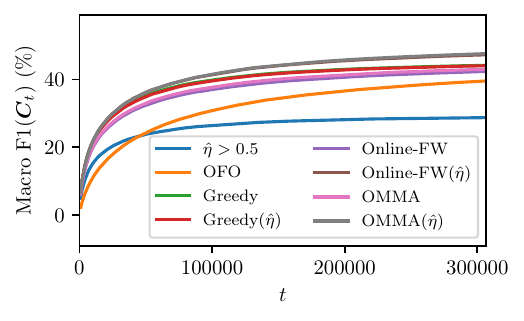}
    \includegraphics[width=0.135\textwidth]{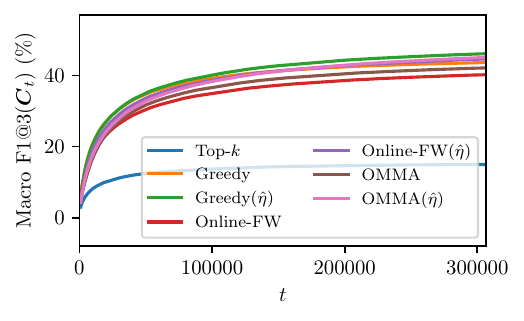}
    \includegraphics[width=0.135\textwidth]{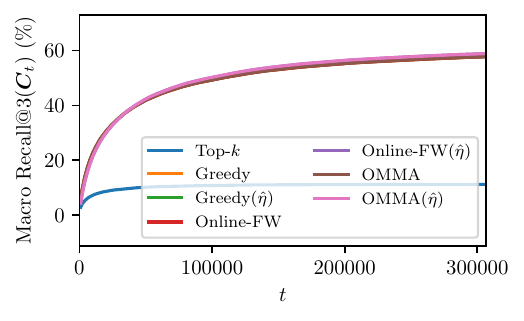}
    \includegraphics[width=0.135\textwidth]{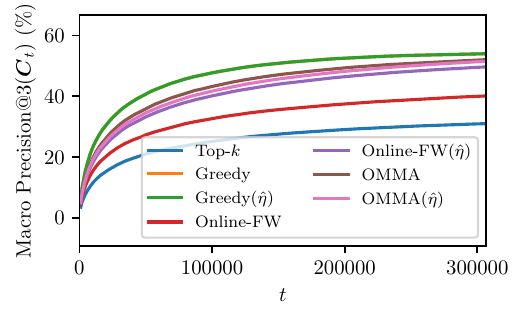}
    \includegraphics[width=0.135\textwidth]{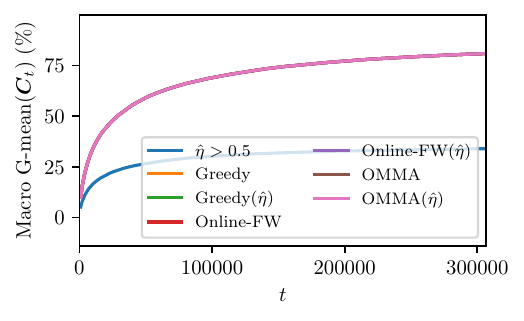}
    \includegraphics[width=0.135\textwidth]{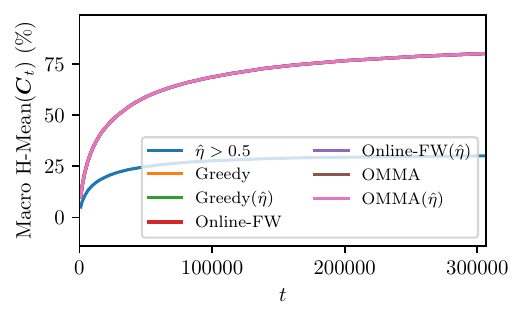}

    \vspace{5pt}\datasettable{News20}\vspace{5pt}
    
    \includegraphics[width=0.135\textwidth]{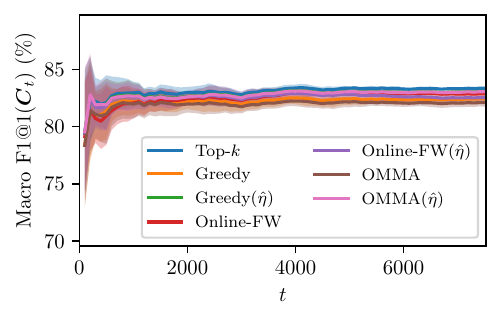}
    \includegraphics[width=0.135\textwidth]{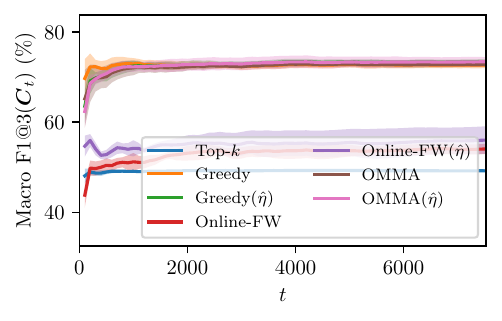}
    \includegraphics[width=0.135\textwidth]{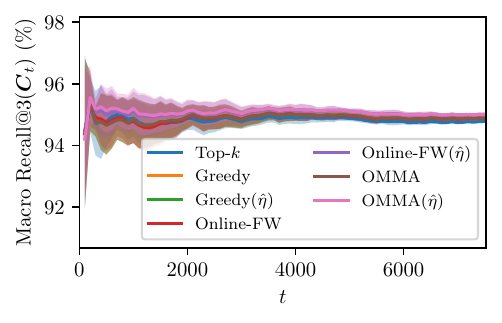}
    \includegraphics[width=0.135\textwidth]{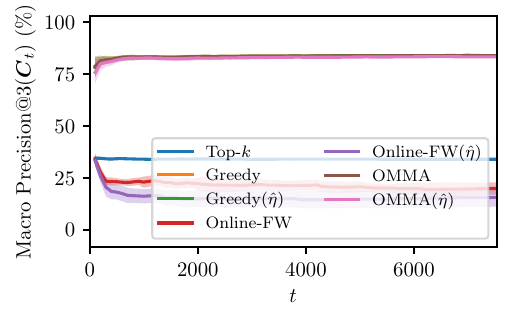}
    \includegraphics[width=0.135\textwidth]{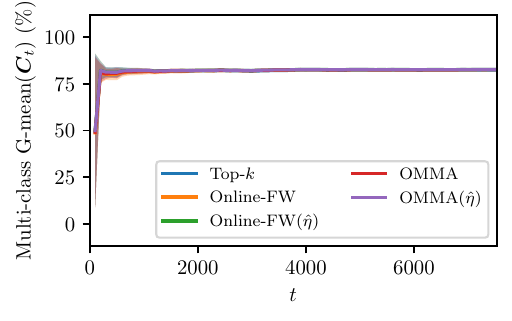}
    \includegraphics[width=0.135\textwidth]{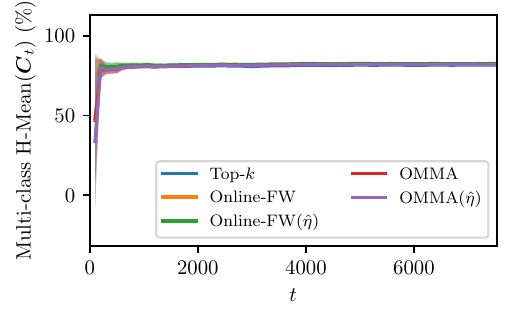}
    \includegraphics[width=0.135\textwidth]{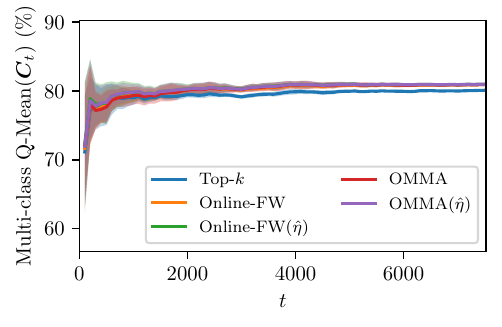}

    \vspace{5pt}\datasettable{Ledgar-LexGlue}\vspace{5pt}
    
    \includegraphics[width=0.135\textwidth]{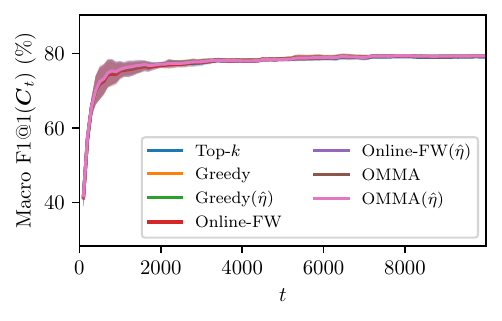}
    \includegraphics[width=0.135\textwidth]{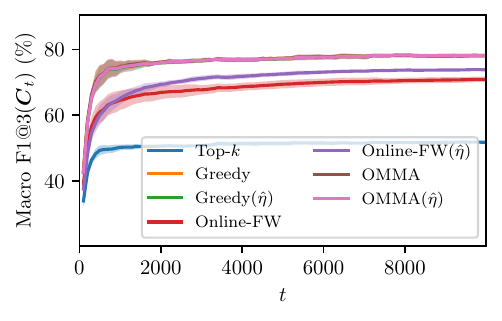}
    \includegraphics[width=0.135\textwidth]{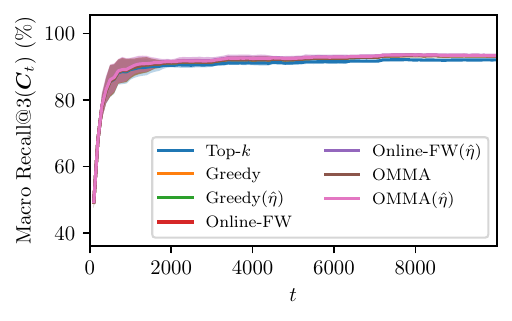}
    \includegraphics[width=0.135\textwidth]{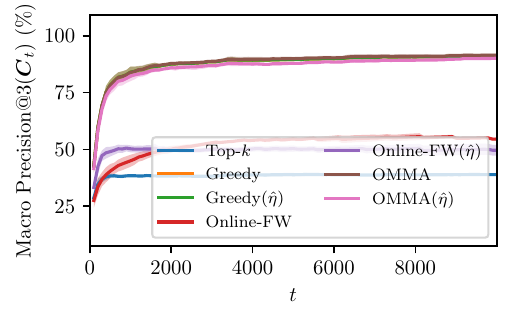}
    \includegraphics[width=0.135\textwidth]{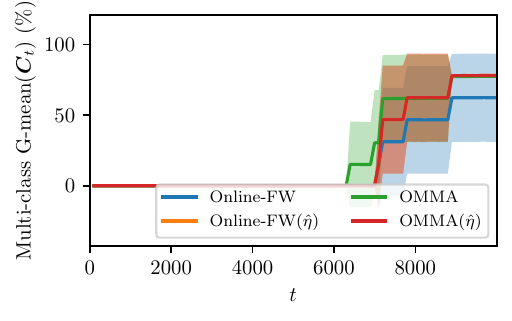}
    \includegraphics[width=0.135\textwidth]{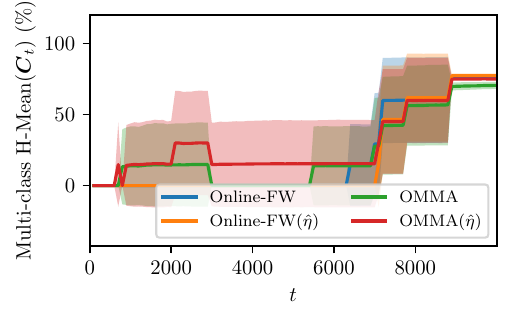}
    \includegraphics[width=0.135\textwidth]{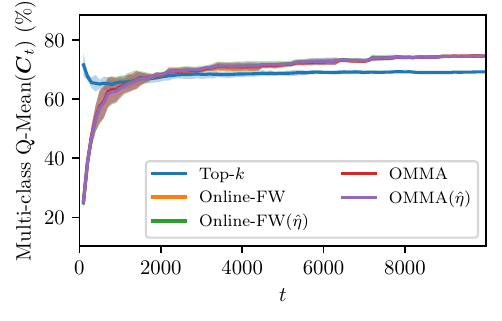}

    \vspace{5pt}\datasettable{Caltech-256}\vspace{5pt}
    
    \includegraphics[width=0.135\textwidth]{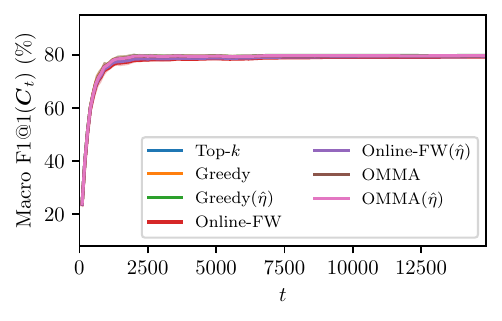}
    \includegraphics[width=0.135\textwidth]{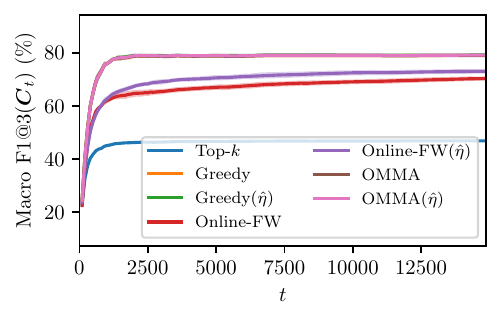}
    \includegraphics[width=0.135\textwidth]{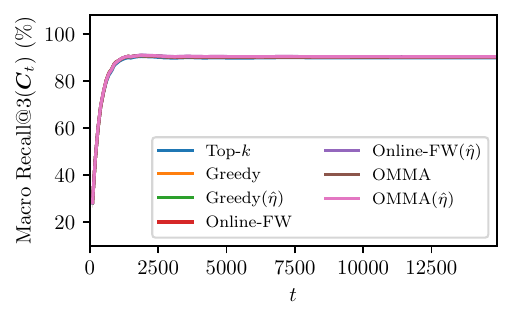}
    \includegraphics[width=0.135\textwidth]{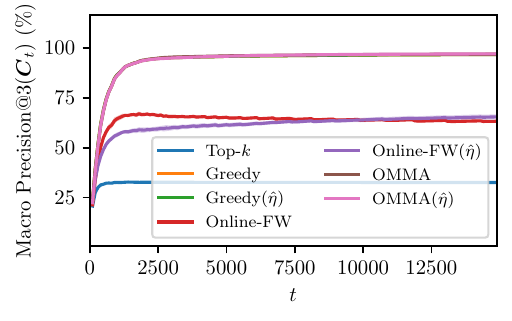}
    \includegraphics[width=0.135\textwidth]{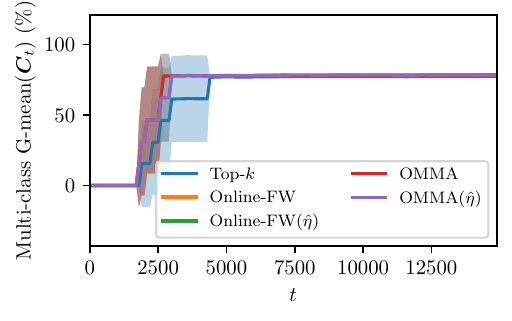}
    \includegraphics[width=0.135\textwidth]{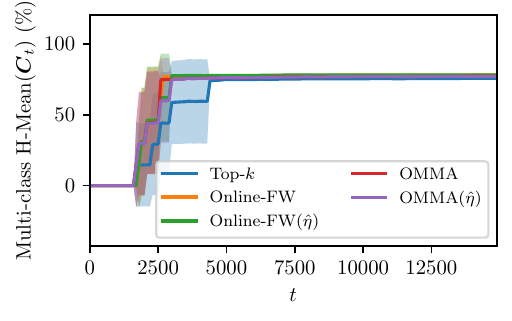}
    \includegraphics[width=0.135\textwidth]{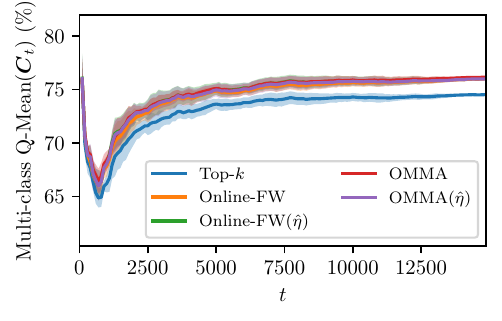}

    \caption{Running comparison of performance for the online algorithms. Averaged over 5 runs, the opaque fill indicate the standard deviation at given iteration $t$.}
    \label{fig:all-plots}
\end{figure*}

\begin{figure*}[ht]
    \centering
    \scriptsize
    
    \vspace{5pt}\datasettable{YouTube}\vspace{5pt}
    
    \includegraphics[width=0.135\textwidth]{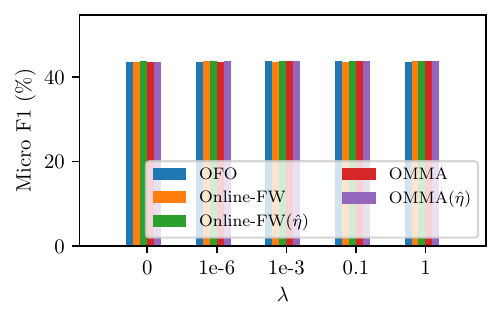}
    \includegraphics[width=0.135\textwidth]{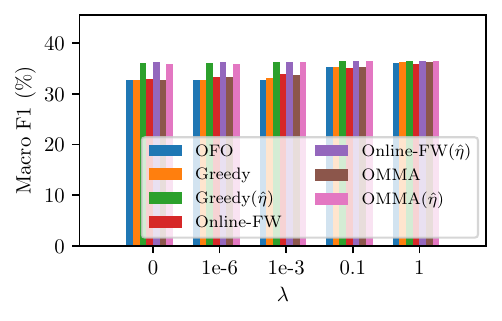}
    \includegraphics[width=0.135\textwidth]{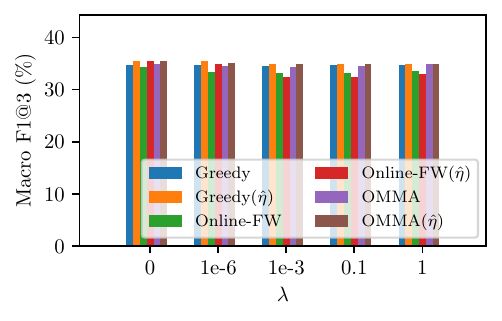}
    \includegraphics[width=0.135\textwidth]{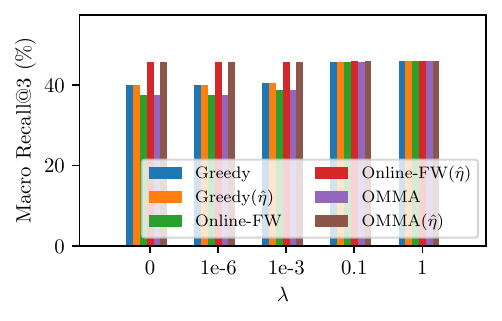}
    \includegraphics[width=0.135\textwidth]{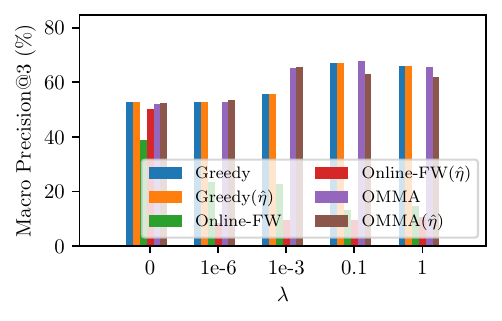}
    \includegraphics[width=0.135\textwidth]{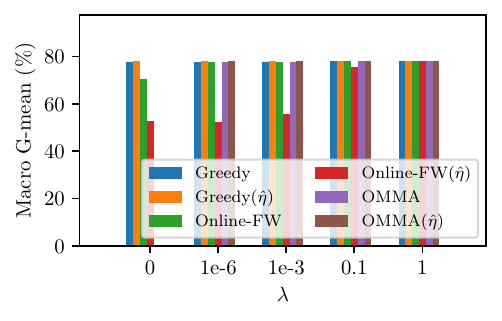}
    \includegraphics[width=0.135\textwidth]{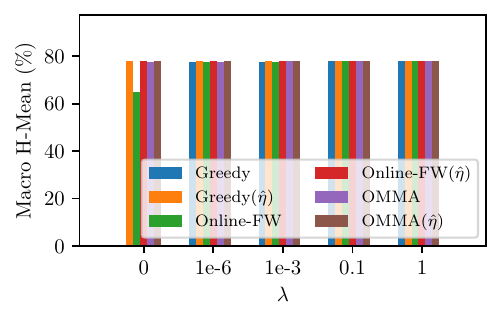}

    \vspace{5pt}\datasettable{Eurlex-LexGlue}\vspace{5pt}
    
    \includegraphics[width=0.135\textwidth]{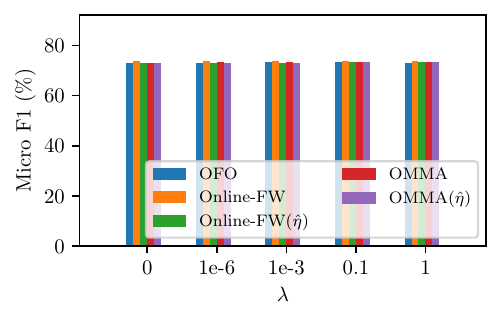}
    \includegraphics[width=0.135\textwidth]{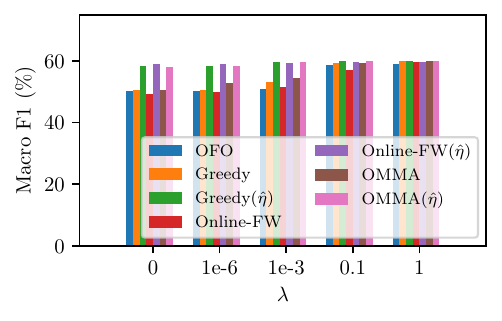}
    \includegraphics[width=0.135\textwidth]{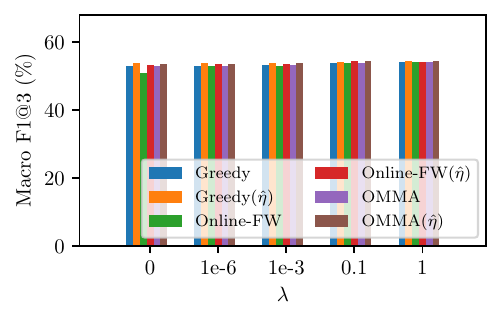}
    \includegraphics[width=0.135\textwidth]{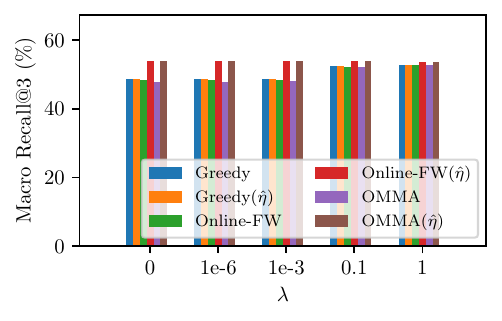}
    \includegraphics[width=0.135\textwidth]{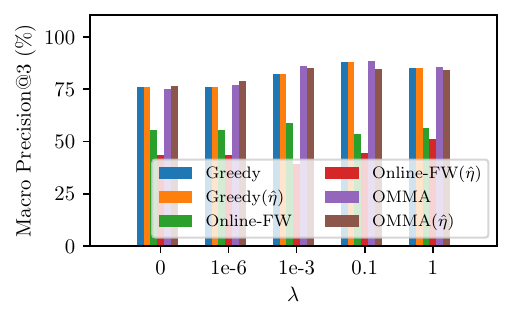}
    \includegraphics[width=0.135\textwidth]{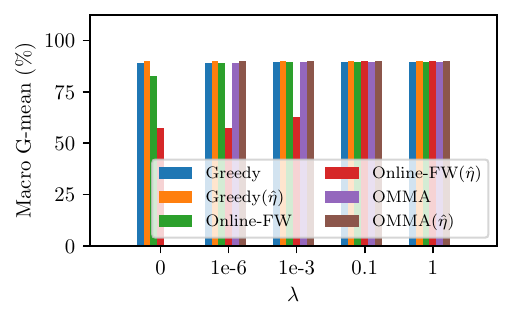}
    \includegraphics[width=0.135\textwidth]{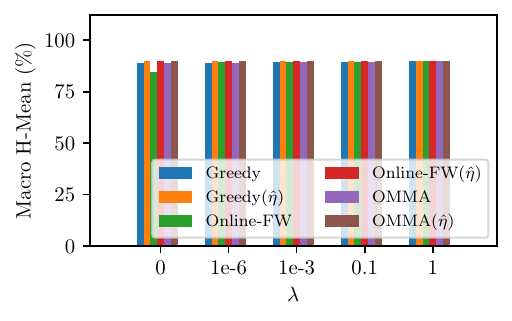}

    \vspace{5pt}\datasettable{Mediamill}\vspace{5pt}

    \includegraphics[width=0.135\textwidth]{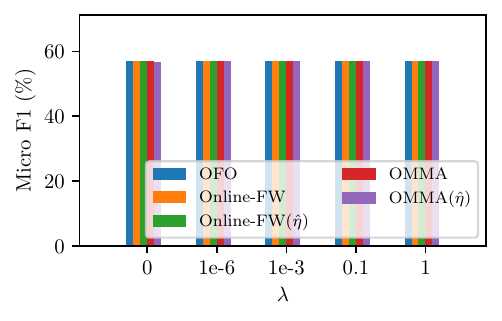}
    \includegraphics[width=0.135\textwidth]{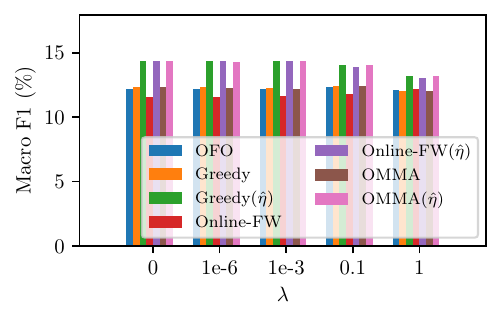}
    \includegraphics[width=0.135\textwidth]{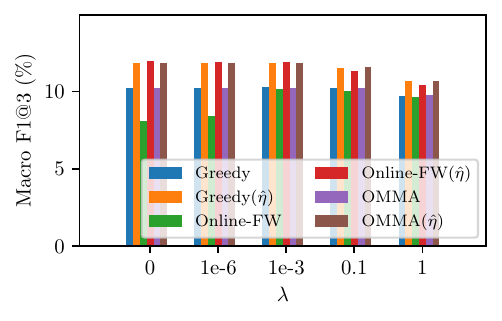}
    \includegraphics[width=0.135\textwidth]{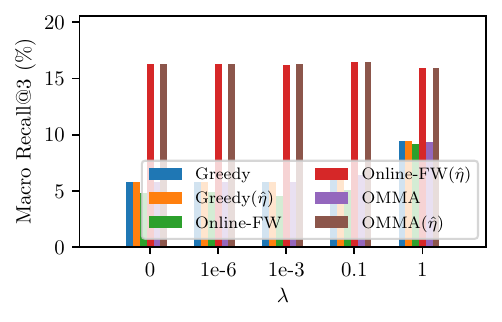}
    \includegraphics[width=0.135\textwidth]{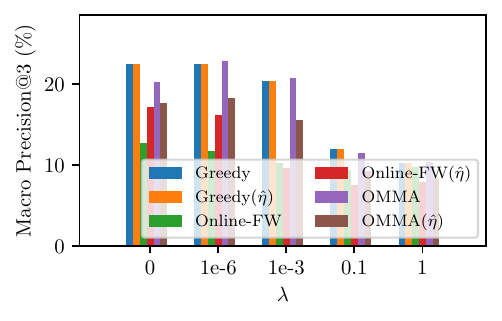}
    \includegraphics[width=0.135\textwidth]{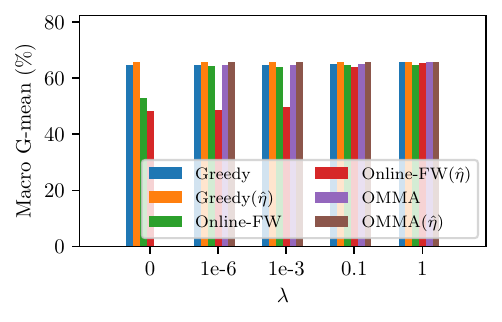}
    \includegraphics[width=0.135\textwidth]{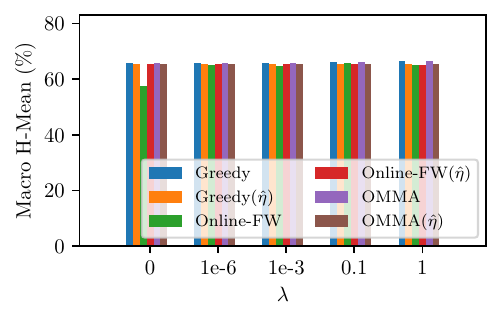}

    \vspace{5pt}\datasettable{Flickr}\vspace{5pt}
    
    \includegraphics[width=0.135\textwidth]{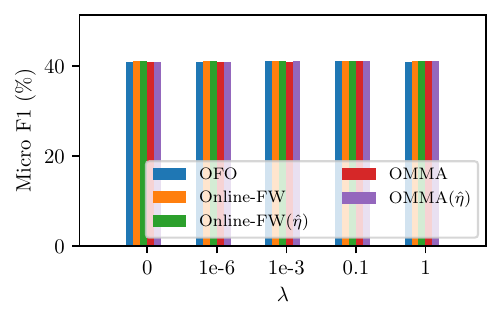}
    \includegraphics[width=0.135\textwidth]{figs/plots-reg/flicker_deepwalk_plt_macro_f1_k=0_pred_utility_history.pdf}
    \includegraphics[width=0.135\textwidth]{figs/plots-reg/flicker_deepwalk_plt_macro_f1_k=3_pred_utility_history.pdf}
    \includegraphics[width=0.135\textwidth]{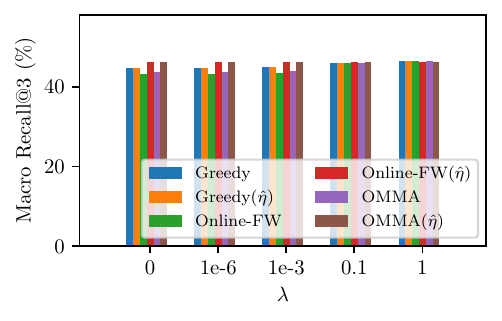}
    \includegraphics[width=0.135\textwidth]{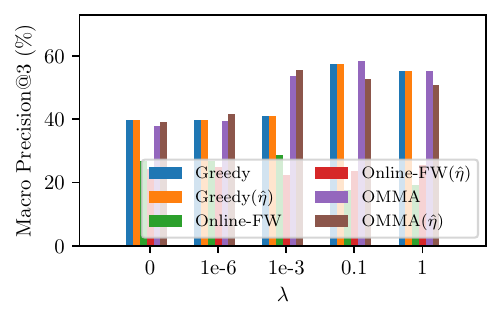}
    \includegraphics[width=0.135\textwidth]{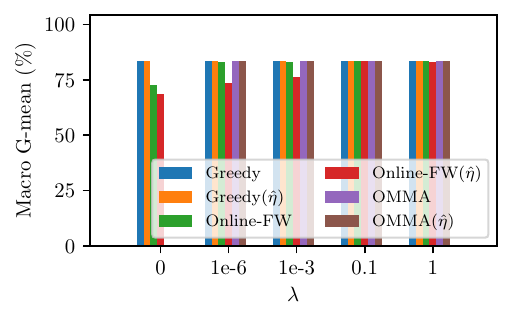}
    \includegraphics[width=0.135\textwidth]{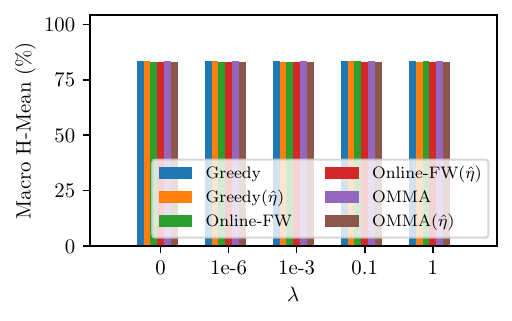}

    \vspace{5pt}\datasettable{News20}\vspace{5pt}
    
    \includegraphics[width=0.135\textwidth]{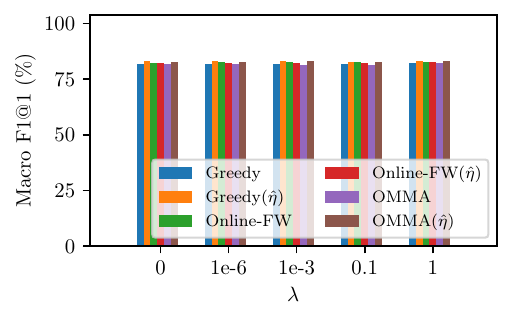}
    \includegraphics[width=0.135\textwidth]{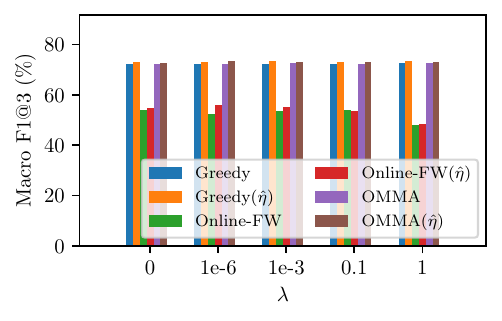}
    \includegraphics[width=0.135\textwidth]{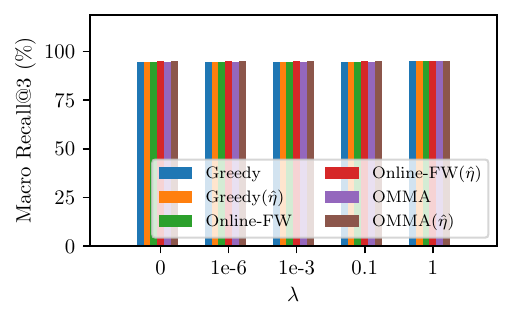}
    \includegraphics[width=0.135\textwidth]{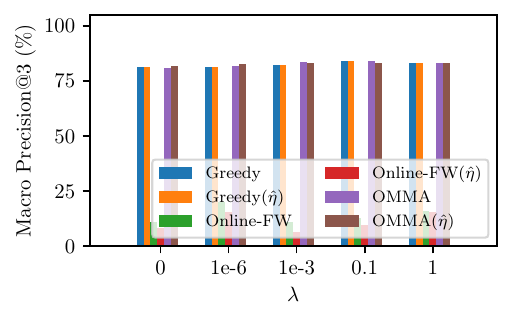}
    \includegraphics[width=0.135\textwidth]{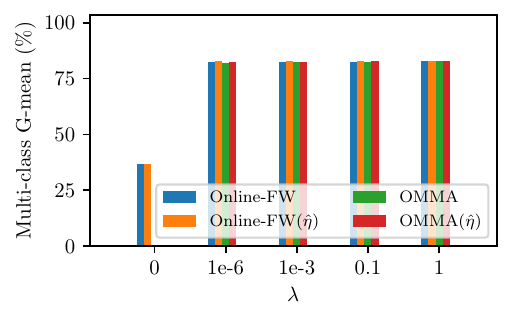}
    \includegraphics[width=0.135\textwidth]{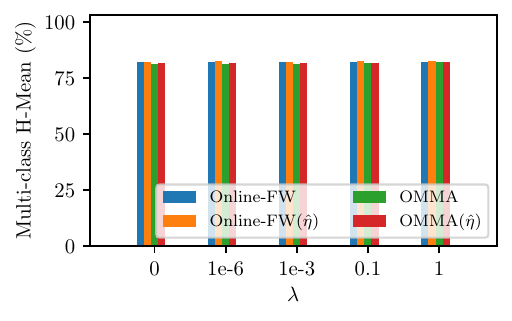}
    \includegraphics[width=0.135\textwidth]{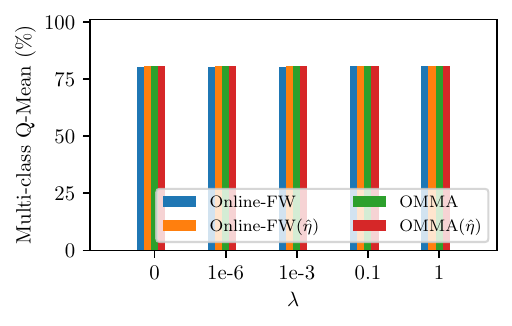}

    \vspace{5pt}\datasettable{Ledgar-LexGlue}\vspace{5pt}
    
    \includegraphics[width=0.135\textwidth]{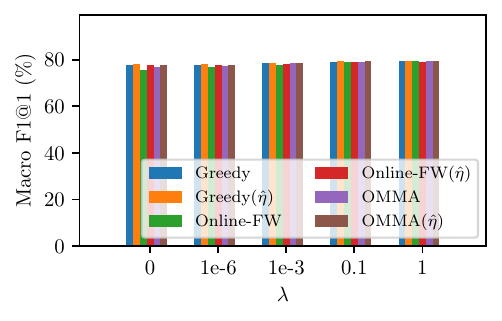}
    \includegraphics[width=0.135\textwidth]{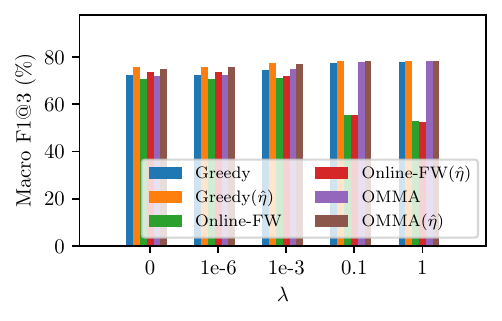}
    \includegraphics[width=0.135\textwidth]{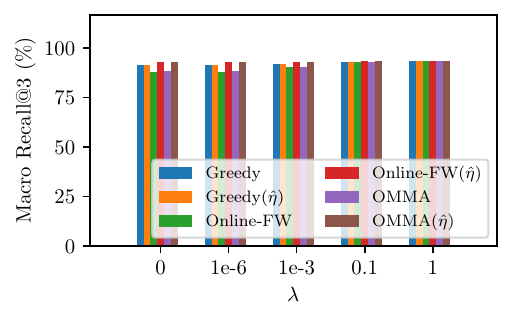}
    \includegraphics[width=0.135\textwidth]{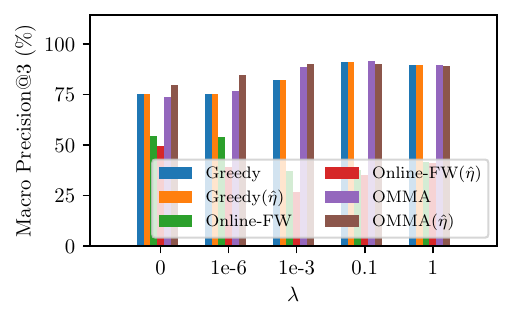}
    \includegraphics[width=0.135\textwidth]{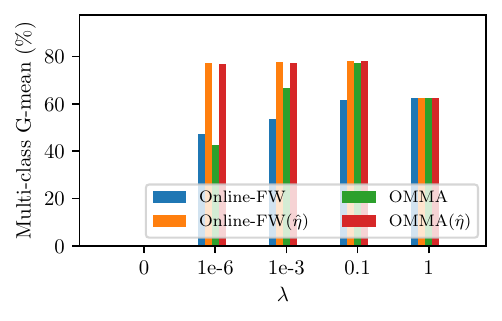}
    \includegraphics[width=0.135\textwidth]{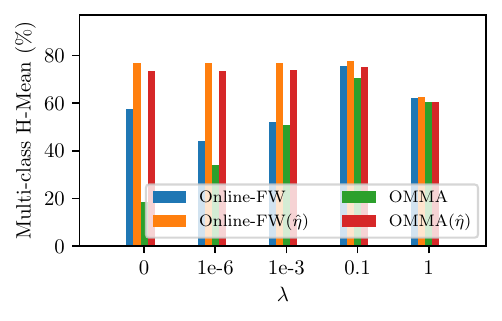}
    \includegraphics[width=0.135\textwidth]{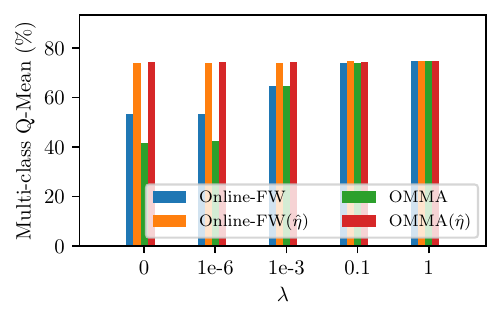}

    \vspace{5pt}\datasettable{Caltech-256}\vspace{5pt}
    
    \includegraphics[width=0.135\textwidth]{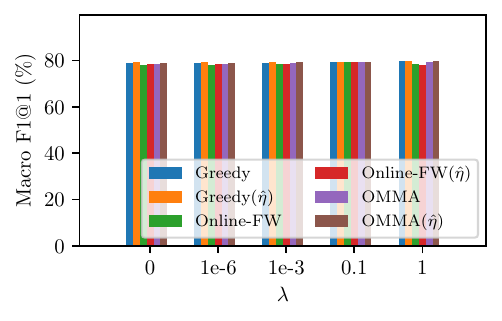}
    \includegraphics[width=0.135\textwidth]{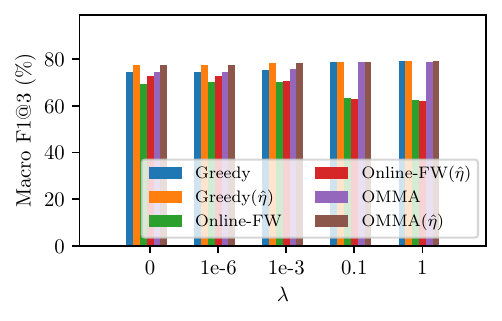}
    \includegraphics[width=0.135\textwidth]{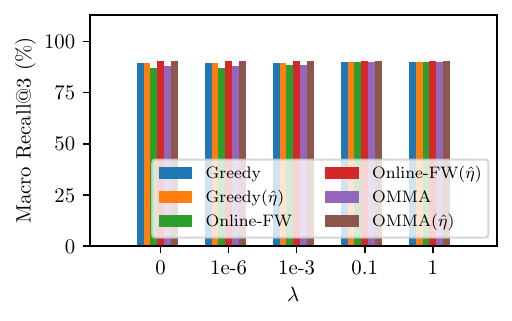}
    \includegraphics[width=0.135\textwidth]{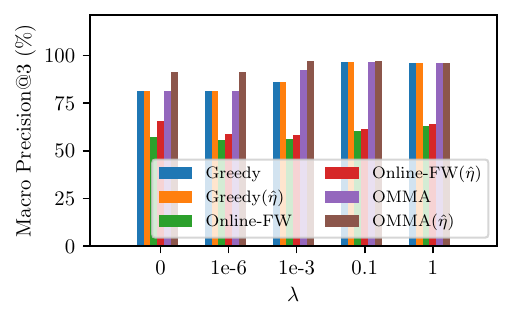}
    \includegraphics[width=0.135\textwidth]{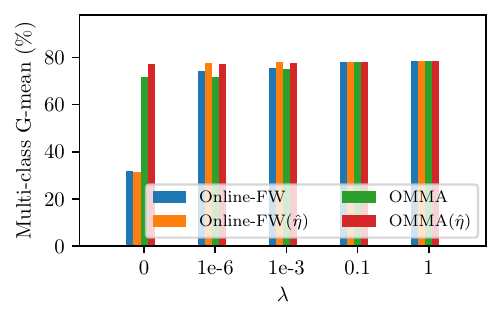}
    \includegraphics[width=0.135\textwidth]{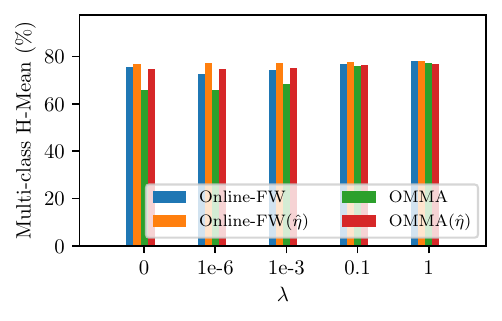}
    \includegraphics[width=0.135\textwidth]{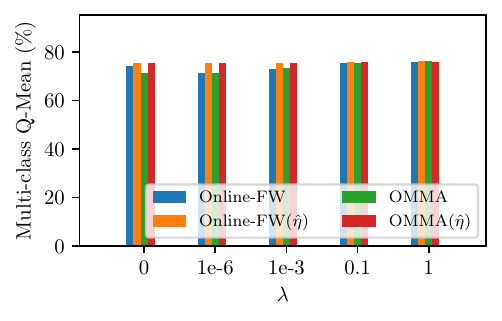}

    \caption{Comparison of performance of the online algorithms with different values of $\lambda$. Averaged over 5 runs.}
    \label{fig:all-reg-plots}
\end{figure*}

\section{Results with the online CPE}
\label{app:online-cpe-results}

In the main paper and the previous section, we presented the results using fixed CPE that was trained on a separate, much larger training set. 
This approach of being agnostic to the CPE model is commonly used in the literature on learning complex performance metrics~\citep{Koyejo_et_al_2014, Busa-Fekete_et_al_2015, Dembczynski_et_al_2017, Liu_et_al_2018, Narasimhan_et_al_2022, Schultheis_etal_ICLR2024}. 
However, it is possible to train CPE along with using online optimization methods. 

In this section, we additionally present the results with such online CPE conducted on four datasets (YouTube, Eurlex-LexGlue, Mediamill, and Flickr). 
In this experiment, the CPE starts with randomly initialized parameters and updates them after each observed instance in the test sequence. 
In~\cref{tab:extended-results-online-cpe}, we report the final results on the sequence, and in~\cref{fig:all-online-cpe-plots}, we present running performance for the same set of datasets and metrics. 
As expected, the average performance over the sequence is lower than in the case of fixed CPE, but the proposed OMMA performance is, in most cases, on par or better than the other algorithms.

\begin{table*}[ht]
    \caption{Predictive performance and running times of the different online algorithms on \emph{multi-label} problems with \emph{incrementally updated CPE}, averaged over 5 runs and reported with standard deviations ($\pm$)s. The best result on each metric is in \textbf{bold}, the second best is in \textit{italic}. We additionally report basic statistics of the benchmarks: number of classes/labels and instances in the test sequence. $\times$ -- means that the algorithm does not support the optimization of that metric. The inference time of Offline-FW was not measured.}
    \label{tab:extended-results-online-cpe}
    \vspace{8pt}
    \small
    \centering

\resizebox{\linewidth}{!}{
\setlength\tabcolsep{3 pt}
\begin{tabular}{l|r@{}lr@{}l|r@{}lr@{}l|r@{}lr@{}l|r@{}lr@{}l|r@{}lr@{}l|r@{}lr@{}l|r@{}lr@{}l}
\toprule
    Method & \multicolumn{4}{c|}{Micro F1} & \multicolumn{4}{c|}{Macro F1} & \multicolumn{4}{c|}{Macro F1$@3$} & \multicolumn{4}{c|}{Macro Recall$@3$} & \multicolumn{4}{c|}{Macro Precision$@3$} & \multicolumn{4}{c|}{Macro G-Mean}& \multicolumn{4}{c}{Macro H-Mean} \\
    & \multicolumn{2}{c}{(\%)} & \multicolumn{2}{c|}{time (s)} & \multicolumn{2}{c}{(\%)} & \multicolumn{2}{c|}{time (s)} & \multicolumn{2}{c}{(\%)} & \multicolumn{2}{c|}{time (s)} & \multicolumn{2}{c}{(\%)} & \multicolumn{2}{c|}{time (s)} & \multicolumn{2}{c}{(\%)} & \multicolumn{2}{c|}{time (s)} & \multicolumn{2}{c}{(\%)} & \multicolumn{2}{c|}{time (s)} & \multicolumn{2}{c}{(\%)} & \multicolumn{2}{c}{time (s)} \\
\midrule
& \multicolumn{28}{c}{\datasettable{YouTube} ($m = 46, n = 31703$)}  \\ 
 \midrule
    Top-$k$ / $\hat \eta \!>\!0.5$ & 26.07 & \scriptsize $\pm$ 0.11 & 2.84 & \scriptsize $\pm$ 0.07 & 18.26 & \scriptsize $\pm$ 0.06 & 2.73 & \scriptsize $\pm$ 0.04 & 27.73 & \scriptsize $\pm$ 0.03 & 0.29 & \scriptsize $\pm$ 0.00 & 38.11 & \scriptsize $\pm$ 0.04 & 0.28 & \scriptsize $\pm$ 0.00 & 23.08 & \scriptsize $\pm$ 0.12 & 0.28 & \scriptsize $\pm$ 0.00 & 27.83 & \scriptsize $\pm$ 0.18 & 2.80 & \scriptsize $\pm$ 0.02 & 19.85 & \scriptsize $\pm$ 0.07 & 2.81 & \scriptsize $\pm$ 0.09 \\
    \midrule
    OFO & 32.52 & \scriptsize $\pm$ 0.31 & 4.38 & \scriptsize $\pm$ 0.03 & 28.26 & \scriptsize $\pm$ 0.40 & 4.15 & \scriptsize $\pm$ 0.03 & & $\times$ & & $\times$ & & $\times$ & & $\times$ & & $\times$ & & $\times$ & & $\times$ & & $\times$ & & $\times$ & & $\times$ \\
    Greedy & & $\times$ & & $\times$ & 25.21 & \scriptsize $\pm$ 0.17 & 7.75 & \scriptsize $\pm$ 0.12 & \textit{31.53}&\textit{\scriptsize $\pm$ 0.10} & 5.27 & \scriptsize $\pm$ 0.06 & 36.76 & \scriptsize $\pm$ 0.11 & 5.08 & \scriptsize $\pm$ 0.06 & \textit{46.91}&\textit{\scriptsize $\pm$ 3.36} & 4.79 & \scriptsize $\pm$ 0.02 & \textbf{76.39}&\textbf{\scriptsize $\pm$ 0.01} & 7.79 & \scriptsize $\pm$ 0.06 & \textbf{76.41}&\textbf{\scriptsize $\pm$ 0.03} & 7.99 & \scriptsize $\pm$ 0.03 \\
    Greedy$(\hat \eta)$ & & $\times$ & & $\times$ & \textit{31.88}&\textit{\scriptsize $\pm$ 0.04} & 7.88 & \scriptsize $\pm$ 0.18 & 31.17 & \scriptsize $\pm$ 0.06 & 5.44 & \scriptsize $\pm$ 0.03 & 36.76 & \scriptsize $\pm$ 0.11 & 5.06 & \scriptsize $\pm$ 0.05 & \textit{46.91}&\textit{\scriptsize $\pm$ 3.36} & 4.79 & \scriptsize $\pm$ 0.04 & \textbf{76.39}&\textbf{\scriptsize $\pm$ 0.02} & 7.82 & \scriptsize $\pm$ 0.08 & \textit{76.31}&\textit{\scriptsize $\pm$ 0.02} & 8.00 & \scriptsize $\pm$ 0.06 \\
    Online-FW & 33.95 & \scriptsize $\pm$ 0.08 & 22.50 & \scriptsize $\pm$ 1.65 & 26.33 & \scriptsize $\pm$ 0.22 & 21.96 & \scriptsize $\pm$ 0.71 & 31.13 & \scriptsize $\pm$ 0.10 & 20.70 & \scriptsize $\pm$ 2.49 & 36.74 & \scriptsize $\pm$ 0.10 & 10.65 & \scriptsize $\pm$ 0.19 & 27.97 & \scriptsize $\pm$ 0.89 & 6.08 & \scriptsize $\pm$ 0.19 & \textit{76.03}&\textit{\scriptsize $\pm$ 0.08} & 14.64 & \scriptsize $\pm$ 0.60 & 75.92 & \scriptsize $\pm$ 0.04 & 21.97 & \scriptsize $\pm$ 2.08 \\
    Online-FW$(\hat \eta)$ & \textbf{39.46}&\textbf{\scriptsize $\pm$ 0.02} & 19.60 & \scriptsize $\pm$ 0.24 & 31.78 & \scriptsize $\pm$ 0.02 & 20.52 & \scriptsize $\pm$ 0.14 & 30.09 & \scriptsize $\pm$ 0.09 & 67.48 & \scriptsize $\pm$ 0.57 & \textit{42.03}&\textit{\scriptsize $\pm$ 0.06} & 12.75 & \scriptsize $\pm$ 0.14 & 9.36 & \scriptsize $\pm$ 0.71 & 50.73 & \scriptsize $\pm$ 2.62 & 75.53 & \scriptsize $\pm$ 0.07 & 11.52 & \scriptsize $\pm$ 2.78 & 75.15 & \scriptsize $\pm$ 0.10 & 55.67 & \scriptsize $\pm$ 10.27 \\
    \midrule
    \OMMA{} & 32.02 & \scriptsize $\pm$ 0.14 & 20.90 & \scriptsize $\pm$ 0.36 & 25.21 & \scriptsize $\pm$ 0.17 & 18.06 & \scriptsize $\pm$ 0.20 & \textbf{31.54}&\textbf{\scriptsize $\pm$ 0.12} & 14.09 & \scriptsize $\pm$ 0.13 & 36.75 & \scriptsize $\pm$ 0.11 & 11.86 & \scriptsize $\pm$ 0.21 & \textbf{47.72}&\textbf{\scriptsize $\pm$ 3.13} & 11.73 & \scriptsize $\pm$ 0.12 & \textbf{76.39}&\textbf{\scriptsize $\pm$ 0.01} & 19.08 & \scriptsize $\pm$ 0.14 & \textbf{76.41}&\textbf{\scriptsize $\pm$ 0.02} & 20.31 & \scriptsize $\pm$ 0.35 \\
    \OMMAeta{} & \textit{39.34}&\textit{\scriptsize $\pm$ 0.02} & 21.06 & \scriptsize $\pm$ 0.34 & \textbf{31.89}&\textbf{\scriptsize $\pm$ 0.05} & 17.90 & \scriptsize $\pm$ 0.34 & 31.13 & \scriptsize $\pm$ 0.12 & 14.06 & \scriptsize $\pm$ 0.11 & \textbf{42.05}&\textbf{\scriptsize $\pm$ 0.05} & 11.75 & \scriptsize $\pm$ 0.25 & 43.87 & \scriptsize $\pm$ 1.10 & 11.64 & \scriptsize $\pm$ 0.39 & \textbf{76.39}&\textbf{\scriptsize $\pm$ 0.02} & 18.95 & \scriptsize $\pm$ 0.14 & \textit{76.31}&\textit{\scriptsize $\pm$ 0.03} & 20.17 & \scriptsize $\pm$ 0.06 \\
\midrule
& \multicolumn{28}{c}{\datasettable{Eurlex-LexGlue} ($m = 100, n = 65000$)}  \\ 
 \midrule
    Top-$k$ / $\hat \eta \!>\!0.5$ & 26.50 & \scriptsize $\pm$ 0.03 & 10.76 & \scriptsize $\pm$ 0.06 & 6.54 & \scriptsize $\pm$ 0.02 & 10.83 & \scriptsize $\pm$ 0.00 & 12.69 & \scriptsize $\pm$ 0.03 & 0.60 & \scriptsize $\pm$ 0.01 & 10.90 & \scriptsize $\pm$ 0.02 & 0.58 & \scriptsize $\pm$ 0.01 & 31.48 & \scriptsize $\pm$ 1.17 & 0.60 & \scriptsize $\pm$ 0.01 & 9.31 & \scriptsize $\pm$ 0.02 & 10.73 & \scriptsize $\pm$ 0.02 & 6.62 & \scriptsize $\pm$ 0.02 & 10.75 & \scriptsize $\pm$ 0.09 \\
    \midrule
    OFO & 47.54 & \scriptsize $\pm$ 0.01 & 16.24 & \scriptsize $\pm$ 0.08 & 22.67 & \scriptsize $\pm$ 0.14 & 15.83 & \scriptsize $\pm$ 0.17 & & $\times$ & & $\times$ & & $\times$ & & $\times$ & & $\times$ & & $\times$ & & $\times$ & & $\times$ & & $\times$ & & $\times$ \\
    Greedy & & $\times$ & & $\times$ & 21.53 & \scriptsize $\pm$ 0.06 & 21.14 & \scriptsize $\pm$ 0.12 & 22.61 & \scriptsize $\pm$ 0.07 & 10.71 & \scriptsize $\pm$ 0.02 & 25.02 & \scriptsize $\pm$ 0.11 & 10.18 & \scriptsize $\pm$ 0.35 & 35.17 & \scriptsize $\pm$ 0.24 & 10.09 & \scriptsize $\pm$ 0.05 & \textit{71.71}&\textit{\scriptsize $\pm$ 0.01} & 21.27 & \scriptsize $\pm$ 0.11 & \textit{72.49}&\textit{\scriptsize $\pm$ 0.05} & 21.60 & \scriptsize $\pm$ 0.31 \\
    Greedy$(\hat \eta)$ & & $\times$ & & $\times$ & \textit{28.61}&\textit{\scriptsize $\pm$ 0.03} & 20.73 & \scriptsize $\pm$ 0.15 & \textit{29.86}&\textit{\scriptsize $\pm$ 0.04} & 10.80 & \scriptsize $\pm$ 0.07 & 25.02 & \scriptsize $\pm$ 0.11 & 10.07 & \scriptsize $\pm$ 0.15 & 35.17 & \scriptsize $\pm$ 0.24 & 10.20 & \scriptsize $\pm$ 0.09 & 70.56 & \scriptsize $\pm$ 0.02 & 21.27 & \scriptsize $\pm$ 0.05 & 69.62 & \scriptsize $\pm$ 0.04 & 21.44 & \scriptsize $\pm$ 0.08 \\
    Online-FW & 49.10 & \scriptsize $\pm$ 0.12 & 35.78 & \scriptsize $\pm$ 0.37 & 21.10 & \scriptsize $\pm$ 0.19 & 36.84 & \scriptsize $\pm$ 0.90 & 25.73 & \scriptsize $\pm$ 0.21 & 29.24 & \scriptsize $\pm$ 3.35 & 24.56 & \scriptsize $\pm$ 0.09 & 15.94 & \scriptsize $\pm$ 0.32 & 21.78 & \scriptsize $\pm$ 1.41 & 12.41 & \scriptsize $\pm$ 0.29 & 70.89 & \scriptsize $\pm$ 0.29 & 40.18 & \scriptsize $\pm$ 0.64 & 71.00 & \scriptsize $\pm$ 0.13 & 54.91 & \scriptsize $\pm$ 2.13 \\
    Online-FW$(\hat \eta)$ & \textbf{51.78}&\textbf{\scriptsize $\pm$ 0.03} & 38.66 & \scriptsize $\pm$ 0.41 & 26.70 & \scriptsize $\pm$ 0.13 & 40.81 & \scriptsize $\pm$ 0.10 & 28.70 & \scriptsize $\pm$ 0.04 & 113.14 & \scriptsize $\pm$ 0.51 & \textit{26.17}&\textit{\scriptsize $\pm$ 0.04} & 21.37 & \scriptsize $\pm$ 0.17 & 25.42 & \scriptsize $\pm$ 1.42 & 95.89 & \scriptsize $\pm$ 0.51 & 70.67 & \scriptsize $\pm$ 0.08 & 23.76 & \scriptsize $\pm$ 0.76 & 70.06 & \scriptsize $\pm$ 0.10 & 121.24 & \scriptsize $\pm$ 0.35 \\
    \midrule
    \OMMA{} & 47.09 & \scriptsize $\pm$ 0.05 & 49.15 & \scriptsize $\pm$ 0.37 & 21.53 & \scriptsize $\pm$ 0.06 & 43.19 & \scriptsize $\pm$ 0.11 & 22.58 & \scriptsize $\pm$ 0.03 & 28.61 & \scriptsize $\pm$ 0.05 & 25.01 & \scriptsize $\pm$ 0.12 & 22.67 & \scriptsize $\pm$ 0.16 & \textit{63.71}&\textit{\scriptsize $\pm$ 2.42} & 23.14 & \scriptsize $\pm$ 0.17 & \textbf{71.73}&\textbf{\scriptsize $\pm$ 0.01} & 46.14 & \scriptsize $\pm$ 0.56 & \textbf{72.51}&\textbf{\scriptsize $\pm$ 0.05} & 48.10 & \scriptsize $\pm$ 0.68 \\
    \OMMAeta{} & \textit{51.60}&\textit{\scriptsize $\pm$ 0.03} & 48.38 & \scriptsize $\pm$ 0.29 & \textbf{28.63}&\textbf{\scriptsize $\pm$ 0.02} & 42.67 & \scriptsize $\pm$ 0.74 & \textbf{29.87}&\textbf{\scriptsize $\pm$ 0.04} & 29.60 & \scriptsize $\pm$ 0.54 & \textbf{26.25}&\textbf{\scriptsize $\pm$ 0.04} & 22.98 & \scriptsize $\pm$ 0.04 & \textbf{64.96}&\textbf{\scriptsize $\pm$ 1.49} & 23.43 & \scriptsize $\pm$ 0.25 & 70.58 & \scriptsize $\pm$ 0.02 & 46.16 & \scriptsize $\pm$ 0.12 & 69.64 & \scriptsize $\pm$ 0.04 & 48.40 & \scriptsize $\pm$ 0.79 \\
\midrule
& \multicolumn{28}{c}{\datasettable{Mediamill} ($m = 101, n = 43907$)}  \\ 
 \midrule
    Top-$k$ / $\hat \eta \!>\!0.5$ & 48.38 & \scriptsize $\pm$ 0.04 & 5.21 & \scriptsize $\pm$ 0.06 & 3.09 & \scriptsize $\pm$ 0.01 & 5.21 & \scriptsize $\pm$ 0.05 & 3.65 & \scriptsize $\pm$ 0.02 & 0.39 & \scriptsize $\pm$ 0.01 & 3.72 & \scriptsize $\pm$ 0.01 & 0.38 & \scriptsize $\pm$ 0.02 & 6.87 & \scriptsize $\pm$ 0.20 & 0.39 & \scriptsize $\pm$ 0.00 & 3.95 & \scriptsize $\pm$ 0.06 & 4.91 & \scriptsize $\pm$ 0.03 & 2.45 & \scriptsize $\pm$ 0.01 & 5.15 & \scriptsize $\pm$ 0.05 \\
    \midrule
    OFO & 52.00 & \scriptsize $\pm$ 0.02 & 8.22 & \scriptsize $\pm$ 0.04 & \textbf{8.14}&\textbf{\scriptsize $\pm$ 0.03} & 8.63 & \scriptsize $\pm$ 0.16 & & $\times$ & & $\times$ & & $\times$ & & $\times$ & & $\times$ & & $\times$ & & $\times$ & & $\times$ & & $\times$ & & $\times$ \\
    Greedy & & $\times$ & & $\times$ & \textit{8.12}&\textit{\scriptsize $\pm$ 0.02} & 12.39 & \scriptsize $\pm$ 0.02 & 4.71 & \scriptsize $\pm$ 0.04 & 7.00 & \scriptsize $\pm$ 0.09 & 4.22 & \scriptsize $\pm$ 0.07 & 6.68 & \scriptsize $\pm$ 0.10 & \textbf{11.60}&\textbf{\scriptsize $\pm$ 1.88} & 6.56 & \scriptsize $\pm$ 0.11 & \textit{53.55}&\textit{\scriptsize $\pm$ 0.11} & 12.11 & \scriptsize $\pm$ 0.10 & 53.72 & \scriptsize $\pm$ 0.11 & 11.96 & \scriptsize $\pm$ 0.03 \\
    Greedy$(\hat \eta)$ & & $\times$ & & $\times$ & 7.79 & \scriptsize $\pm$ 0.01 & 12.17 & \scriptsize $\pm$ 0.01 & \textbf{5.67}&\textbf{\scriptsize $\pm$ 0.03} & 7.10 & \scriptsize $\pm$ 0.15 & 4.22 & \scriptsize $\pm$ 0.07 & 6.59 & \scriptsize $\pm$ 0.06 & \textbf{11.60}&\textbf{\scriptsize $\pm$ 1.88} & 6.62 & \scriptsize $\pm$ 0.15 & 47.59 & \scriptsize $\pm$ 0.15 & 11.85 & \scriptsize $\pm$ 0.05 & 43.82 & \scriptsize $\pm$ 0.21 & 11.90 & \scriptsize $\pm$ 0.04 \\
    Online-FW & 51.81 & \scriptsize $\pm$ 0.03 & 27.08 & \scriptsize $\pm$ 0.91 & 8.02 & \scriptsize $\pm$ 0.00 & 25.08 & \scriptsize $\pm$ 1.11 & 4.82 & \scriptsize $\pm$ 0.08 & 14.94 & \scriptsize $\pm$ 2.17 & 4.08 & \scriptsize $\pm$ 0.05 & 10.43 & \scriptsize $\pm$ 0.41 & 6.19 & \scriptsize $\pm$ 0.15 & 7.47 & \scriptsize $\pm$ 0.10 & \textbf{53.63}&\textbf{\scriptsize $\pm$ 0.01} & 19.08 & \scriptsize $\pm$ 0.09 & \textbf{53.77}&\textbf{\scriptsize $\pm$ 0.07} & 21.75 & \scriptsize $\pm$ 0.10 \\
    Online-FW$(\hat \eta)$ & \textbf{52.16}&\textbf{\scriptsize $\pm$ 0.01} & 27.50 & \scriptsize $\pm$ 0.56 & 7.74 & \scriptsize $\pm$ 0.03 & 32.25 & \scriptsize $\pm$ 0.21 & \textbf{5.67}&\textbf{\scriptsize $\pm$ 0.03} & 84.47 & \scriptsize $\pm$ 2.23 & \textit{4.27}&\textit{\scriptsize $\pm$ 0.08} & 16.26 & \scriptsize $\pm$ 0.22 & 6.24 & \scriptsize $\pm$ 0.36 & 75.56 & \scriptsize $\pm$ 0.87 & 49.23 & \scriptsize $\pm$ 0.40 & 12.18 & \scriptsize $\pm$ 0.29 & 41.81 & \scriptsize $\pm$ 0.36 & 72.49 & \scriptsize $\pm$ 0.98 \\
    \midrule
    \OMMA{} & 51.71 & \scriptsize $\pm$ 0.02 & 30.72 & \scriptsize $\pm$ 0.33 & \textit{8.12}&\textit{\scriptsize $\pm$ 0.02} & 26.51 & \scriptsize $\pm$ 0.13 & 4.71 & \scriptsize $\pm$ 0.04 & 18.89 & \scriptsize $\pm$ 0.14 & 4.22 & \scriptsize $\pm$ 0.07 & 15.20 & \scriptsize $\pm$ 0.27 & \textit{10.30}&\textit{\scriptsize $\pm$ 1.06} & 15.56 & \scriptsize $\pm$ 0.13 & 53.54 & \scriptsize $\pm$ 0.12 & 27.21 & \scriptsize $\pm$ 0.25 & \textit{53.73}&\textit{\scriptsize $\pm$ 0.11} & 29.98 & \scriptsize $\pm$ 0.24 \\
    \OMMAeta{} & \textit{52.12}&\textit{\scriptsize $\pm$ 0.01} & 30.63 & \scriptsize $\pm$ 0.48 & 7.80 & \scriptsize $\pm$ 0.01 & 26.04 & \scriptsize $\pm$ 0.17 & \textit{5.66}&\textit{\scriptsize $\pm$ 0.03} & 18.61 & \scriptsize $\pm$ 0.12 & \textbf{4.30}&\textbf{\scriptsize $\pm$ 0.06} & 15.20 & \scriptsize $\pm$ 0.11 & 8.77 & \scriptsize $\pm$ 0.54 & 15.20 & \scriptsize $\pm$ 0.15 & 47.57 & \scriptsize $\pm$ 0.14 & 28.14 & \scriptsize $\pm$ 0.38 & 43.81 & \scriptsize $\pm$ 0.22 & 28.18 & \scriptsize $\pm$ 0.05 \\
\midrule
& \multicolumn{28}{c}{\datasettable{Flickr} ($m = 195, n = 80513$)}  \\ 
 \midrule
    Top-$k$ / $\hat \eta \!>\!0.5$ & 14.42 & \scriptsize $\pm$ 0.03 & 16.48 & \scriptsize $\pm$ 0.12 & 4.25 & \scriptsize $\pm$ 0.05 & 16.50 & \scriptsize $\pm$ 0.27 & 14.73 & \scriptsize $\pm$ 0.14 & 0.85 & \scriptsize $\pm$ 0.02 & 19.71 & \scriptsize $\pm$ 0.07 & 0.86 & \scriptsize $\pm$ 0.01 & 17.10 & \scriptsize $\pm$ 0.69 & 0.88 & \scriptsize $\pm$ 0.01 & 7.85 & \scriptsize $\pm$ 0.15 & 16.48 & \scriptsize $\pm$ 0.17 & 4.52 & \scriptsize $\pm$ 0.05 & 16.41 & \scriptsize $\pm$ 0.11 \\
    \midrule
    OFO & \textbf{32.28}&\textbf{\scriptsize $\pm$ 0.05} & 24.31 & \scriptsize $\pm$ 0.78 & 14.22 & \scriptsize $\pm$ 0.12 & 24.56 & \scriptsize $\pm$ 0.68 & & $\times$ & & $\times$ & & $\times$ & & $\times$ & & $\times$ & & $\times$ & & $\times$ & & $\times$ & & $\times$ & & $\times$ \\
    Greedy & & $\times$ & & $\times$ & 5.74 & \scriptsize $\pm$ 0.03 & 30.57 & \scriptsize $\pm$ 0.38 & 15.23 & \scriptsize $\pm$ 0.03 & 14.14 & \scriptsize $\pm$ 0.20 & 21.98 & \scriptsize $\pm$ 0.16 & 13.53 & \scriptsize $\pm$ 0.21 & \textbf{31.81}&\textbf{\scriptsize $\pm$ 1.19} & 13.17 & \scriptsize $\pm$ 0.18 & 62.53 & \scriptsize $\pm$ 0.04 & 30.87 & \scriptsize $\pm$ 0.65 & 65.78 & \scriptsize $\pm$ 0.02 & 31.65 & \scriptsize $\pm$ 0.57 \\
    Greedy$(\hat \eta)$ & & $\times$ & & $\times$ & \textit{15.13}&\textit{\scriptsize $\pm$ 0.16} & 30.93 & \scriptsize $\pm$ 0.33 & 16.05 & \scriptsize $\pm$ 0.03 & 14.09 & \scriptsize $\pm$ 0.09 & 21.98 & \scriptsize $\pm$ 0.16 & 13.35 & \scriptsize $\pm$ 0.12 & \textbf{31.81}&\textbf{\scriptsize $\pm$ 1.19} & 13.14 & \scriptsize $\pm$ 0.19 & \textbf{69.00}&\textbf{\scriptsize $\pm$ 0.05} & 31.57 & \scriptsize $\pm$ 1.05 & \textit{68.05}&\textit{\scriptsize $\pm$ 0.03} & 32.38 & \scriptsize $\pm$ 0.18 \\
    Online-FW & 14.05 & \scriptsize $\pm$ 0.37 & 58.37 & \scriptsize $\pm$ 1.02 & 5.79 & \scriptsize $\pm$ 0.04 & 70.80 & \scriptsize $\pm$ 2.28 & 14.77 & \scriptsize $\pm$ 0.12 & 29.77 & \scriptsize $\pm$ 3.92 & 21.69 & \scriptsize $\pm$ 0.15 & 20.50 & \scriptsize $\pm$ 0.18 & 12.51 & \scriptsize $\pm$ 0.70 & 15.81 & \scriptsize $\pm$ 0.36 & 66.00 & \scriptsize $\pm$ 0.17 & 50.70 & \scriptsize $\pm$ 0.48 & 66.77 & \scriptsize $\pm$ 0.08 & 45.53 & \scriptsize $\pm$ 0.75 \\
    Online-FW$(\hat \eta)$ & \textit{31.58}&\textit{\scriptsize $\pm$ 0.07} & 58.87 & \scriptsize $\pm$ 0.38 & 13.97 & \scriptsize $\pm$ 0.07 & 61.33 & \scriptsize $\pm$ 0.28 & \textbf{16.09}&\textbf{\scriptsize $\pm$ 0.04} & 166.71 & \scriptsize $\pm$ 3.33 & \textit{27.15}&\textit{\scriptsize $\pm$ 0.02} & 34.83 & \scriptsize $\pm$ 0.55 & 14.57 & \scriptsize $\pm$ 0.77 & 140.62 & \scriptsize $\pm$ 0.84 & \textit{67.71}&\textit{\scriptsize $\pm$ 0.10} & 34.11 & \scriptsize $\pm$ 0.28 & \textbf{68.48}&\textbf{\scriptsize $\pm$ 0.08} & 160.12 & \scriptsize $\pm$ 0.75 \\
    \midrule
    \OMMA{} & 10.51 & \scriptsize $\pm$ 0.45 & 64.39 & \scriptsize $\pm$ 0.63 & 5.74 & \scriptsize $\pm$ 0.03 & 58.36 & \scriptsize $\pm$ 1.42 & 15.24 & \scriptsize $\pm$ 0.01 & 36.03 & \scriptsize $\pm$ 0.40 & 21.95 & \scriptsize $\pm$ 0.15 & 28.58 & \scriptsize $\pm$ 0.16 & 28.88 & \scriptsize $\pm$ 1.35 & 29.26 & \scriptsize $\pm$ 0.20 & 62.53 & \scriptsize $\pm$ 0.04 & 63.16 & \scriptsize $\pm$ 1.74 & 65.77 & \scriptsize $\pm$ 0.03 & 67.12 & \scriptsize $\pm$ 0.66 \\
    \OMMAeta{} & 31.33 & \scriptsize $\pm$ 0.08 & 66.84 & \scriptsize $\pm$ 0.36 & \textbf{15.15}&\textbf{\scriptsize $\pm$ 0.17} & 59.32 & \scriptsize $\pm$ 1.07 & \textit{16.06}&\textit{\scriptsize $\pm$ 0.03} & 36.00 & \scriptsize $\pm$ 0.32 & \textbf{27.18}&\textbf{\scriptsize $\pm$ 0.02} & 29.11 & \scriptsize $\pm$ 0.18 & \textit{29.52}&\textit{\scriptsize $\pm$ 0.24} & 28.93 & \scriptsize $\pm$ 0.54 & \textbf{69.00}&\textbf{\scriptsize $\pm$ 0.05} & 61.45 & \scriptsize $\pm$ 1.41 & \textit{68.05}&\textit{\scriptsize $\pm$ 0.03} & 65.96 & \scriptsize $\pm$ 0.39 \\
\bottomrule
\end{tabular}
}
\end{table*}

\begin{figure*}[ht]
    \centering
    \scriptsize
    
    \vspace{5pt}\datasettable{YouTube}\vspace{5pt}
    
    \includegraphics[width=0.135\textwidth]{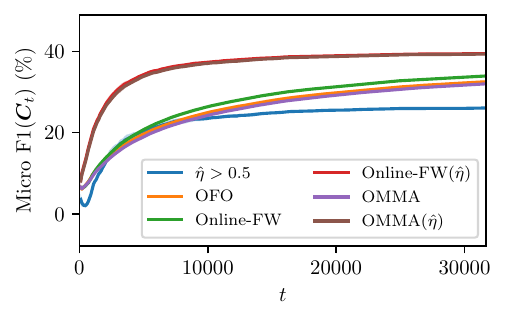}
    \includegraphics[width=0.135\textwidth]{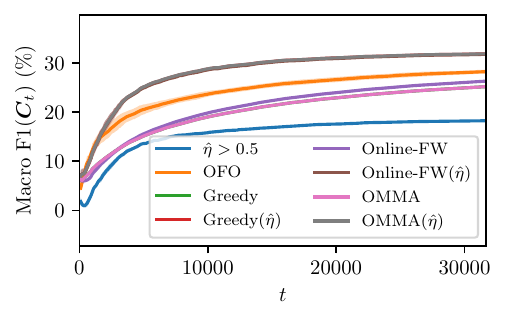}
    \includegraphics[width=0.135\textwidth]{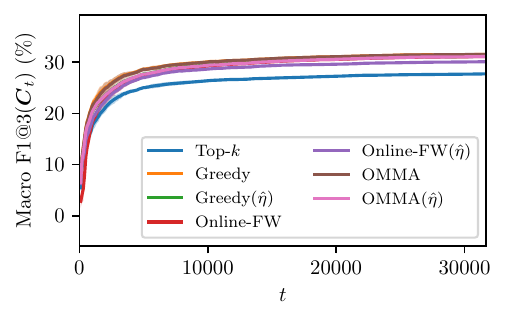}
    \includegraphics[width=0.135\textwidth]{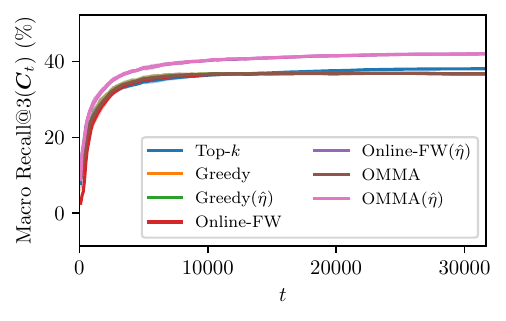}
    \includegraphics[width=0.135\textwidth]{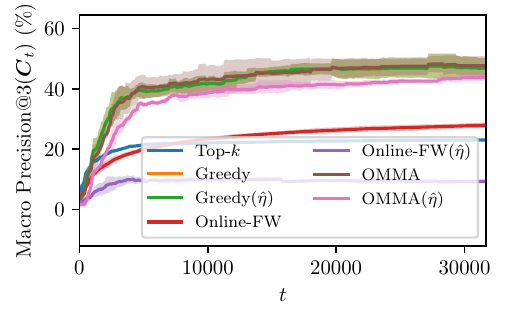}
    \includegraphics[width=0.135\textwidth]{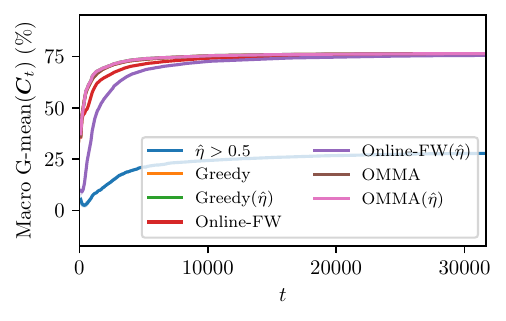}
    \includegraphics[width=0.135\textwidth]{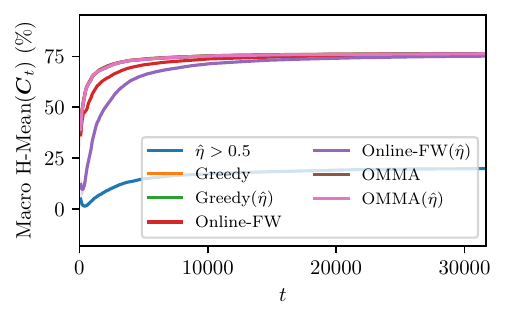}

    \vspace{5pt}\datasettable{Eurlex-LexGlue}\vspace{5pt}
    
    \includegraphics[width=0.135\textwidth]{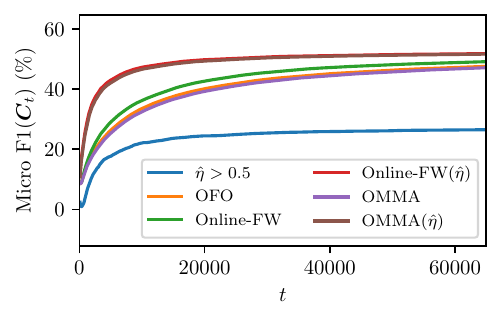}
    \includegraphics[width=0.135\textwidth]{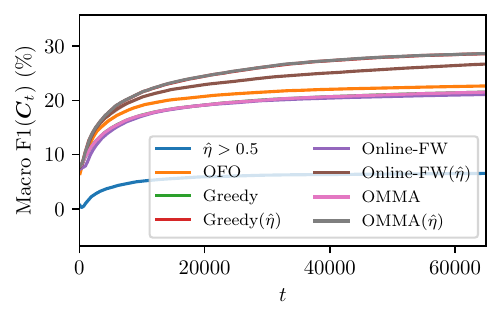}
    \includegraphics[width=0.135\textwidth]{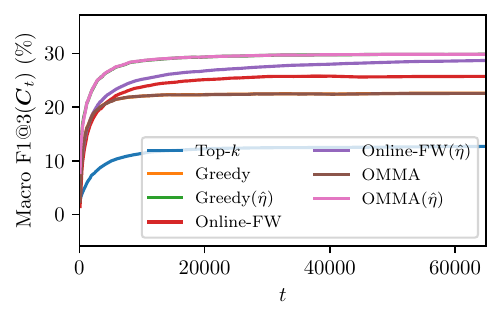}
    \includegraphics[width=0.135\textwidth]{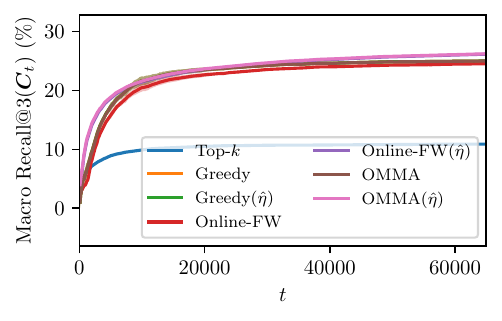}
    \includegraphics[width=0.135\textwidth]{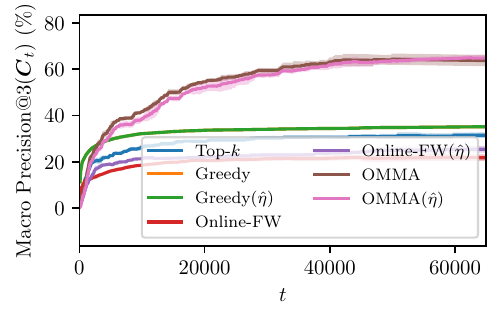}
    \includegraphics[width=0.135\textwidth]{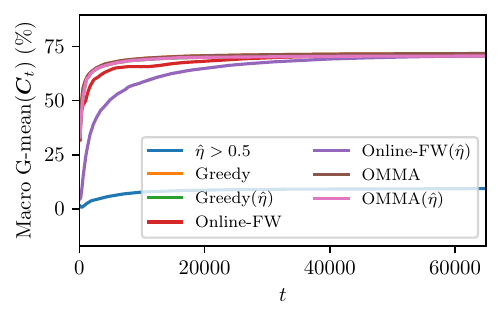}
    \includegraphics[width=0.135\textwidth]{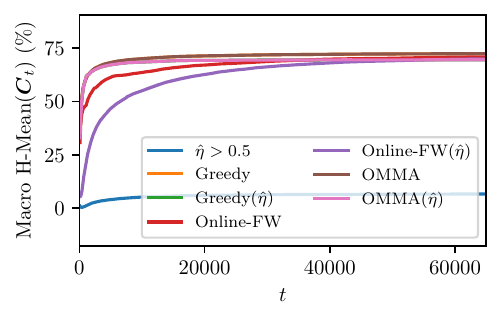}

    \vspace{5pt}\datasettable{Mediamill}\vspace{5pt}

    \includegraphics[width=0.135\textwidth]{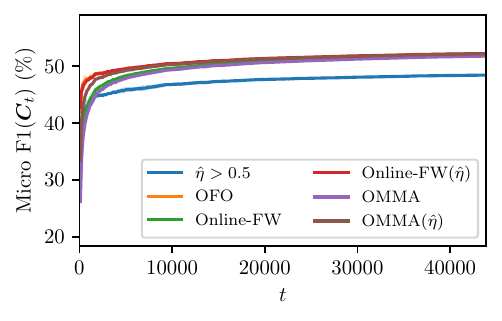}
    \includegraphics[width=0.135\textwidth]{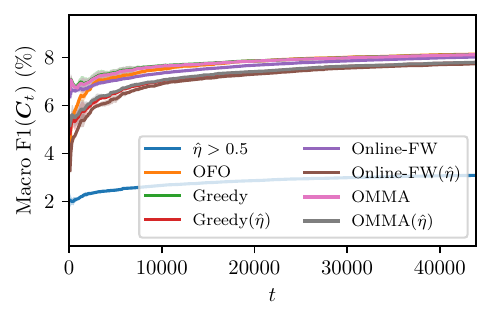}
    \includegraphics[width=0.135\textwidth]{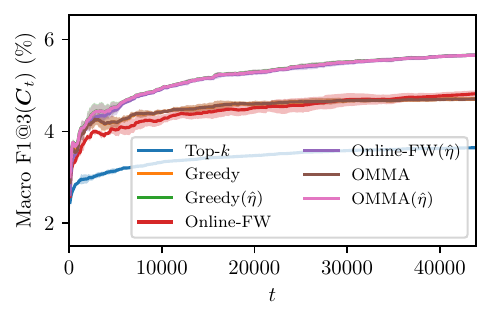}
    \includegraphics[width=0.135\textwidth]{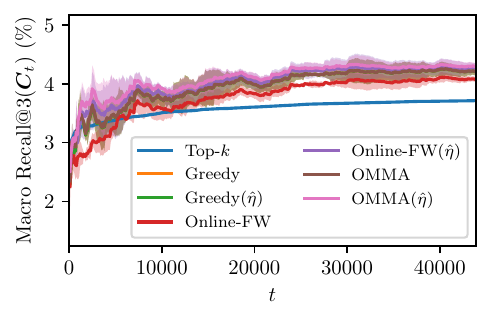}
    \includegraphics[width=0.135\textwidth]{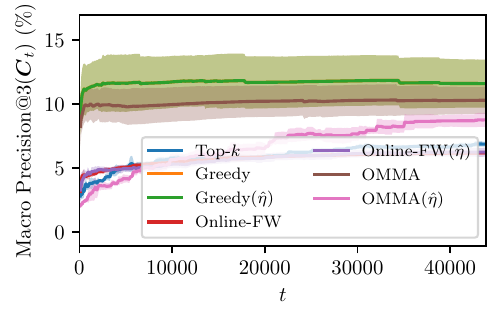}
    \includegraphics[width=0.135\textwidth]{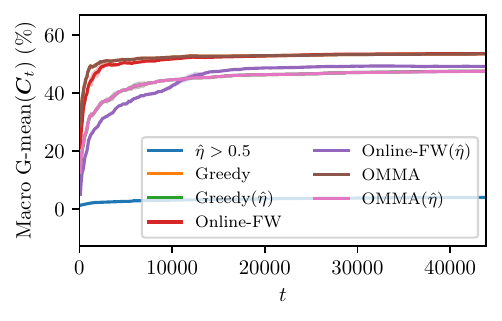}
    \includegraphics[width=0.135\textwidth]{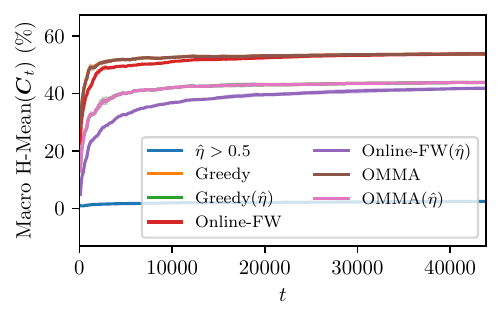}

    \vspace{5pt}\datasettable{Flickr}\vspace{5pt}
    
    \includegraphics[width=0.135\textwidth]{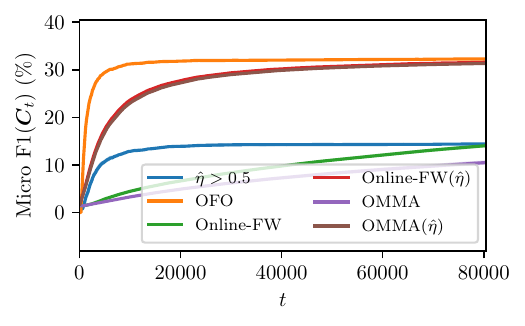}
    \includegraphics[width=0.135\textwidth]{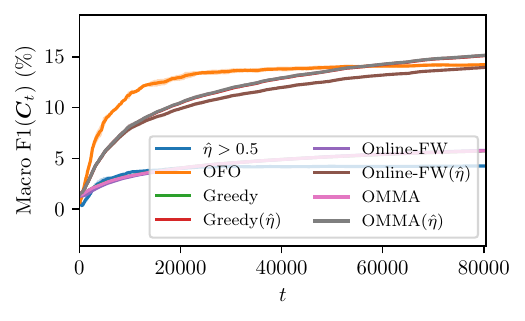}
    \includegraphics[width=0.135\textwidth]{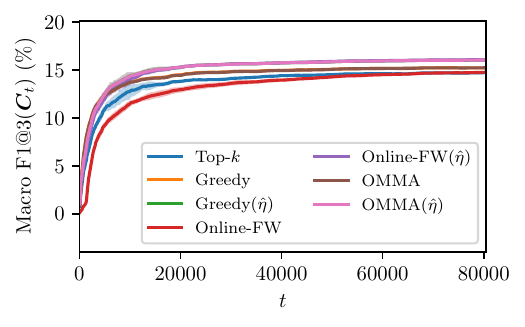}
    \includegraphics[width=0.135\textwidth]{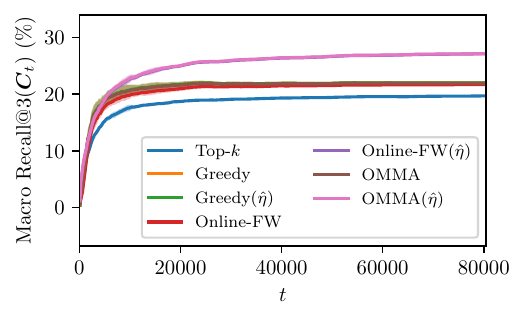}
    \includegraphics[width=0.135\textwidth]{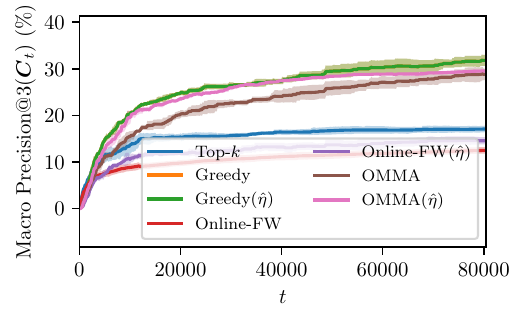}
    \includegraphics[width=0.135\textwidth]{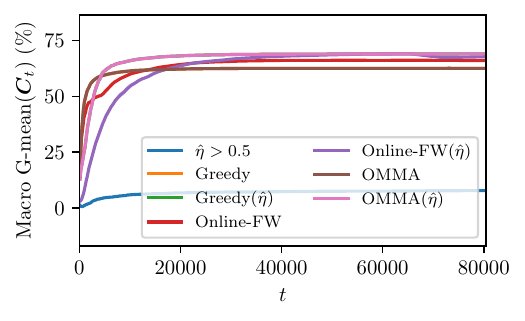}
    \includegraphics[width=0.135\textwidth]{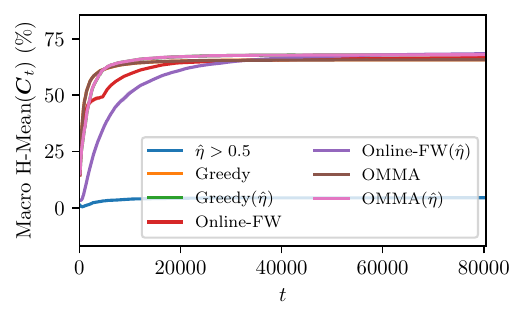}

    \caption{Running comparison of performance for the online algorithms with \emph{incrementally updated CPE}. Averaged over 5 runs, the opaque fill indicate the standard deviation at given iteration $t$.}
    \label{fig:all-online-cpe-plots}
\end{figure*}

\end{document}